\documentclass{article}

\usepackage{PRIMEarxiv}
\usepackage{amssymb}

\usepackage[utf8]{inputenc} 
\usepackage[T1]{fontenc}    
\usepackage{hyperref}       
\usepackage{url}            
\usepackage{booktabs}       
\usepackage{amsfonts, amsmath, amsthm}       
\usepackage{nicefrac}       
\usepackage{microtype}      
\usepackage{lipsum}
\usepackage{fancyhdr}       
\usepackage{graphicx}       
\usepackage[export]{adjustbox}
\graphicspath{{figure/}}     

\pagestyle{fancy}
\thispagestyle{empty}
\rhead{ \textit{ }} 

\fancyhead[LO]{Gradient Descent Fails to Learn}

\newtheorem{theorem}{Theorem}
%
\newtheorem{lemma}[theorem]{Lemma}%
\newtheorem{corollary}[theorem]{Corollary}%

\newtheorem{remark}{Remark}%

\newtheorem{definition}{Definition}%

\DeclareMathOperator*{\E}{\mathbb{E}}
\DeclareMathOperator*{\Var}{\mathrm{Var}}
\DeclareMathOperator*{\Cov}{\mathrm{Cov}}
\DeclareMathOperator*{\Corr}{\mathrm{Corr}}

\DeclareFontFamily{U}{matha}{\hyphenchar\font45}
\DeclareFontShape{U}{matha}{m}{n}{
      <5> <6> <7> <8> <9> <10> gen * matha
      <10.95> matha10 <12> <14.4> <17.28> <20.74> <24.88> matha12
      }{}
\DeclareSymbolFont{matha}{U}{matha}{m}{n}

\DeclareMathSymbol{\Lt}{3}{matha}{"CE}
\DeclareMathSymbol{\Gt}{3}{matha}{"CF}

\title{Gradient Descent Fails to Learn High-frequency Functions and Modular Arithmetic 
}

\author{
  Rustem Takhanov, \,\,\,Maxat Tezekbayev, \,\,\,Artur Pak \\
  Department of Mathematics  \\
  Nazarbayev University \\
  Astana, Kazakhstan\\
  \texttt{\{rustem.takhanov, maxat.tezekbayev, artur.pak\}@nu.edu.kz\,\,\,\,\,\,\,\,\,\,\,\,\,\,\,\,} \\
   \And 
  Arman Bolatov\\
  Department of Computer Science  \\
  Nazarbayev University \\
  Astana, Kazakhstan\\  \texttt{arman.bolatov@nu.edu.kz} \\
   \And 
  Zhenisbek Assylbekov \\
  Department of Mathematical Sciences \\
  Purdue University Fort Wayne \\
  Fort Wayne, IN, USA\\
  \texttt{zassylbe@pfw.edu} \\
}

\begin{document}
\maketitle

\begin{abstract}
Classes of target functions containing a large number of approximately orthogonal elements are known to be hard to learn by the Statistical Query algorithms. Recently this classical fact re-emerged in a theory of gradient-based optimization of neural networks. In the novel framework, the hardness of a class is usually quantified by the variance of the gradient with respect to a random choice of a target function.

A set of functions of the form $x\to ax \bmod p$, where $a$ is taken from ${\mathbb Z}_p$, has attracted some attention from deep learning theorists and cryptographers recently.
This class can be understood as a subset of $p$-periodic functions on ${\mathbb Z}$ and is tightly connected with a class of high-frequency periodic functions on the real line.

We present a mathematical analysis of limitations and challenges associated with using gradient-based learning techniques to train a high-frequency periodic function or modular multiplication from examples. We highlight that the variance of the gradient is negligibly small in both cases when either a frequency or the prime base $p$ is large. This in turn prevents such a learning algorithm from being successful.
\end{abstract}

\keywords{modular multiplication\and high-frequency periodic functions\and hardness of learning\and gradient-based optimization\and barren plateau\and statistical query}

\section{Introduction}
Gradient-based optimization is behind the success of modern deep learning methods. Therefore, the study of its limitations is a central theoretical and experimental topic. Already early studies on the subject recognized the vanishing gradient problem as a key reason for the failure to learn high-quality models. Today it is clear that gradient vanishing can be caused by problems with an architecture of a neural network (which inspired many remedies such as LSTM \cite{10.1162/neco.1997.9.8.1735}, residual networks \cite{he2015deep,hardt2017identity}, specific activation functions~\cite{pmlr-v216-takhanov24a}, pre-training~\cite{TAKHANOV2023109777,TAKHANOV2023103819}, weights reusing~\cite{assylbekov-takhanov-2018-reusing} etc.), as well as by an implicit hardness of a class of target functions~\cite{DBLP:conf/stoc/BlumFJKMR94,DBLP:journals/ml/KlivansS07,KLIVANS20092,NIPS2014_3a077244} (or, a combination of reasons).

Recently, a useful relationship was established between gradient-based methods and the so-called Statistical Query (SQ) model \cite{DBLP:conf/stoc/Kearns93,DBLP:conf/soda/FeldmanGV17}. This led to an understanding that classes of target functions with a large statistical query dimension are unlikely to be trainable by modern gradient-based algorithms such as SGD, Adam, Nesterov, etc. In a practical setting, this reveals itself in a vanishing gradient problem, when for many iterations of an algorithm an objective function fluctuates around a certain value (or, easily overfits). This type of behavior is sometimes called the barren plateau phenomenon.

Mathematically, the barren plateau phenomenon can be described in terms of a high concentration of the gradient around its mean (or, equivalently, a negligibly small variance) when a target function is sampled (usually, uniformly) from a hypothesis space. This demonstrates that the information content of the gradient about a key parameter of the target function is, in fact, very small.
A classical example of a hypothesis space for which this variance is provably small is a set of parity functions \cite{DBLP:conf/icml/Shalev-ShwartzS17}. Other examples with obtained upper bounds on the variance of the gradient with respect to a hypothesis set include parameterized quantum circuit training~\cite{McClean2018}, tensor network training~\cite{PhysRevLett.129.270501}, etc~\cite{pmlr-v222-takhanov24a}.
For the current paper, a key example was analyzed by~\cite{DBLP:journals/jmlr/Shamir18}, who considered a set of functions $\{\psi({\mathbf w}^\top {\mathbf x})\mid {\mathbf w}\in {\mathbb R}^d, \|{\mathbf w}\|=r\}$, where $\psi$ is 1-periodic (i.e. $\psi(x) = \psi(x+1)$), both the dimension $d$ and the norm $r$ grow moderately, and proved that the variance is bounded by $\mathcal{O}\big({\rm exp}(-\Theta(d))+{\rm exp}(-\Theta(r))\big)$.

The problem of training basic modular arithmetical operations from data also attracted some attention, though that problem seemingly belongs to a different context. First, it was noticed that the multiplication (or, the division) modulo some prime number $p$ is a mapping that is hard to train even for small primes (e.g. $p=97$)~\cite{gromov2023grokking}. The training process itself has an atypical structure --- first, an SGD algorithm fluctuates around a certain mean value (or, even overfits), and then abruptly falls into a good minimum. The {\em grokking}~\cite{power2022grokking} is a term that was coined for this type of training process. The time until the second regime starts strongly depends on the initial value (whether it is close enough to the so-called ``Goldilocks'' zone) and for larger primes the second regime does not occur at all.  In this line of research, an interest in training modular multiplication is mainly defined by the role the task plays in the demonstration of different aspects of grokking such as, for example, the so-called ``LU mechanism''~\cite{liu2023omnigrok}. Though grokking was reported in more real-world datasets from computer vision, natural language processing and molecule property prediction~\cite{liu2023omnigrok}, training modular multiplication is still viewed as an archetypical case of the phenomenon.

Training the modular arithmetic operations has found applications in neural cryptanalysis. In such applications, one of the multipliers in modular multiplication is sometimes fixed (and called a secret). In~\cite{cryptoeprint:2022/935}, for the needs of lattice cryptography, experiments on training the modular multiplication were extended to include the dot product in ${\mathbb Z}_p^n$. To be specific, the mentioned paper studied the learnability of the mapping $f: {\mathbb Z}_p^n\to {\mathbb Z}_p$, $f(x_1,\cdots, x_n)=a_1x_1+\cdots +a_n x_n \bmod p$ from a set of examples $\{({\mathbf y}_i,f({\mathbf y}_i))\mid {\mathbf y}_i\in {\mathbb Z}_p^n, 1\leq i\leq N\}$ where $(a_1, \cdots, a_n)$ was uniformly sampled from ${\mathbb Z}_p^n$.

In our paper, we are interested in the case $n=1$ only (though our results can be easily generalized to arbitrary $n$). Given some prime number $p>2$, a basic approach of our paper is to view the mapping $x\to ax \bmod p$ as a $p$-periodic function on a ring of integers ${\mathbb Z}$. Periodic functions on ${\mathbb Z}$ are analogous to periodic functions on ${\mathbb R}$, therefore we study these two kinds of target functions together using similar techniques. Thus, we also study the learnability of the function $f: {\mathbb R}\to {\mathbb R}, f(x)=\psi(ax)$, where $\psi$ is 1-periodic and $a$ is sampled uniformly from the set $\{0,\cdots, A-1\}$. In both cases, our proofs are based on the adaptation of the proof technique for orthogonal functions from~\cite{DBLP:conf/icml/Shalev-ShwartzS17}. We emphasize that their result is not directly applicable to the class of functions that we consider in this paper (high-frequency one-dimensional waves or output bits of the modular multiplication) since these functions are not orthogonal with respect to a uniform distribution over the domain. However, they are \emph{approximately} pairwise orthogonal, and the proof of this fact is the core of our work.

In both cases, we prove the existence of the barren plateau phenomenon, when either $p$ is a large prime (e.g. a typical prime base in cryptography), or $A$ is large in the second case. Our paper is organized in the following way. In Section~\ref{overview} we give an overview of obtained results. Section~\ref{sec:experim} is dedicated to empirical verification of our main claims. Due to its technical simplicity, we first give a bound on the variance of the gradient w.r.t. a hypothesis set of periodic functions on the real line (Section~\ref{high-freq}). Then, in Section~\ref{sec:proofs} we give a more elaborate proof of a bound on the variance of the gradient w.r.t. hypothesis sets $\{x\to ax \bmod p\}_{a\in {\mathbb Z}_p^\ast}$ and $\{x\to [ax \bmod p]_r\}_{a\in {\mathbb Z}_p^\ast}$ where $[x]_r$ is the $r$th bit from the end in a binary representation of $x$. Our proof technique is based on bounding the r.h.s. of the Boas-Bellman inequality \cite{boas1941general,bellman1944almost}.
To establish the connection with the SQ model, in Section~\ref{rhsBB} we show that any upper bound on the latter expression directly leads to a lower bound on the so-called SQ dimension of a hypothesis set. Thus, all our bounds can be also turned to lower bounds on the SQ dimension. This implies that high-frequency functions on the real line and modular multiplication are hard not only for gradient-based methods but also for any SQ algorithm.

\subsection{Related Work}
The main source of inspiration for us are works of~\cite{DBLP:journals/jmlr/Shamir18} and \cite{DBLP:conf/icml/Shalev-ShwartzS17}, which, among other things, shows the intractability of learning a class of orthogonal functions using gradient-based methods. We emphasize that their result is not directly applicable to the class of functions that we consider in this paper (high-frequency one-dimensional waves or output bits of the modular multiplication) since these functions are not orthogonal with respect to a uniform distribution over the domain. However, they are \emph{approximately} pairwise orthogonal, and the proof of this fact is the core of our work (Sections~\ref{high-freq} and~\ref{sec:proofs}). In addition, we adapt the proof of intractability of learning parity functions by \cite{DBLP:conf/icml/Shalev-ShwartzS17} to our target function classes by exploiting the Boas-Bellman inequality. This deserves special attention, as it allows us to extend the failure of gradient-based learning to a wider class of approximately orthogonal functions.

It should be noted that the relationship between orthogonal functions and the hardness of learning is not new and has been established in the context of the statistical query (SQ) learning model of \cite{DBLP:conf/stoc/Kearns93}. Moreover, this relationship was characterized by \cite{DBLP:conf/stoc/BlumFJKMR94} in terms of the statistical query dimension of the function class, which essentially corresponds to the largest possible set of functions in the class which are all \emph{approximately} pairwise orthogonal. The hardness of learning a class of boolean functions in an SQ model is usually proven through a lower bound on the statistical query dimension of the class.
In Section~\ref{rhsBB} of the paper we show how any upper bound on the r.h.s. of the Boas-Bellman inequality can be turned into a lower bound on the statistical query dimension. This result builds a bridge between our bounds based on the Boas-Bellman inequality and classical results on the SQ complexity of function classes.

The closest to our result on the hardness of modular multiplication is a result of Yang~\cite{inproceedingsYang} who proved that for a uniform distribution over ${\mathbb Z}_p^n$  and any function 
$\psi: {\mathbb Z}_p\to {\mathbb R}$ where 
${\mathbb E}_{x\sim {\mathbb Z}_p}[\psi(x)]\in [-\frac{1}{\sqrt{2}}, \frac{1}{\sqrt{2}}]$, an SQ algorithm that learns the concept class 
$\mathcal{C} = \{\psi(\langle {\mathbf a}, {\mathbf x} \rangle)\mid {\mathbf a}\in {\mathbb Z}_p^n\}$ significantly better than random guessing requires a running time of $\mathcal{O}(p^{\frac{n-1}{2}})$. This finding suggests that any effective ``booleanization'' of the mapping 
${\mathbf x}\to \langle {\mathbf a}, {\mathbf x} \rangle$ defines a class of functions that is difficult to learn by an SQ algorithm. In fact, \cite{DBLP:conf/soda/FeldmanGV17} demonstrated that a learning algorithm based on a minimization of the expectation of a random convex function falls into the category of SQ algorithms. In conjunction with Yang's result, this implies
that the ``booleanization'' of the mapping 
${\mathbf x}\to \langle {\mathbf a}, {\mathbf x} \rangle$ is hard to learn by any gradient-based convex optimization algorithm. For the case we are mostly interested in (simple modular multiplication), i.e. for $n=1$, Yang's lower bound is trivial. Also, since most contemporary deep learning algorithms focus on optimizing non-convex objectives, this insight does not directly apply to our case.

\subsection{Notation}
Bold-faced lowercase letters ($\mathbf{x}$) denote vectors, bold-faced uppercase letters ($\mathbf{A}$) denote matrices. Regular lowercase letters ($x$) denote scalars (or set elements), and regular uppercase letters ($X$) denote random variables (or random elements). $\|\cdot\|$ denotes the Euclidean norm: $\|\mathbf{x}\|:=\sqrt{\mathbf{x}^\top\mathbf{x}}$. For $\mathbf{x}\in\mathbb{C}^n$, conjugate transpose is denoted by $\mathbf{x}^\dag$. For any finite set or interval $\mathcal{S}$, sampling $X$ uniformly from $\mathcal{S}$ is denoted by $X\sim\mathcal{S}$. 

Given $f:\,\mathbb{N}\to\mathbb{R}$ and $g:\,\mathbb{N}\to\mathbb{R}_+$, we write $f(x)\in \mathcal{O}(g(x))$  if there exist universal constants $\alpha, \beta\in\mathbb{R}_+$ such that for all natural numbers $x>\beta$ we have $|f(x)|\le\alpha g(x)$. If we have $f(x)\in \mathcal{O}(g(x))$ and $g(x)\in \mathcal{O}(f(x))$, then we write $f(x)\in \Theta(g(x))$. When $f,g:U\to\mathbb{R}_+$ are such that $f(x)\leq Cg(x)$ for some universal constant $C>0$, then we write $f\Lt g$.  We write $f\asymp g$  if $f\Lt g$ and $g\Lt f$. If $n\in {\mathbb R}$ is such that $f(x)\in \mathcal{O} (g(x)|\log^n g(x)|)$ then we simply write $f(x)\in\tilde{\mathcal{O}}(g(x))$. 

$\mathbb{Z}_p$ is the set $\{0,1,\ldots, p-1\}$, equipped with two operations, which work as usual addition and multiplication, except that the results are reduced modulo $p$. 
 $\mathbb{Z}_p^\ast$ denotes the set of elements in $\mathbb{Z}_p$ that are relatively prime to $p$. We are mainly interested in the case when $p$ is a prime number greater than 2. In this case $\mathbb{Z}_p^\ast=\{1,\ldots,p-1\}$. By abuse of notation, we sometimes treat elements of $\mathbb{Z}_p$ (and of $\mathbb{Z}_p^\ast$) as integers in $\mathbb{Z}$. Given two positive integers $a$ and $p$, $a \bmod p$ is the remainder of the Euclidean division of $a$ by $p$, where $a$ is the dividend and $p$ is the divisor. For $a,b\in {\mathbb Z}$, we denote ordinary multiplication simply by $ab$ and $ab \bmod p$ by $a\cdot b$ (the base $p$ is always clear from the context). If $a^{-1}$ is an inverse of $a\in {\mathbb Z}_p^\ast$ in the multiplicative group $({\mathbb Z}_p^\ast, \cdot)$, then $b\cdot a^{-1}$ is denoted by $b/a$. Also, below $[\cdot]$ denotes the Iverson bracket, i.e. $[{\rm True}]=1$, $[{\rm False}]=0$. For arbitrary $x$ and $y>0$, let us denote the interval $[x-y,x+y]$ by $x\pm y$.

\section{Main results}\label{overview} 
Let $\psi: {\mathbb R}\to {\mathbb R}$ be a 1-periodic function on the real line that has a bounded variation. The latter condition captures all piecewise Lipschitz functions.
Such a function  induces a family of one-dimensional waves
\begin{equation}
h_a (x) = \psi(ax),
\end{equation}
parametererized by the frequency $a\in \{0,\cdots, A-1\}$ where $A$ is an integer. Let us denote $\mathcal{H}_A = \{h_a\}_{a\in {\mathbb Z}_A}$.

Consider the following supervised learning task. Let us assume that $a$ was generated uniformly from the set $\{0,\cdots, A-1\}$ and our goal is to approximate $h_a(x)$ in terms of $L_2([0,1])$-metrics. That is, we solve the task 
\begin{equation}\label{main-task}
\min\limits_{{\mathbf w}\in \mathcal{O}}\|h_a (x) -\eta({\mathbf w},x)\|^2_{L_2([0,1])}
\end{equation}
where $\eta({\mathbf w},x)$ is some family of functions, parameterized by the weight ${\mathbf w}\in \mathcal{O}\subseteq {\mathbb R}^s$, e.g. given as a neural network. 

For the task~\eqref{main-task} to be solvable by some gradient-based method (e.g., SGD, Nesterov, Adam), the gradient of the objective function at a given point ${\mathbf w}$ should have enough information about the random variable $a$. Following~\cite{DBLP:journals/jmlr/Shamir18}, we measure the information content of the gradient about $a$ by
\begin{equation}
{\rm Var}_{a\sim {\mathbb Z}_A}\big[\nabla_{\mathbf w}\|h_a (x) -\eta({\mathbf w},x)\|^2_{L_2([0,1])}\big],
\end{equation}
which is called the variance of the gradient w.r.t. to the hypothesis space $\mathcal{H}_A$ and is denoted by ${\rm Var}(\mathcal{H}_A, {\mathbf w})$. The smaller ${\rm Var}(\mathcal{H}_A, {\mathbf w})$ is the smaller the information content of the gradient. 

Our key bound reads as 
\begin{equation}\label{main-res}
{\rm Var}(\mathcal{H}_A, {\mathbf w}) \in \tilde{\mathcal{O}}(\frac{1}{\sqrt{A}}),
\end{equation}
where $\tilde{\mathcal{O}}$ hides a factor that depends polynomially on the logarithm of $A$ and factors that are bounded by derivatives of the neural network w.r.t. its weights and the variation of $\psi$. The equation~\eqref{main-res} demonstrates that the gradient becomes non-informative as the highest frequency $A$ grows. Our proof is given in Section~\ref{high-freq} and it is based on a certain combination of the Boas-Bellman inequality with some standard bounds (namely, the Erdős-Turán-Koksma inequality) from the ergodic theory of translations on a two-dimensional torus.

\begin{remark} The learnability by gradient-based methods of the hypothesis space $$\mathcal{H}_r=\{h_{{\mathbf w}}: {\mathbb R}^d\to {\mathbb R}\mid \|{\mathbf w}\|=r, h_{{\mathbf w}}({\mathbf x}) = \psi({\mathbf w}^T{\mathbf x})\}$$ was a subject in~\cite{DBLP:journals/jmlr/Shamir18}, who showed ${\rm Var}(\mathcal{H}_r,w)\in \mathcal{O}({\rm exp}(-\Theta(d))+{\rm exp}(-\Theta(r)))$. This result implies the failure of gradient-based learning when both $d$ and $r$ grow moderately. Our bound shows that the phenomenon of poor learnability of waves maintains in the smallest possible dimension ($d=1$) under the condition that the frequency grows exponentially.
\end{remark}

We proceed to study the observed phenomenon and consider a discrete version of high-frequency waves, now on ${\mathbb Z}$ (not ${\mathbb R}$). Now, let $\psi: {\mathbb Z}\to {\mathbb R}$ be a $p$-periodic function on a ring of integers, i.e. $\psi(x+p)=\psi(x)$, where $p>2$ is prime. Let us introduce the hypothesis
\begin{equation}
\tilde{h}_a: {\mathbb Z}\to {\mathbb R}, \tilde{h}_a (x) = \psi(ax)
\end{equation}
parametererized by the frequency $a\in {\mathbb Z}^\ast_p$. Let us denote $\tilde{\mathcal{H}}_p = \{\tilde{h}_a\}_{a\in {\mathbb Z}^\ast_p}$. Note that any function from $\tilde{\mathcal{H}}_p$ is defined by its restriction on the set ${\mathbb Z}_p$, therefore, we alternatively can study the hypothesis set $\mathring{\mathcal{H}}_p = \{\mathring{h}_a:{\mathbb Z}_p\to {\mathbb R}\mid \mathring{h}_a(x) = t(a\cdot x),a\in {\mathbb Z}_p^\ast \}$ where $t$ is a restriction of $\psi$ on ${\mathbb Z}_p$.

Our interest towards the hypothesis set $\mathring{\mathcal{H}}_p$ is motivated by specific cases when (a) $t(x)=\frac{(a\cdot x)-\frac{p}{2}}{\sqrt{\frac{p^2}{12}-\frac{p}{6}}}[x\ne 0]$ or (b) $t(x)=[x]_r$ where $[x]_r$ is the $r$th bit from the end in a binary representation of $x$. The case (a) corresponds to the standardized hypothesis set $\{x\to a\cdot x\}_{a\in {\mathbb Z}_p^\ast}$ and the case (b) to the set $\{x\to [a\cdot x]_r\}_{a\in {\mathbb Z}_p^\ast}$. Let us define the variance w.r.t. $\mathring{\mathcal{H}}_p$ by
\begin{equation}\label{var-zp}
{\rm Var}(\mathring{\mathcal{H}}_p, {\mathbf w}) = {\rm Var}_{a\sim {\mathbb Z}^\ast_p}\big[\nabla_{\mathbf w} {\mathbb E}_{x\sim {\mathbb Z}^\ast_p}[L(\mathring{h}_a (x), \eta({\mathbf w},x))]\big],
\end{equation}
where $L$ is a square loss function for the case (a) and $L(y,y')=l(yy')$, where $
l$ is some 1-Lipschitz function, for the case (b). 

In Sections~\ref{sec:proofs} and~\ref{applications} we prove that in both cases we have
\begin{equation}\label{discrete}
{\rm Var}(\mathring{\mathcal{H}}_p, {\mathbf w}) \in \tilde{\mathcal{O}}(\frac{1}{\sqrt{p}}),
\end{equation}
where, again, $\tilde{\mathcal{O}}$ hides factors that depend polynomially on the logarithm of $p$ and on derivatives of our model $\eta({\mathbf w},x)$ w.r.t. its weights (Theorems~\ref{standard} and~\ref{r-var}).

\begin{remark}
From the first site, a proof of the bound~\eqref{discrete} is substantially different from the proof of~\eqref{main-res}. In fact, there is an analogy between the two. Both proofs are based on the application of the Boas-Bellman inequality. When we prove~\eqref{discrete}, we first obtain a general bound on ${\rm Var}(\mathring{\mathcal{H}}_p, {\mathbf w})$ in terms of properties of the Discrete Fourier Transform of $t$, and then apply the general bound to the cases (a) and (b) of modular multiplication.  As was already mentioned, a proof of~\eqref{main-res} is based on the Erdős-Turán-Koksma inequality, i.e. on the statement whose canonical proof is itself based on Fourier analysis. We believe that both bounds are edges of a more general theory (which, probably, can be formulated for periodic functions on general abelian groups), but this falls out of the scope of our paper.
\end{remark}

\begin{remark}\label{rem:sgd_fails} From our bounds on the variances, using Theorem 10 of~\cite{DBLP:journals/jmlr/Shamir18}, one can derive the following statement: an output of any algorithm with access to an approximate gradient oracle can distinguish between different values of the parameter $a$ of the target function $h_a$ (or, of $\mathring{h}_a$) only if the number of its iterations is substantially greater than $\tilde{\Theta}(A^{1/6})$ (or, $\tilde{\Theta}(p^{1/6})$). This statement implies that for large $A$ (or $p$), a gradient-based method fails for the task of training the set of targets $\mathcal{H}_A$ (or $\mathring{\mathcal{H}}_p$) if the hypothesis $h_a$ (or, $\mathring{h}_a$) depends on the parameter $a$ in a highly sensitive way (which is usually the case).
In the last section of the paper we also obtain lower bounds on the statistical query dimension of our classes, and they grow at least as $\tilde{\Theta}(A^{1/3})$ (or, $\tilde{\Theta}(p^{1/3})$).
\end{remark}
\section{Empirical Verification}\label{sec:experim}
\paragraph{Failure to learn high-frequency waves on ${\mathbb R}$.} 
We verified the hardness of the task~\eqref{main-task} for the stochastic gradient descent algorithm directly by minimizing the MSE loss that approximates the square of the $L_2([0,1])$-distance.
We chose $\psi(x)=\{x\}$ where $\{x\}$ is the fractional part of $x\in {\mathbb R}$. 

The training data consisted of pairs of values \((x, \{ax\})\), where \(x\) is uniformly sampled from $[-100, 100]$. The training dataset contained \(1000\) pairs of \((x, \{ax\})\) for each selected \(a\). We trained a 3-layer neural network with dimensions of layers $[1,64,128, 1]$ and the ReLU activation function. For each $A$, the training was repeated 5 times, and each time \(a\) was picked uniformly at random from $\mathbb{Z}_A$. We used Adam optimizer with a learning rate of $0.001$, batch size $1000$, MSE loss, and we trained for 100 epochs. The results for different values of $A$ are shown in Figure~\ref{fig:high_freq_verif}. For each $A$, an average MSE loss (across 5 runs) as a function of an epoch is depicted.

Notice that for a large $a\in {\mathbb N}$ and a random $X$ uniformly distributed on $[-100,100]$, the random variable $h_a(X)=\psi(aX)$ is distributed approximately uniformly on $[0,1]$. If we consider the mean of the uniform distribution on $[0,1]$ as a baseline approximation of $h_a(x)$, then the corresponding squared $L_2([0,1])$-distance of such an approximation equals the variance of this distribution, i.e. $\frac{1}{12}$. As Figure~\ref{fig:high_freq_verif} shows, our model  is unable to achieve the MSE loss smaller than the trivial $\frac{1}{12}$.

\begin{figure}
    \centering
    \includegraphics[width=.5\textwidth]{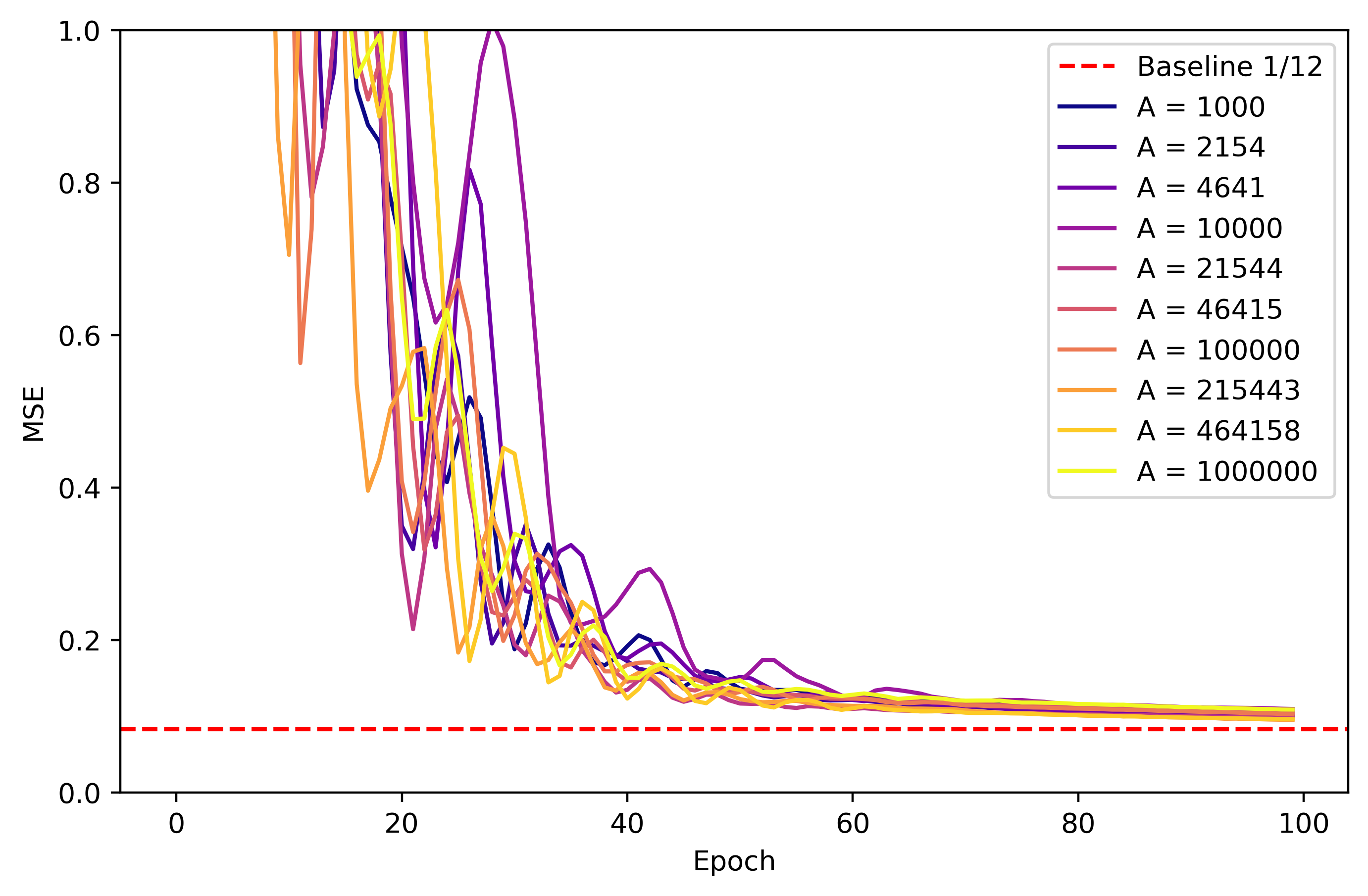}
    \caption{Learning the high-frequency wave on ${\mathbb R}$ with a 3-layer dense network with ReLU activation. For each $A$, the coefficient $a$ was sampled randomly 5 times from ${\mathbb Z}_A$, and an average MSE loss as a function of an epoch is depicted. The horizontal asymptote corresponds to ${\rm MSE}=\frac{1}{12}$.}
    \label{fig:high_freq_verif}    
\end{figure}

\paragraph{Concentration of the Gradient.} Let us verify empirically the statement~\eqref{discrete} for the last bit of modular multiplication. Let $\eta(\mathbf{w}, x)$ be a neural network\footnote{3-layer dense neural network with 1000 neurons on each hidden layer, sigmoid activation, binary cross-entropy loss.} that we may want to train to learn the mapping $x \mapsto (-1)^{a\cdot x}$, $x\in\mathbb{Z}_p^\ast$, 
where $\mathbf{w}\in\mathbb{R}^s$ are all parameters of the neural network. Since $(-1)^{a\cdot x}=(-1)^{[a\cdot x]_1}$, this mapping computes the last bit of modular multiplication $a\cdot x$. 
Let $\nabla L_a(\mathbf{w})$ be the gradient of the binary cross-entropy loss function at $\mathbf{w}$. We sample $\mathbf{w}_1,\ldots,\mathbf{w}_{20}$ from $\mathbb{R}^s$ using the default PyTorch initializer\footnote{For each dense layer of shape $d_\text{in}\times d_\text{out}$, PyTorch initializes its parameters uniformly at random from the interval $\left[-1/{\sqrt{d_\text{in}}},1/{\sqrt{d_\text{in}}}\right]$, where $d_\text{in}$ is the size of the input, and $d_\text{out}$ is the size of the output.}, and for each $\mathbf{w}_i$, we compute 
\begin{align}
    v(\mathbf{w}_i)&=\E_{A\sim\mathbb{Z}_p^\ast}\left\|\nabla L_A(\mathbf{w}_i)-\E_{A'\sim\mathbb{Z}_p^\ast}\nabla L_{A'}(\mathbf{w}_i)\right\|^2,\\
    g(\mathbf{w}_i)&=\E_{X\sim\mathbb{Z}_p^\ast}\left\|\frac{\partial}{\partial\mathbf{w}}\eta(\mathbf{w}, X)\right\|^2.\label{eq:g_w}
\end{align}
According to Theorem~\ref{r-var}, the values $\frac{v(\mathbf{w}_i)}{g(\mathbf{w}_i)}$ should be of order $\tilde{\mathcal{O}}\left(\frac{1}{\sqrt{p}}\right)$. Thus, we plot $\E_{i\sim\{1,\ldots,20\}}\left[\frac{v(\mathbf{w}_i)}{g(\mathbf{w}_i)}\cdot\sqrt{p}\right]$ against $p$ in Figure~\ref{fig:thm1_verif}.
\begin{figure}
\begin{minipage}[t]{.48\textwidth}
    \centering
    \includegraphics[width=.9\textwidth]{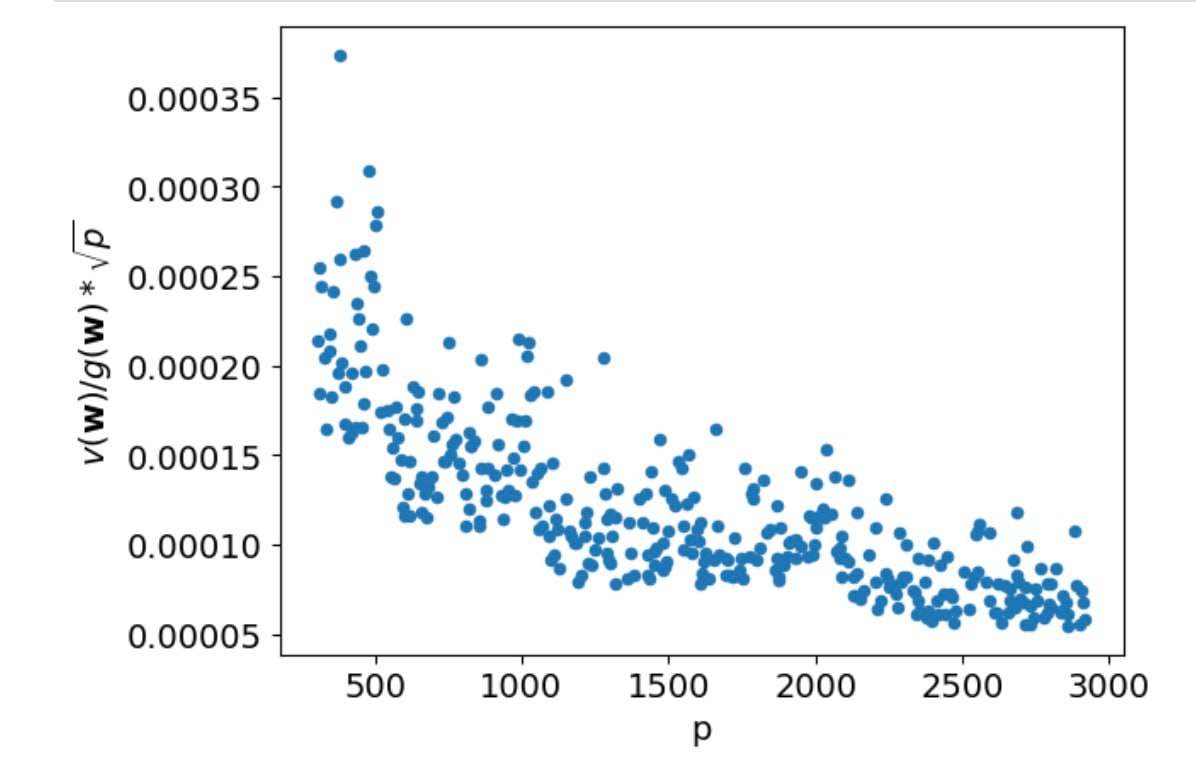}
    \caption{Verifying the statement of Theorem~\ref{r-var}. For prime numbers $p$ in $[300,3000]$, we plot the left-hand side of \eqref{discrete} divided by the average squared norm of the neural network's gradient \eqref{eq:g_w} and multiplied by $\sqrt{p}$. The resulting curve is of order $\tilde{\mathcal{O}}(1)$. Moreover, it even decreases.}
    \label{fig:thm1_verif}    
\end{minipage}\hfill\begin{minipage}[t]{.48\textwidth}
    \centering
    \includegraphics[width=.9\textwidth]{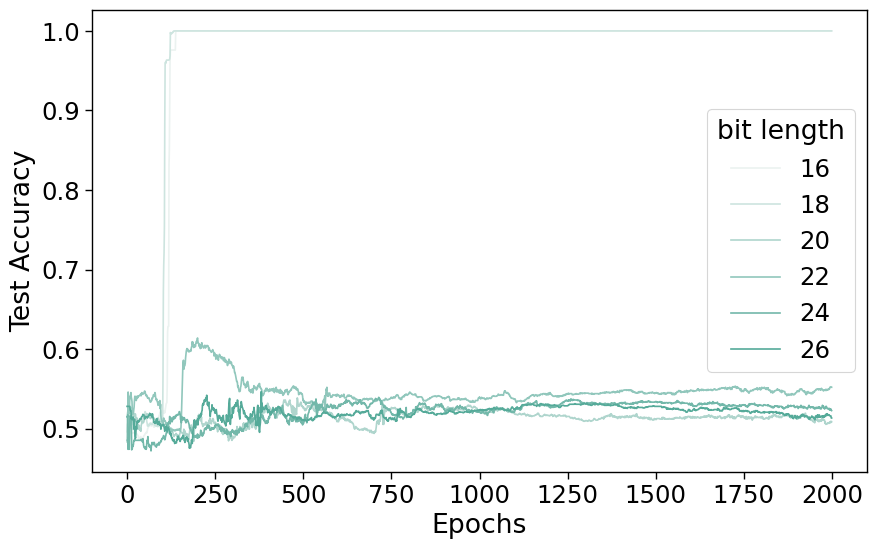}
    \caption{Learning the parity bit of multiplication modulo $p$ with a 3-layer width-1000 dense network. Darker shades correspond to longer bit lengths. For each bitlength $n$, $p$ is chosen randomly from the prime numbers in the interval $[2^{n-1}, 2^n-1]$.}
    \label{fig:sgd_failure}    
\end{minipage}
\end{figure}
As we can see, this expression is bounded as $p$ grows, which confirms the statement of the theorem. In fact, Figure~\ref{fig:one_over_p} suggests that our upperbound \eqref{discrete} can be improved, i.e. the variance of the gradient decays like $\tilde{\mathcal{O}}(\frac{1}{p})$, not like $\tilde{\mathcal{O}}(\frac{1}{\sqrt{p}})$ (at a random point ${\mathbf w}$). 

\begin{figure}
    \centering
    \includegraphics[width=.6\textwidth]{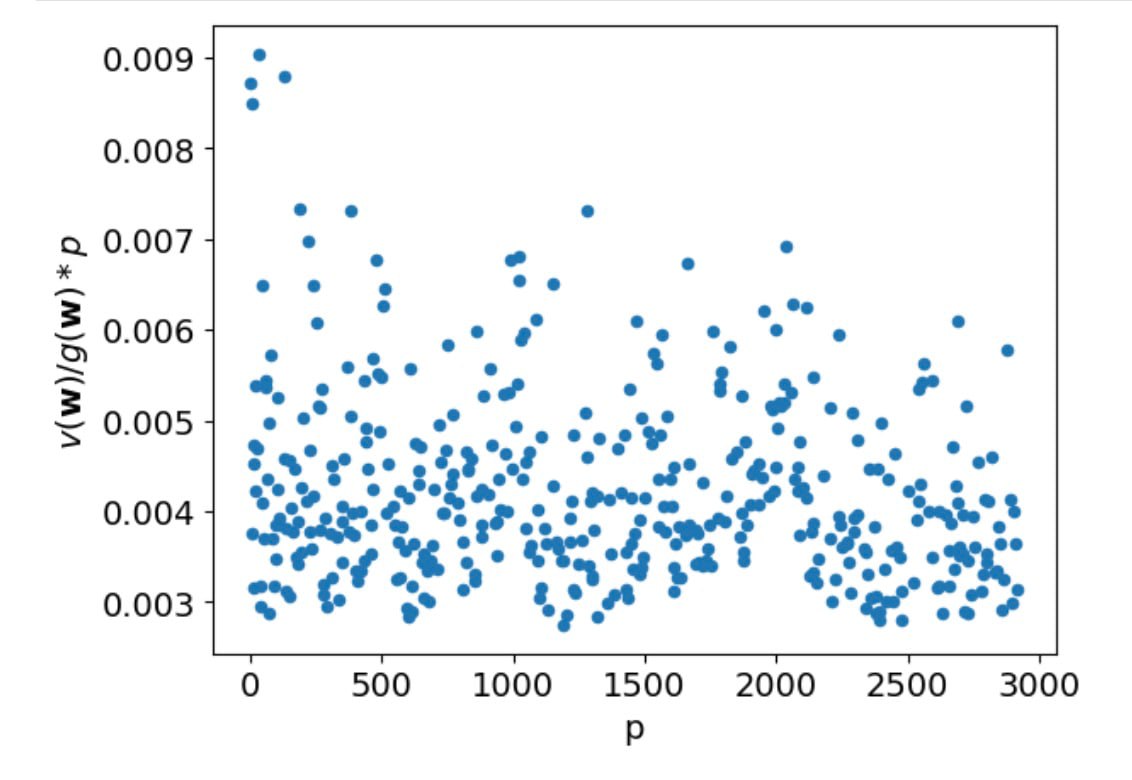}
 \caption{$\E_{i\sim\{1,\ldots,20\}}\left[\frac{v(\mathbf{w}_i)}{g(\mathbf{w}_i)}\cdot p\right]$ against $p$ for modular multiplication.}
 \label{fig:one_over_p}   
\end{figure}
\paragraph{Failure to learn modular multiplication.} According to Remark~\ref{rem:sgd_fails}, any gradient-based method most likely will fail to learn the parity bit of modular multiplication. To test this claim, we generated a labeled sample 
$$
(x_1, (-1)^{a\cdot x_1}), \ldots, (x_m, (-1)^{a\cdot x_m})
$$
where $x_1, \ldots, x_m$, and $a$ are taken randomly from $\mathbb{Z}_p^\ast$. Using this sample, we trained a dense 3-layer neural network with 1000 neurons in each hidden layer. We used Adam with a learning rate of 0.001 (default), $m=5000$, a 70/30 split between training and test sets, batch size 100, and we trained for 2000 epochs. The results for different bitlengths $n$ are shown in Figure~\ref{fig:sgd_failure}. The prime base $p$ for each bitlength $n$ was taken randomly from the prime numbers in the interval $[2^{n-1}, 2^n-1]$. We can see that as the bit length increases, the chances of successful learning decrease, as predicted by our theory.

\subsection{Additional Experiments}
Here we present the results of experiments on the learnability of modular multiplication \emph{itself} and of \emph{all} its bits. 
\paragraph{Low correlation of multiplication by different numbers.} We computed the mean squared  covariance 
\begin{equation}
    \E_{A,B\sim\mathbb{Z}_p^\ast}\left(\Cov_{X\sim\mathbb{Z}_p^\ast}[A\cdot X, B\cdot X]\right)^2\label{eq:mscov}
\end{equation}
for prime numbers in the interval $[3, 500]$. The results are shown in Figure~\ref{fig:cov}.
\begin{figure}
    \centering
    \includegraphics[width=.5\textwidth]{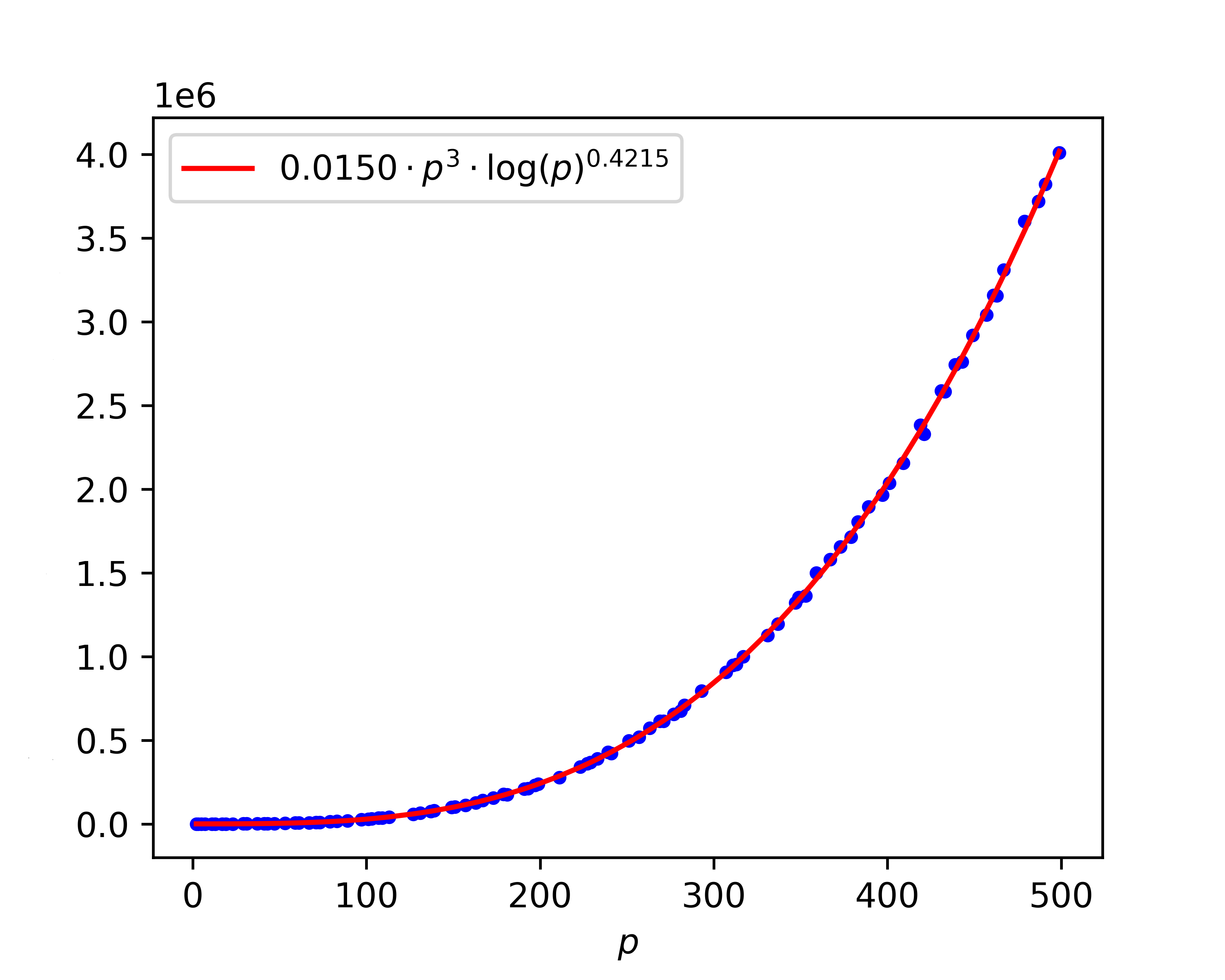}
    \caption{Mean squared covariance between two multiplications, $a\cdot X$ and $b\cdot X$, when $X$ is a random variable uniformly distributed on $\mathbb{Z}_p^\ast$.}
    \label{fig:cov}
\end{figure}
As we can see, the expression \eqref{eq:mscov} fits the curve $p\mapsto 0.015\cdot p^3\cdot(\log p)^{0.42}$ well. 
In Corollary~\ref{add-exper} of subsection~\ref{whole-out} we prove a slightly weaker bound $ \E_{A,B\sim\mathbb{Z}_p^\ast}\left(\Cov_{X\sim\mathbb{Z}_p^\ast}[A\cdot X, B\cdot X]\right)^2\Lt p^3 \log^2 p$. 
 This suggests that $\Cov_{X\sim\mathbb{Z}_p^\ast}[a\cdot X,b\cdot X]\in \tilde{\mathcal{O}}(p^{3/2})$ on average over $a,b\in\mathbb{Z}_p^\ast$. Since the variance of the multiplication by $a$ modulo $p$ is
$$
\Var_{X\sim\mathbb{Z}_p^\ast}[a\cdot X]=\frac{1}{p-1}\sum_{k=1}^{p-1}k^2-\left(\frac{1}{p-1}\sum_{k=1}^{p-1}k\right)^2=\frac{p^2}{12}-\frac{p}{6}\in \mathcal{O}(p^2)
$$
we have that the average correlation is
\begin{equation}
\Corr_{X\sim\mathbb{Z}_p^\ast}[a\cdot X,b\cdot X]\in \tilde{\mathcal{O}}\left(\frac{1}{\sqrt{p}}\right).\label{eq:cor_bound}
\end{equation}
Thus, using this estimate in the Boas-Bellman inequality for the class of ``standardized'' modular multiplication functions $\{f_a(x)\mid a\in\mathbb{Z}_p^\ast\}$, where 
$$
f_a(x)=\frac{a\cdot x-\frac{p}2}{\sqrt{\frac{p^2}{12}-\frac{p}{6}}},\qquad x\in\mathbb{Z}_p^\ast,
$$
one can show that this class is also hard to learn by gradient-based methods. A rigorous proof of the bound \eqref{eq:cor_bound}, together with the concentration of the gradient (Theorem~\ref{standard}), can be found in subsection~\ref{whole-out}. 

\paragraph{Failure to learn all bits of modular multiplication.} Here we follow the experimental setup for the target $x\to [a\cdot x]_1$ with the difference that the output of the neural network is not only the parity bit, but all the bits of modular multiplication. As a loss function, we use the sum of the cross-entropies for each bit. The results for two different bit lengths are shown in Figure~\ref{fig:all_bits}.
\begin{figure}
    \centering
    \,\qquad\includegraphics[width=.45\textwidth, valign=t]{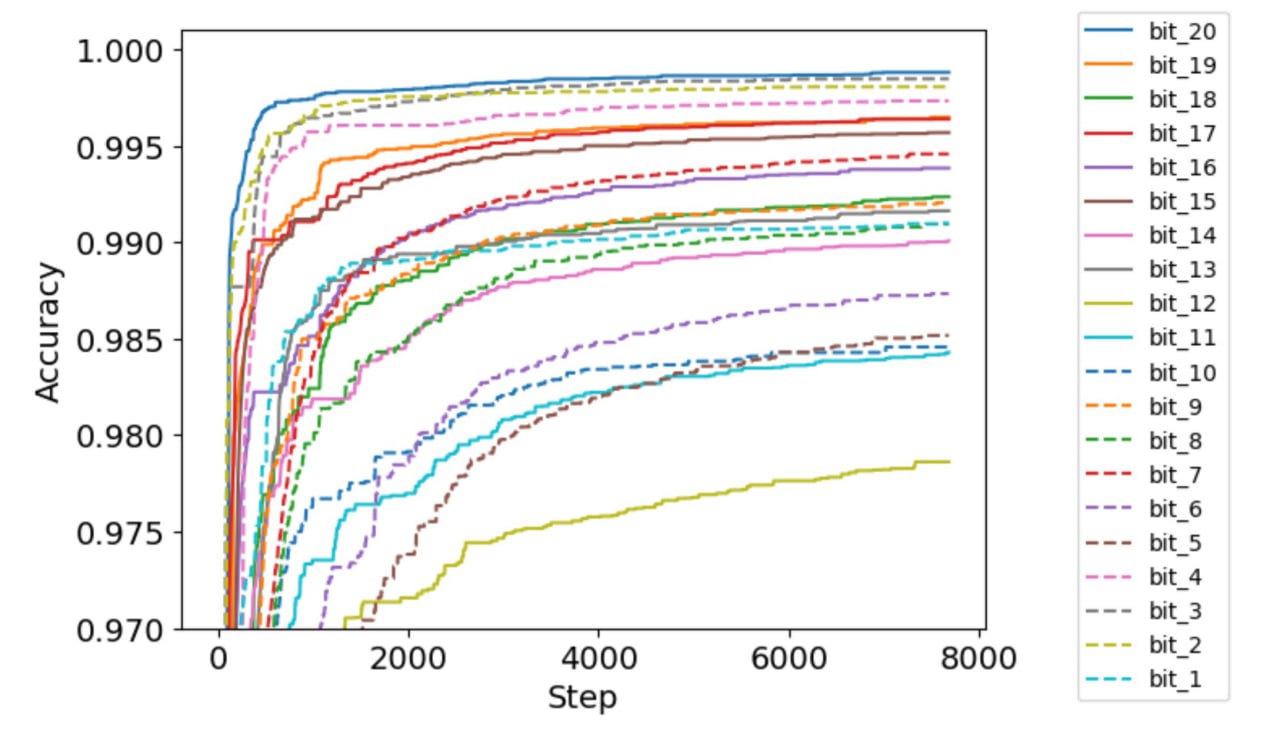}\hfill\includegraphics[width=.4\textwidth, valign=t]{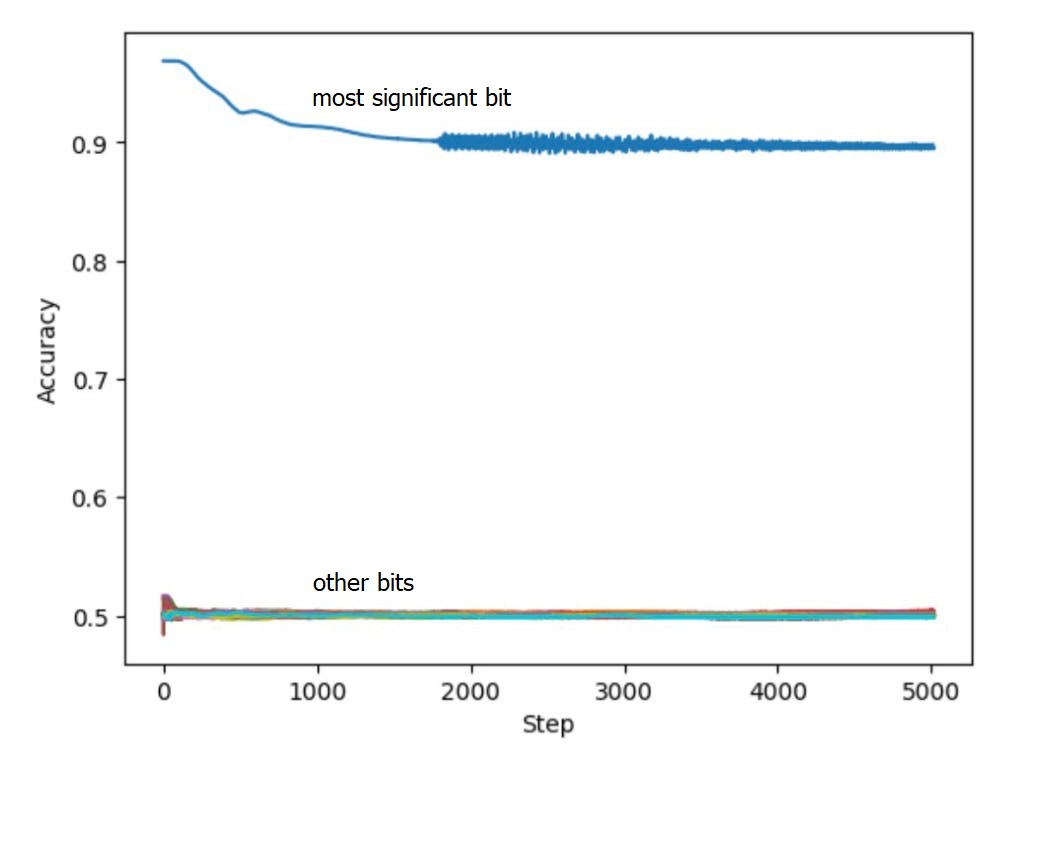}\qquad\,
    \caption{Test Accuracies when learning \emph{all} bits of multiplication modulo $p$ with a single neural network. Bitlengths of $p$: 20 (left) and 40 (right).  Accuracies are higher for larger $r$ because the classes are unbalanced.}
    \label{fig:all_bits}
\end{figure}
As in the case of one bit, we see that for a longer bit length, the gradient method is not able to learn all the bits of modular multiplication.

\section{High-frequency waves on the real line}\label{high-freq}
Let $L_2([0,1])$ be the space square-integrable functions on $[0,1]$ with the inner product $\langle f,g\rangle = \int_0^1 f(x)g(x)dx$. This inner product induces the norm $\|f\|_{L_2([0,1])} = \sqrt{\int_0^1 f(x)^2dx}$. For a function with a bounded variation $f: {\mathbb R}\to {\mathbb R}$, we denote $$\|f\|_{BV}\triangleq\sup_{\substack{N\in {\mathbb N} \\ 0= x_0\leq ...\leq x_N= 1}}\sum_{i=1}^N |f(x_i)-f(x_{i-1})|.$$ 
Let $\psi: {\mathbb R}\to {\mathbb R}$ be a periodic function with period 1, i.e. $\psi(x+1)=\psi(x)$, and mean 0, i.e. $\int_0^1\psi(x)dx=0$. Also, we assume that $\psi$ has a bounded variation, i.e. $\|\psi\|_{BV}<+\infty$.

For $A\in {\mathbb N}$, let us introduce the hypothesis space
\begin{equation*}
\begin{split}
\mathcal{H}_A = \{h_a: {\mathbb R}\to {\mathbb R}\mid h_a(x) = \psi(ax), a\in  {\mathbb Z}_A\}.
\end{split}
\end{equation*}
Let $\eta:\boldsymbol{\Omega}\times {\mathbb R}\to {\mathbb R}$ be a function, where $\boldsymbol{\Omega}$ is an open subset of ${\mathbb R}^s$. We make only the most general assumptions on the form of $\eta$ such as continuity and the existence of a partial derivative w.r.t. the first variable in almost all points, i.e. $\nabla_{\mathbf w} \eta({\mathbf w}, x)$.
The variance of the gradient w.r.t. $\mathcal{H}_A$ is defined as
\begin{equation*}
\begin{split}
{\rm Var}(\mathcal{H}_A, {\mathbf w}) \triangleq {\rm Var}_{a\sim {\mathbb Z}_A}\big[\nabla_{\mathbf w}\| \eta({\mathbf w}, x)-h_a(x)\|^2_{L_2([0,1])}\big], {\mathbf w}\in \boldsymbol{\Omega}.
\end{split}
\end{equation*}
We define that the variance of a random vector is a sum of the variances of components.
Let us denote $\frac{\partial \eta({\mathbf w}, x)}{\partial w_i}\in L_2([0,1])$ by $g_i({\mathbf w})$ and set $g({\mathbf w})=[g_1({\mathbf w}), \cdots, g_s({\mathbf w})]^\top$, i.e. $g$ is the gradient. Note that $g({\mathbf w})\in L^s_2([0,1])$ and we define $\|g({\mathbf w})\|^2_{L^s_2([0,1])}\triangleq \sum_{i=1}^s \|g_i({\mathbf w})\|^2_{L_2([0,1])}$. The following theorem shows that for a large $A$, the variance of the gradient w.r.t. $\mathcal{H}_A$ is very small. 
\begin{theorem}\label{ergodic} There exist  universal constants $C, A_0>0$ such that
\begin{equation*}
\begin{split}
{\rm Var}(\mathcal{H}_A, {\mathbf w})\leq C\|g({\mathbf w})\|^2_{L^s_2([0,1])} \|\psi\|^2_{BV} A^{-1/2}(\log A+1)^{5/2},
\end{split}
\end{equation*}
for any $A>A_0$.
\end{theorem}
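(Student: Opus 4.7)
The plan is to expand the $L_2$ gradient, isolate the only $a$-dependent piece, and reduce the variance to a sum $\sum_a \langle h_a, g_i\rangle^2$; then apply the Boas-Bellman inequality to reduce this sum to bounding the diagonal norms $\|h_a\|^2$ (trivial by periodicity) and the squared off-diagonal Gram entries $|\langle h_a, h_b\rangle|^2$; and finally bound $\langle h_a, h_b\rangle$ sharply in $(a,b)$ using Fourier expansion on $\mathbb{T}$, which is equivalent to an Erdős-Turán-Koksma-type estimate on the 2-torus.

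Since $\nabla_{\mathbf w}\|\eta({\mathbf w},\cdot)-h_a\|_{L_2([0,1])}^2 = 2\langle \eta({\mathbf w},\cdot),g({\mathbf w})\rangle - 2\langle h_a, g({\mathbf w})\rangle$, only the second term depends on $a$ and
\[
{\rm Var}(\mathcal{H}_A,{\mathbf w}) = 4\sum_{i=1}^{s}{\rm Var}_{a\sim\mathbb{Z}_A}\langle h_a, g_i({\mathbf w})\rangle \;\leq\; \frac{4}{A}\sum_{i=1}^{s}\sum_{a=0}^{A-1}\langle h_a, g_i({\mathbf w})\rangle^2.
\]
The $a=0$ term contributes at most $\psi(0)^2\|g_i\|^2/A \leq \|\psi\|_{BV}^2\|g_i\|^2/A$ and is absorbed by the final bound. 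For $1\leq a\leq A-1$, I apply the Boas-Bellman inequality to the family $\{h_a\}\subset L_2([0,1])$:
\[
\sum_{a=1}^{A-1}\langle g_i, h_a\rangle^2 \;\leq\; \|g_i\|^2\Bigl(\max_a\|h_a\|^2 + \bigl(\textstyle\sum_{a\neq b}|\langle h_a, h_b\rangle|^2\bigr)^{1/2}\Bigr).
\]
The diagonal is immediate: the change of variables $u=ax$ combined with periodicity gives $\|h_a\|_{L_2}^2 = \|\psi\|_{L_2}^2 \leq \|\psi\|_{BV}^2$, using that a mean-zero BV function is bounded by its total variation.

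The heart of the argument, and the main obstacle, is a sharp bound on $\langle h_a, h_b\rangle = \int_0^1 \psi(ax)\psi(bx)\,dx$ with the correct $\gcd(a,b)$ dependence. Expanding $\psi$ in a Fourier series and using the classical BV bound $|\hat\psi(n)|\leq\|\psi\|_{BV}/(2\pi|n|)$, only the lattice pairs $(n,m)$ with $na+mb=0$ survive integration; writing $d=\gcd(a,b)$, $a=da'$, $b=db'$ with $\gcd(a',b')=1$, those are parametrized by $k\neq 0$ as $(n,m)=(-kb',ka')$, whence $\langle h_a, h_b\rangle = \sum_{k\neq 0}\hat\psi(-kb')\hat\psi(ka')$ and $|\langle h_a, h_b\rangle|\lesssim \|\psi\|_{BV}^2\gcd(a,b)^2/(ab)$. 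The same estimate, with logarithmic overhead, is what the Erdős-Turán-Koksma inequality yields applied to the closed geodesic $(ax,bx)\bmod 1$ on $\mathbb{T}^2$, which is the route flagged in the introduction and accounts for the $(\log A+1)^{5/2}$ polylogarithmic slack. Retaining the $\gcd(a,b)^2$ factor is essential: regrouping by $d=\gcd(a,b)$,
\[
\sum_{1\leq a\neq b\leq A-1}|\langle h_a, h_b\rangle|^2 \;\lesssim\; \|\psi\|_{BV}^4\sum_{d=1}^{A-1}\sum_{\substack{a',b'\geq 1\\ \gcd(a',b')=1}}\frac{1}{(a'b')^2} \;\lesssim\; \|\psi\|_{BV}^4\,A,
\]
since the inner double sum is uniformly bounded by $\zeta(2)^2$. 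Taking square roots, substituting into Boas-Bellman, dividing by $A$, and summing over $i=1,\ldots,s$ yields ${\rm Var}(\mathcal{H}_A,{\mathbf w})\lesssim \|g({\mathbf w})\|_{L_2^s([0,1])}^2\|\psi\|_{BV}^2\,A^{-1/2}$, which implies the theorem.
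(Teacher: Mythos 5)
Your proof is correct and in fact gives a sharper bound than the theorem states: you obtain ${\rm Var}(\mathcal{H}_A,{\mathbf w})\lesssim \|g\|_{L_2^s}^2\|\psi\|_{BV}^2 A^{-1/2}$ with no polylogarithmic loss. Your route diverges from the paper's after the Boas-Bellman step. The paper represents $\sum_{a,b}\langle h_a,h_b\rangle^2$ as $\|Q\|_{L_2([0,1]^2)}^2$ with $Q(x,y)=\sum_a\psi(ax)\psi(ay)$, views each value $Q(x,y)$ as an $A$-point Birkhoff sum along the orbit of the torus translation $T_{(x,y)}$, bounds it \emph{pointwise} via Koksma--Hlawka combined with Erd\H{o}s--Tur\'an--Koksma, and then does a fairly involved $L^2$ estimate of the resulting Fourier-theta sums (Lemmas~\ref{sin-over-sin}--\ref{two}); the $(\log A+1)^{5/2}$ factor comes from the choice of truncation $H$ in the discrepancy bound. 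You instead bound each entry $\langle h_a,h_b\rangle$ directly: expanding $\psi$ in Fourier series and using Parseval, the only frequencies that survive lie on the lattice $na+mb=0$, parametrized as $(n,m)=(kb',-ka')$ with $a'=a/\gcd(a,b)$, $b'=b/\gcd(a,b)$, and the decay $|\hat\psi(n)|\leq\|\psi\|_{BV}/(2\pi|n|)$ then gives $|\langle h_a,h_b\rangle|\lesssim\|\psi\|_{BV}^2\gcd(a,b)^2/(ab)$. Regrouping the off-diagonal Gram sum by $d=\gcd(a,b)$ collapses the $d$-dependence exactly and leaves a uniformly convergent inner sum, yielding $\sum_{a\neq b}\langle h_a,h_b\rangle^2\lesssim\|\psi\|_{BV}^4 A$ with no slack. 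This is more elementary (no discrepancy theory, no ergodic theorem), exploits the arithmetic $\gcd$ structure that the paper's pointwise discrepancy bound throws away, and strictly improves the constant of the theorem; the paper's approach is arguably more malleable if one wanted to replace the orbit $\{ax \bmod 1\}$ with a less arithmetically structured sequence, but for this particular family your argument is cleaner and stronger. All supporting claims check out: the $a=0$ term is handled separately and is negligible since $h_0\equiv\psi(0)$ and $\psi$ has mean zero; the diagonal $\|h_a\|_{L_2}^2=\|\psi\|_{L_2}^2\leq\|\psi\|_{BV}^2$ follows from periodicity and the fact that a mean-zero BV function has $\inf\psi\leq 0\leq\sup\psi$ with oscillation at most $\|\psi\|_{BV}$; and the Parseval interchange is legitimate since $\psi\in L^2$.
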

In the proof of the latter theorem we will need the following classical fact, which is a generalization of Bessel's inequality.
\begin{lemma}[Boas-Bellman inequality] \label{lem:boas_bellman} Let $h_1,\ldots,h_{m},g$ be elements of some Hilbert space $\mathcal{H}$ equipped with the inner product $\langle\cdot,\cdot\rangle_{\mathcal{H}}$. Then
$$
\sum_{i=1}^{m}\langle h_i, g\rangle^2_{\mathcal{H}}\le\|g\|^2_{\mathcal{H}}\left(\max_i\|h_i\|^2_{\mathcal{H}}+\sqrt{\sum_{i\ne j}\langle h_i,h_j\rangle^2_{\mathcal{H}}}\right).
$$
\end{lemma}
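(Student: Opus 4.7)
The plan is to follow the classical variational argument that extracts Bessel-type bounds from a Cauchy--Schwarz pairing against a carefully chosen auxiliary vector. Set $c_i \triangleq \langle h_i, g \rangle_{\mathcal{H}}$ for $i = 1, \ldots, m$, and introduce the auxiliary element $S \triangleq \sum_{i=1}^m c_i\, h_i \in \mathcal{H}$. The first key observation is that pairing $S$ against $g$ recovers exactly the quantity we wish to bound: $\langle S, g\rangle_{\mathcal{H}} = \sum_i c_i^2$. Therefore Cauchy--Schwarz gives
$$
\Bigl(\sum_{i=1}^m c_i^2\Bigr)^2 \;=\; \langle S, g\rangle_{\mathcal{H}}^2 \;\le\; \|S\|_{\mathcal{H}}^2 \, \|g\|_{\mathcal{H}}^2.
$$
So it suffices to control $\|S\|_{\mathcal{H}}^2$ in terms of $\sum_i c_i^2$ times the right-hand-side factor in the lemma; once that is done, dividing through by $\sum_i c_i^2$ (in the nontrivial case, otherwise the lemma is immediate) finishes the proof.

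Next, I would expand $\|S\|_{\mathcal{H}}^2 = \sum_{i,j} c_i c_j \langle h_i, h_j\rangle_{\mathcal{H}}$ and split it into the diagonal and off-diagonal contributions. The diagonal part $\sum_i c_i^2 \|h_i\|_{\mathcal{H}}^2$ is immediately bounded by $\max_i \|h_i\|_{\mathcal{H}}^2 \cdot \sum_i c_i^2$, producing the first term inside the parentheses of the stated bound. For the off-diagonal part $\sum_{i\ne j} c_i c_j \langle h_i, h_j\rangle_{\mathcal{H}}$, I would apply Cauchy--Schwarz to the index pairs $(i,j)$ with $i\ne j$, viewing $(c_i c_j)_{i\ne j}$ and $(\langle h_i, h_j\rangle_{\mathcal{H}})_{i\ne j}$ as vectors in $\ell^2$. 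This yields
$$
\Bigl|\sum_{i\ne j} c_i c_j \langle h_i, h_j\rangle_{\mathcal{H}}\Bigr| \;\le\; \sqrt{\sum_{i\ne j} c_i^2 c_j^2} \;\cdot\; \sqrt{\sum_{i\ne j} \langle h_i, h_j\rangle_{\mathcal{H}}^2}.
$$

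The crucial simplification is to absorb the factor $\sqrt{\sum_{i\ne j} c_i^2 c_j^2}$ into $\sum_i c_i^2$: one notes that $\sum_{i\ne j} c_i^2 c_j^2 \le \bigl(\sum_i c_i^2\bigr)^2$, so the off-diagonal term is at most $\sum_i c_i^2 \cdot \sqrt{\sum_{i\ne j} \langle h_i, h_j\rangle_{\mathcal{H}}^2}$. Combining this with the diagonal bound gives
$$
\|S\|_{\mathcal{H}}^2 \;\le\; \Bigl(\sum_i c_i^2\Bigr)\Bigl(\max_i \|h_i\|_{\mathcal{H}}^2 + \sqrt{\sum_{i\ne j}\langle h_i, h_j\rangle_{\mathcal{H}}^2}\Bigr).
$$
Substituting into the earlier Cauchy--Schwarz inequality $\bigl(\sum_i c_i^2\bigr)^2 \le \|S\|_{\mathcal{H}}^2 \|g\|_{\mathcal{H}}^2$ and dividing by $\sum_i c_i^2$ yields precisely the Boas--Bellman bound.

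The argument is essentially classical and has no real obstacle; the only subtlety is choosing the right order of Cauchy--Schwarz applications so that the factor $\sum_i c_i^2$ emerges cleanly rather than a weaker quantity like $\sqrt{\sum_i c_i^4}$. The bound $\sum_{i\ne j} c_i^2 c_j^2 \le \bigl(\sum_i c_i^2\bigr)^2$ is the lossy but essential step that makes the final cancellation possible; a tighter use ($\sum_{i\ne j} c_i^2 c_j^2 = (\sum_i c_i^2)^2 - \sum_i c_i^4$) would not change the statement but is unnecessary. The trivial case $\sum_i c_i^2 = 0$ must be handled separately, but there the inequality reduces to $0 \le$ (nonnegative), which is automatic.
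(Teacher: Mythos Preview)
Your proof is correct and is the standard classical argument for the Boas--Bellman inequality. The paper does not supply its own proof of this lemma: it is stated as a classical fact (with citations to Boas and Bellman) and used as a black-box tool, so there is nothing in the paper to compare against beyond noting that your argument is precisely the textbook one.
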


\begin{proof}[Proof of Theorem~\ref{ergodic}] The variance of the gradient w.r.t. $\mathcal{H}_A$ can be bounded in the following way:
\begin{equation*}
\begin{split}
{\rm Var}_{a\sim {\mathbb Z}_A}\big[\partial_{w_i} {\mathbb E}_{x\sim [0,1]}[(\eta({\mathbf w},x)-h_a(x))^2]\big]
={\rm Var}_{a\sim {\mathbb Z}_A}\big[ {\mathbb E}_{x\sim [0,1]}[2(\eta({\mathbf w}, x)-h_a(x))\partial_{w_i} \eta({\mathbf w},x)]\big]\\
=4{\rm Var}_{a\sim {\mathbb Z}_A}\big[ {\mathbb E}_{x\sim [0,1]} [h_a(x)\partial_{w_i} \eta({\mathbf w}, x)]\big]\leq
\frac{4}{A}\sum_{a\in {\mathbb Z}_A}\langle h_a, g_i({\mathbf w})\rangle^2.
\end{split}
\end{equation*}
Above we used that ${\rm Var}_{a\sim {\mathbb Z}_A}[l(a)]={\rm Var}_{a\sim {\mathbb Z}_A}[l(a)+C]$ for any $C$ that does not depend on $a$. Also, we used that the variance does not exceed the second moment. 
The Boas-Bellmann inequality gives us
\begin{equation}\label{boas-bellman}
\begin{split}
\sum_{a=0}^{A-1}\langle h_a, g_i({\mathbf w})\rangle^2\leq \|g_i({\mathbf w})\|^2_{L_2([0,1])}\big(\max_{a\in {\mathbb Z}_A} \|h_a\|^2+\sqrt{\sum_{a\ne b\in {\mathbb Z}_A}\langle h_a,h_b\rangle^2}\big).
\end{split}
\end{equation}
Recall that $\langle h_a,h_b\rangle = \int_0^1 \psi(ax)\psi(bx)dx$. 
Therefore,
\begin{equation}\label{pairs-sum}
\begin{split}
\sum_{a, b\in {\mathbb Z}_A}\langle h_a,h_b\rangle^2 = \sum_{a, b\in {\mathbb Z}_A} \int_{[0,1]^2}\psi(ax)\psi(bx)\psi(ay)\psi(by)dxdy
= \int_{[0,1]^2} Q(x,y)^2 dxdy = \|Q\|^2_{L_2([0,1]^2)}.
\end{split}
\end{equation}
where $Q(x,y) \triangleq \sum_{a\in {\mathbb Z}_A}\psi(ax)\psi(ay)$. This function can be represented as
$$
Q(x,y) =\sum_{a=0}^{A-1} \psi^2(T^a_{(x,y)}(0,0)),
$$
where $C={\mathbb R}/{\mathbb Z}$ is a factor-group (topologically isomorphic to a circle), $\psi^2: C\times C\to {\mathbb R}$, $\psi^2(x', y') = \psi(x')\psi(y')$ and $T_{(x,y)}:C\times C\to C\times C$, $T_{(x,y)}(x',y')=(x'+x \bmod 1,y'+y\bmod 1)$, $T^a_{(x,y)}=T_{(x,y)}\circ\cdots\circ T_{(x,y)}$ ($a$ times). 
Note that $C\times C$ is a two-dimensional torus and $T_{(x,y)}$ is a translation operator on that torus. The standard Lebegue measure on $C\times C$ is preserved by $T_{(x,y)}$. Recall that an operator $O: M\to M$ preserving a measure $\mu$ on $M$ is called ergodic if almost surely over a choice of $x\sim \mu$ we have $\lim_{N\to +\infty}\frac{1}{N}\sum_{i=0}^{N-1}f(T^{i}(x)) = \int_{M}fd\mu$ for any $f$ integrable w.r.t. $\mu$.
The Weyl-von Neumann theorem~\cite{Sinai1977} states that the translation operator $T_{(x,y)}$ is ergodic on the 2-dimensional torus $C\times C$ if $ax+by=c,a,b,c\in {\mathbb Z}$ implies $a=b=c=0$. Thus, for a random choice of $x,y,z,t\in [0,1]$, almost surely the following is satisfied:
\begin{equation*}
\begin{split}
\lim_{A\to +\infty}\frac{1}{A}\sum_{a=0}^{A-1} \psi^2(T^a_{(x,y)}(z,t)) = \int_{[0,1]^2} \psi(x')\psi(y')dx'dy'=0.
\end{split}
\end{equation*}

Now to estimate the speed of convergence for $(z,t)=(0,0)$ we need the notion of the discrepancy. Let ${\mathbf x}_1, \cdots, {\mathbf x}_N\in [0,1]^s$, then
\begin{equation*}
\begin{split}
D^\ast_N({\mathbf x}_1, \cdots, {\mathbf x}_N) \triangleq \sup_{0\leq u_1,\cdots, u_s\leq 1}|\frac{|\{i\in [N]\mid {\mathbf x}_i\in [0,u_1)\times \cdots \times [0,u_s)\}|}{N}-\prod_{j=1}^s u_j| .
\end{split}
\end{equation*}
is called the discrepancy of a set $\{{\mathbf x}_1, \cdots, {\mathbf x}_N\}$.
Also, we will need the following two classical results from the theory of low-discrepancy sequences (for details, see pages 116 and 151 of~\cite{kuipers2012uniform}). 
\begin{lemma}[Koksma-Hlawka inequality~\cite{kuipers2012uniform}] 
Let $f:[0,1]^2\to {\mathbb R}$ be such that $\|f(x,1)\|_{BV}<\infty$, $\|f(x,0)\|_{BV}<\infty$, $\|f(1,x)\|_{BV}<\infty$, $\|f(0,x)\|_{BV}<\infty$ and
\begin{equation*}
\begin{split}
V^2(f)\triangleq
\sup_{\substack{N,M\in {\mathbb N} \\ 0=x_0\leq \cdots \leq x_N=1 \\0=y_0\leq \cdots \leq y_M=1}} \sum_{i=1}^{N}\sum_{j=1}^{M}|f(x_{i},y_{i})-f(x_{i-1},y_{i})-f(x_{i},y_{i-1})+f(x_{i-1},y_{i-1})| <\infty.
\end{split}
\end{equation*}
Then, for any ${\mathbf x}_1, \cdots, {\mathbf x}_N\in [0,1]^2$, ${\mathbf x}_i = [x_{i1}, x_{i2}]$, we have
\begin{equation*}
\begin{split}
|\frac{1}{N}\sum_{i=1}^N f({\mathbf x}_i) - \int_{[0,1]^2}f({\mathbf x})d{\mathbf x}|\leq \|f(x,1)\|_{BV}D^\ast_N(x_{11}, \cdots, x_{N1})+\\
\|f(1,x)\|_{BV}D^\ast_N(x_{12}, \cdots, x_{N2})+
V^2(f)D^\ast_N({\mathbf x}_1, \cdots, {\mathbf x}_N).
\end{split}
\end{equation*}
\end{lemma}
Our plan is to apply the latter lemma to the case of $f(x,y) = \psi^2(x,y) = \psi(x)\psi(y)$ and ${\mathbf x}_i = T^{i-1}_{(x,y)}(0,0)$, $i\in [N]$. For that case we have $\max(\|f(x,1)\|_{BV}, \|f(1,x)\|_{BV})\leq \|\psi\|^2_{BV}$ and $V^2(f)\leq \|\psi\|^2_{BV}$.  By construction $D^\ast_N(x_{1c}, \cdots, x_{Nc})\leq D^\ast_N({\mathbf x}_1, \cdots, {\mathbf x}_N)$,  $c\in \{0,1\}$, therefore 
\begin{equation*}
\begin{split}
|\frac{1}{A}\sum_{a=0}^{A-1} \psi^2(T^a_{(x,y)}(0,0)) |\Lt 
\|\psi\|^2_{BV}  D^\ast_N({\mathbf x}_1, \cdots, {\mathbf x}_N).
\end{split}
\end{equation*}
Thus, it is enough to bound $D^\ast_N({\mathbf x}_1, \cdots, {\mathbf x}_N)$.
This can be done using the Erd{\"o}s-Turán-Koksma inequality.

\begin{lemma}[Erd{\"o}s-Turán-Koksma inequality~\cite{kuipers2012uniform}] \label{lem:erdos} Let ${\mathbf x}_1, \cdots, {\mathbf x}_N\in [0,1]^s$ and $H\in {\mathbb N}$. Then,
\begin{equation*}
\begin{split}
D^\ast_N({\mathbf x}_1, \cdots, {\mathbf x}_N) \leq C_s \Big(\frac{1}{H}+\sum_{{\mathbf h}: 0<\|{\mathbf h}\|_\infty\leq H}\frac{1}{r({\mathbf h})}\big|\frac{1}{N}\sum_{l=1}^N e^{2\pi {\rm i}  {\mathbf h}^\top {\mathbf x}_l}\big|\Big),
\end{split}
\end{equation*}
where $r({\mathbf h}) = \prod_{j=1}^s\max(|h_j|,1)$ for ${\mathbf h} = [h_1, \cdots, h_s]\in {\mathbb Z}^s$ and  $C_s$ is a constant that only depends on $s$.
\end{lemma}
The Erd{\"o}s-Turán-Koksma inequality  helps us to bound the discrepancy, and therefore 
\begin{equation*}
\begin{split}
|\frac{1}{A}\sum_{a=0}^{A-1} \psi^2(T^a_{(x,y)}(0,0)) | 
\Lt 
\|\psi\|^2_{BV}\Bigg( 
\frac{1}{H}+\sum_{a,b\in [-H,H]\cap {\mathbb Z}, (a,b)\ne (0,0)} \frac{\frac{1}{A}|\sum_{n=0}^{A-1} e^{2\pi {\rm i} (anx+bny)}|}{\max(1,|a|)\max(1,|b|)}\Bigg),
\end{split}
\end{equation*}
for any $H\in {\mathbb N}$.

Note that  $\sum_{n=0}^{A-1} e^{2\pi {\rm i} (anx+bny)} = \frac{e^{2\pi {\rm i} A(ax+by)}-1}{e^{2\pi {\rm i} (ax+by)}-1}=\frac{|\sin(\pi A(ax+by))|}{|\sin(\pi (ax+by))|}$. Using the triangle inequality we conclude
\begin{equation}\label{q-squared}
\begin{split}
 \|Q\|_{L_2([0,1]^2)}\Lt 
A \|\psi\|^2_{BV} \Bigg(\frac{1}{H}+\\
+\Big\|\sum_{a,b\in [-H,H]\cap {\mathbb Z}, (a,b)\ne (0,0)} \frac{|\sin(\pi A(ax+by))|}{A|\sin(\pi (ax+by))|\max(1,|a|)\max(1,|b|)}\Big\|_{L_2([0,1]^2)}\Bigg).
\end{split}
\end{equation}
Now our plan will be to estimate the latter expressions and obtain an upper bound on $\|Q\|_{L_2([0,1]^2)}$. Further, we will combine it with the equality~\eqref{pairs-sum} and bound the expression $\sum_{a, b\in {\mathbb Z}_A}\langle h_a,h_b\rangle^2$. The last expression bounds the needed variance according to Boas-Bellman inequality~\eqref{boas-bellman}.

The following lemma is a key fact behind our bound of the RHS of~\eqref{q-squared}.
\begin{lemma}\label{sin-over-sin} For $\omega\geq 1, r\in {\mathbb N}$, we have $\int_{-1}^1\frac{|\sin(r\omega x)|}{|\sin(\omega x)|}dx\Lt 1+ \log r$.
\end{lemma}
\begin{proof}
 Recall that $x\pm y = [x-y,x+y]$.  First note that if $\omega x\notin \Omega$ where $\Omega=\bigcup_{k\in {\mathbb Z}}(\pi k\pm \frac{\pi}{4})$, then $\frac{|\sin(r\omega x)|}{|\sin(\omega x)|}\leq \frac{1}{|\sin(\omega x)|}\leq \sqrt{2}$ and $\int_{[-1,1]\setminus\Omega}\frac{|\sin(r\omega x)|}{|\sin(\omega x)|}dx\leq 2\sqrt{2}$. 

Let us now estimate $\int_{[-1,1]\cap\Omega}\frac{|\sin(r\omega x)|}{|\sin(\omega x)|}dx$. If $\omega x\in \pi k\pm \frac{\pi}{4}$, then $2|\sin(\omega x)|\geq |\omega x-\pi k|$. We have 
\begin{equation*}
\begin{split}
\int_{(\pi k\pm \frac{\pi}{4})/\omega}\frac{|\sin(r\omega x)|}{|\sin(\omega x)|}dx\leq 2\int_{(\pi k\pm \frac{\pi}{4})/\omega}\frac{|\sin(r(\omega x-\pi k))|}{|\omega x-\pi k|}dx\\
= \frac{2}{\omega}\int_{-\pi/4}^{\pi/4}\frac{|\sin(rx)|}{|x|}dx=\frac{2}{\omega}\int_{-r\pi/4}^{r\pi/4}\frac{|\sin(x)|}{|x|}dx\Lt \frac{\log r +1}{\omega}.
\end{split}
\end{equation*}
Since the number $|k|$ can be at most $\lfloor\frac{\omega+\frac{\pi}{4}}{\pi}\rfloor$, the total integral is asymptotically bounded by $1+\frac{\log r +1}{\omega}\frac{\omega+\frac{\pi}{4}}{\pi}\Lt 1+\log r$. 
\end{proof}
\begin{lemma}\label{one} For $(a,b)\in {\mathbb Z}^2\setminus \{(0,0)\}$, we have ${\mathbb E}_{(x,y)\sim [0,1]^2}[\frac{|\sin(\pi A(ax+by))|}{|\sin(\pi (ax+by))|}]\Lt 1+\log A$.
\end{lemma}
\begin{proof} The following chain of inequalities can be checked:
\begin{equation*}
\begin{split}
{\mathbb E}_{(x,y)\sim [0,1]^2}\Big[\frac{|\sin(\pi A(ax+by))|}{|\sin(\pi (ax+by))|}\Big]\leq \int_{x^2+y^2\leq 2} \frac{|\sin(\pi A(ax+by))|}{|\sin(\pi (ax+by))|}dxdy\\=
\frac{1}{a^2+b^2}\int_{x^2+y^2\leq 2(a^2+b^2)} \frac{|\sin(\pi Ax)|}{|\sin(\pi x)|}dxdy=
\frac{1}{a^2+b^2}\int_{-\sqrt{2(a^2+b^2)}}^{\sqrt{2(a^2+b^2)}} \frac{|\sin(\pi Ax)|\sqrt{2(a^2+b^2)-x^2}}{|\sin(\pi x)|}dx\\ \leq 
\frac{\sqrt{2}}{\sqrt{a^2+b^2}}\int_{-\sqrt{2(a^2+b^2)}}^{\sqrt{2(a^2+b^2)}} \frac{|\sin(\pi Ax)|}{|\sin(\pi x)|}dx=2\int_{-1}^{1} \frac{|\sin(\pi A\sqrt{2a^2+2b^2}x)|}{|\sin(\pi \sqrt{2a^2+2b^2} x)|}dx.
\end{split}
\end{equation*}
Using Lemma~\ref{sin-over-sin} we conclude that the latter expression is asymptotically bounded by $1+\log A$.
\end{proof}
\begin{lemma}\label{two} For $(a,b),(c,d)\in {\mathbb Z}^2\setminus \{(0,0)\}$, we have ${\mathbb E}_{(x,y)\sim [0,1]^2}\big[\frac{|\sin(\pi A(ax+by)) \sin(\pi A(cx+dy))|}{|\sin(\pi (ax+by))\sin(\pi (cx+dy))|}\big]\Lt A(1+\log A)$.
\end{lemma}
\begin{proof} Using $\frac{|\sin(\pi A(cx+dy))|}{|\sin(\pi (cx+dy))|}\leq A$ and the previous lemma, we obtain
\begin{equation*}
\begin{split}
{\mathbb E}_{(x,y)\sim [0,1]^2}\Big[\frac{|\sin(\pi A(ax+by)) \sin(\pi A(cx+dy))|}{|\sin(\pi (ax+by))\sin(\pi (cx+dy))|}\Big] \leq 
A {\mathbb E}_{(x,y)\sim [0,1]^2}\Big[\frac{|\sin(\pi A(ax+by))|}{|\sin(\pi (ax+by))|}\Big]\Lt A(1+\log A).
\end{split}
\end{equation*}
\end{proof}
Now we are ready to bound  $$\Big\|\sum_{a,b\in [-H,H]\cap {\mathbb Z}, (a,b)\ne (0,0)} \frac{|\sin(\pi A(ax+by))|}{A|\sin(\pi (ax+by))|\max(1,|a|)\max(1,|b|)}\Big\|_{L_2([0,1]^2)}^2$$  using Lemmas~\ref{one} and~\ref{two} by:
\begin{equation*}
\begin{split}
{\mathbb E}_{(x,y)\sim [0,1]^2}\Big(
\hspace{-25pt}\sum_{\scriptscriptstyle a,b\in [-H,H]\cap {\mathbb Z}, (a,b)\ne (0,0)}\hspace{-10pt} \frac{|\sin(\pi A(ax+by))|}{A|\sin(\pi (ax+by))|\max(1,|a|)\max(1,|b|)}\Big)^2 
\end{split}
\end{equation*}
\begin{equation*}
\begin{split}
= A^{-2}\sum_{\substack{\scriptscriptstyle a,b\in [-H,H]\cap {\mathbb Z}, (a,b)\ne (0,0) \\ \scriptscriptstyle c,d\in [-H,H]\cap {\mathbb Z}, (c,d)\ne (0,0)}} \frac{1}{\max(1,|a|)\max(1,|b|)\max(1,|c|)\max(1,|d|)}\\
\times {\mathbb E}_{(x,y)\sim [0,1]^2}\Big[\frac{|\sin(\pi A(ax+by)) \sin(\pi A(cx+dy))|}{|\sin(\pi (ax+by))\sin(\pi (cx+dy))|}\Big] 
\end{split}
\end{equation*}
\begin{equation*}
\begin{split}
\Lt A^{-1}(\log A+1)(\sum_{a=-H}^H \frac{1}{\max(1,|a|)})^4\Lt 
A^{-1}(\log A+1)(\log H +1)^4.
\end{split}
\end{equation*}
After we set $H\asymp A^{1/2}(\log A+1)^{-5/2}$, using~\eqref{q-squared} we obtain 
\begin{equation}\label{BBbound1}
\|Q\|_{L_2([0,1]^2)}\in A\|\psi\|^2_{BV} \mathcal{O} \big(  A^{-1/2}(\log A+1)^{5/2}\big).
\end{equation}
Finally, using~\eqref{boas-bellman}, we obtain
\begin{equation*}
\begin{split}
\sum_{a=0}^{A-1}\langle h_a, g_i({\mathbf w})\rangle^2\leq \|g_i({\mathbf w})\|^2_{L_2([0,1])}\big(\max_{a\in {\mathbb Z}_A} \|h_a\|^2+\sqrt{\|Q\|^2_{L_2([0,1]^2)}}\big)\\ \leq 
\|g_i({\mathbf w})\|^2_{L_2([0,1])}\Big(\|\psi\|^2_{BV}+\|\psi\|^2_{BV} \mathcal{O}\big(A^{1/2}(\log A+1)^{5/2}\big)\Big) \\ \in 
\|g_i({\mathbf w})\|^2_{L_2([0,1])}\|\psi\|^2_{BV} \mathcal{O}\big(A^{1/2}(\log A+1)^{5/2}\big).
\end{split}
\end{equation*}
The latter directly leads to the statement of Theorem,
\begin{equation*}
\begin{split}
{\rm Var}_{a\sim {\mathbb Z}_A}\big[\nabla_{\mathbf w} {\mathbb E}_{x}(\eta({\mathbf w}, x)(x)-h_a(x))^2\big]\\ \in 
\|g({\mathbf w})\|^2_{L^s_2([0,1])} \|\psi\|^2_{BV} A^{-1} \mathcal{O}\big(A^{1/2}(\log A+1)^{5/2}\big) \in 
 \|g({\mathbf w})\|^2_{L^s_2([0,1])} \|\psi\|^2_{BV} \mathcal{O}\big(A^{-1/2}(\log A+1)^{5/2}\big).
\end{split}
\end{equation*}
\end{proof}

\section{The case of $p$-periodic function on ${\mathbb Z}$}\label{sec:proofs}
Let $t: {\mathbb Z}^\ast_p\to {\mathbb R}$ be some function. Let us extend this function to ${\mathbb Z}_p$ by defining $t(0)$ arbitrarily. 
Let $a\in {\mathbb Z}_p^\ast$. Let us define a function $\mathring{h}_a: {\mathbb Z}_p\to {\mathbb R}$ by
    \begin{equation}\label{eq:DL_parity_bit}
    \mathring{h}_a(x)\triangleq t(a\cdot x),
    \end{equation}
and introduce a hypothesis set
    \begin{equation}\label{eq:DL_parity_bit}
    \mathring{\mathcal{H}}_p = \{\mathring{h}_a(x)\mid a\in {\mathbb Z}_p^\ast\}.
    \end{equation}
Throughout the section we assume that elements of ${\mathbb R}^p$ (or, ${\mathbb C}^p$) are functions from ${\mathbb Z}_p$ to ${\mathbb R}$ (or, ${\mathbb C}$) and introduce an inner product in ${\mathbb R}^p$ (or, ${\mathbb C}^p$) by 
$$
\langle f,g\rangle \triangleq \sum_{x\in {\mathbb Z}_p}f(x)^\dag g(x).
$$
We will be interested in the variance of the gradient w.r.t.  $\mathring{\mathcal{H}}_p$ defined by
\begin{equation}
{\rm Var}(\mathring{\mathcal{H}}_p, {\mathbf w}) \triangleq {\rm Var}_{A\sim {\mathbb Z}^\ast_p}\big[\nabla_{\mathbf w} {\mathbb E}_{x\sim {\mathbb Z}^\ast_p}[L(\mathring{h}_A (x), \eta({\mathbf w},x))]\big],
\end{equation}
where $L$ is either a square loss function (for regression tasks) or $L(y,y')=l(yy')$, where $
l$ is some 1-Lipschitz function (for binary classification tasks).

Again, let $\eta({\mathbf w},x)$ be differentiable w.r.t. ${\mathbf w}$ in almost all points and let us denote $\partial_{w_i} \eta({\mathbf w},x)\in {\mathbb R}^{p}$ by $g_i({\mathbf w})$. A natural norm of the gradient is given by $\|g_i\|^2_{\ast} \triangleq {\mathbb E}_{X\sim {\mathbb Z}_p} \big[(\partial_{w_i} \eta({\mathbf w},X))^2\big]$. Also, let $g({\mathbf w})\triangleq [g_1({\mathbf w}), \cdots, g_s({\mathbf w})]^\top$ and $\|g\|^2_{\ast s}\triangleq \sum_{i=1}^s \|g_i\|_{\ast}^2$. 

\begin{theorem}\label{BBBB2} For the square loss function, we have
\begin{equation}
\begin{split}
{\rm Var}_{a\sim {\mathbb Z}^\ast_p}\big[\nabla_{\mathbf w} {\mathbb E}_{x\sim {\mathbb Z}^\ast_p}[L(\mathring{h}_a (x), \eta({\mathbf w},x))]\big] \Lt \|g({\mathbf w})\|^2_{\ast s}\sqrt{{\mathbb E}_{Y\sim\mathbb{Z}_p^\ast}[f(Y)^2]},
\end{split}
\end{equation}
 where 
\begin{equation}\label{eq:f}
f(Y)\triangleq \frac{1}{p-1}\sum_{x\in {\mathbb Z}_p^\ast} t(x)t(Y\cdot x).
\end{equation}
\end{theorem}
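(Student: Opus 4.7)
The strategy mirrors the proof of Theorem~\ref{ergodic}, now carried out with the discrete inner product $\langle f,g\rangle_{*}\triangleq\sum_{x\in{\mathbb Z}_p^{\ast}}f(x)g(x)$ on ${\mathbb R}^{{\mathbb Z}_p^{\ast}}$ in place of the $L_2([0,1])$ inner product. The first step is to expand the derivative of the square-loss expectation,
\[
\partial_{w_i}\,{\mathbb E}_{x\sim{\mathbb Z}_p^{\ast}}[(\mathring h_a(x)-\eta({\mathbf w},x))^2]=-\tfrac{2}{p-1}\langle\mathring h_a,g_i({\mathbf w})\rangle_{*}+R({\mathbf w}),
\]
where $R({\mathbf w})$ does not depend on $a$. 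Using ${\rm Var}_a[X]={\rm Var}_a[X+C]$ for any $a$-independent $C$, and bounding variance by the second moment, one obtains
\[
{\rm Var}_{a\sim{\mathbb Z}_p^{\ast}}\big[\partial_{w_i}\cdots\big]\leq\frac{4}{(p-1)^{3}}\sum_{a\in{\mathbb Z}_p^{\ast}}\langle\mathring h_a,g_i({\mathbf w})\rangle_{*}^{2}.
\]

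The second step is to apply the Boas-Bellman inequality (Lemma~\ref{lem:boas_bellman}) to the family $\{\mathring h_a\}_{a\in{\mathbb Z}_p^{\ast}}\subset({\mathbb R}^{{\mathbb Z}_p^{\ast}},\langle\cdot,\cdot\rangle_{*})$, yielding
\[
\sum_{a}\langle\mathring h_a,g_i\rangle_{*}^{2}\leq\|g_i\|_{\langle\cdot,\cdot\rangle_{*}}^{2}\Big(\max_a\|\mathring h_a\|_{*}^{2}+\sqrt{\sum_{a\neq b}\langle\mathring h_a,\mathring h_b\rangle_{*}^{2}}\Big),
\]
and then passing from the unnormalized counting norm to the $\|\cdot\|_{*}$ of the theorem statement via $\|g_i\|_{\langle\cdot,\cdot\rangle_{*}}^{2}\leq p\,\|g_i\|_{*}^{2}$.

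The third step---the only place where the algebraic structure of $\mathring h_a$ enters---is to reduce the inner products to $f$. Because $x\mapsto a\cdot x$ is a bijection of ${\mathbb Z}_p^{\ast}$, the substitution $y=a\cdot x$ gives
\[
\langle\mathring h_a,\mathring h_b\rangle_{*}=\sum_{y\in{\mathbb Z}_p^{\ast}}t(y)\,t((b/a)\cdot y)=(p-1)\,f(b/a),
\]
and in particular $\|\mathring h_a\|_{*}^{2}=(p-1)f(1)$. For fixed $a$, as $b$ ranges over ${\mathbb Z}_p^{\ast}\setminus\{a\}$ the quotient $Y=b/a$ ranges over ${\mathbb Z}_p^{\ast}\setminus\{1\}$, so
\[
\sum_{a\neq b}\langle\mathring h_a,\mathring h_b\rangle_{*}^{2}=(p-1)^{3}\sum_{Y\in{\mathbb Z}_p^{\ast}\setminus\{1\}}f(Y)^{2}\leq(p-1)^{4}\,{\mathbb E}_{Y\sim{\mathbb Z}_p^{\ast}}[f(Y)^{2}].
\]
The diagonal contribution is dominated by the off-diagonal one: from $f(1)^{2}\leq(p-1)\,{\mathbb E}_Y[f(Y)^{2}]$ we get $(p-1)f(1)\leq(p-1)^{3/2}\sqrt{{\mathbb E}_Y[f(Y)^{2}]}\Lt(p-1)^{2}\sqrt{{\mathbb E}_Y[f(Y)^{2}]}$.

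Combining everything and summing over $i$, all powers of $p-1$ collapse and one obtains ${\rm Var}(\mathring{\mathcal H}_p,{\mathbf w})\Lt\|g({\mathbf w})\|_{*s}^{2}\sqrt{{\mathbb E}_Y[f(Y)^{2}]}$. Conceptually the only nontrivial observation is the factorization $\langle\mathring h_a,\mathring h_b\rangle_{*}=(p-1)f(b/a)$ produced by the bijection $a^{-1}\cdot$, which lets us replace a double sum indexed by pairs $(a,b)$ with a single sum over the quotient group. The main obstacle---and the only step that is not a direct translation of the real-line proof---is the careful book-keeping of the $(p-1)$ normalization factors, so that the Boas-Bellman constants and the normalization in $\|g_i\|_{*}^{2}$ and in ${\mathbb E}_Y[f(Y)^{2}]$ align to give a bound independent of $p$ apart from the advertised $\sqrt{{\mathbb E}_Y[f(Y)^{2}]}$ factor.
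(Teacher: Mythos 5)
Your proof is correct and follows essentially the same route as the paper: bound the variance by the second moment, apply the Boas-Bellman inequality to $\{\mathring h_a\}_{a\in\mathbb{Z}_p^\ast}$, and use the bijection $x\mapsto a\cdot x$ to reduce $\langle\mathring h_a,\mathring h_b\rangle$ to $(p-1)f(b/a)$, so that $\sum_{a\neq b}\langle\mathring h_a,\mathring h_b\rangle^2=(p-1)^3\sum_{Y\neq 1}f(Y)^2\leq(p-1)^4\,{\mathbb E}_Y[f(Y)^2]$ (the paper packages this as Lemma~\ref{lem:sum_of_squares} for the full double sum). Your handling of the diagonal term is in fact a small improvement: you bound $(p-1)f(1)$ directly via $f(1)^2\leq(p-1)\,{\mathbb E}_Y[f(Y)^2]$, whereas the paper invokes the intermediate claim $\max_a\|\mathring h_a\|^2\leq\sqrt{\sum_{a\ne b}\langle\mathring h_a,\mathring h_b\rangle^2}$, which is not true for arbitrary $t$ (consider $t$ supported on a single point, where the right-hand side vanishes); both routes reach the same bound because the diagonal is always dominated by $\sqrt{\sum_{a,b}\langle\mathring h_a,\mathring h_b\rangle^2}$, but your version makes this explicit.
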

\begin{proof} Since the l.h.s. does not depend on the value of $t$ at 0, we can set $t(0)=0$. 
In the case of the square loss, an application of the Boas-Bellman inequality to the family $\{\mathring{h}_a(x)\}_{a\in {\mathbb Z}_p^\ast}$ gives us 
\begin{equation}\label{BBBB}
\begin{split}
{\rm Var}_{A\sim {\mathbb Z}^\ast_p}\big[\partial_{w_i} {\mathbb E}_{x\sim {\mathbb Z}^\ast_p}[(\mathring{h}_A (x)- \eta({\mathbf w},x))^2]\big]  \leq  
\frac{4}{(p-1)^2}{\mathbb E}_{A\sim {\mathbb Z}_p^\ast}\big[\langle t(A\cdot x), g_i(w, x)\rangle^2\big] = \\
\frac{4}{(p-1)^3}\sum_{a\in {\mathbb Z}_p^\ast}\langle t(a\cdot x), g_i(w, x)\rangle^2  \leq 
\frac{4\|g_i({\mathbf w})\|^2}{(p-1)^3}\big(\max_{a\in {\mathbb Z}_p^\ast} \|\mathring{h}_a\|^2+\sqrt{\sum_{a\ne b}\langle \mathring{h}_a,\mathring{h}_b\rangle^2}\big).
\end{split}
\end{equation}
Due to $\max_{a\in {\mathbb Z}_p^\ast} \|\mathring{h}_a\|^2\leq \sqrt{\sum_{a\ne b}\langle \mathring{h}_a,\mathring{h}_b\rangle^2}$, the r.h.s. is bounded by a factor of $\sqrt{\sum_{a, b\in {\mathbb Z}_p^\ast}\langle \mathring{h}_a,\mathring{h}_b\rangle^2}$. As the following lemma shows, it itself can be bounded by a factor of $\sqrt{{\mathbb E}_{Y\sim\mathbb{Z}_p^\ast}[f(Y)^2]}$.

\begin{lemma} \label{lem:sum_of_squares}    
We have,
\begin{equation*}
\begin{split}
\sum_{a\in {\mathbb Z}_p^\ast}\sum_{b\in {\mathbb Z}_p^\ast}  \langle \mathring{h}_a, \mathring{h}_b\rangle^2 = (p-1)^4{\mathbb E}_{Y\sim\mathbb{Z}_p^\ast}[f(Y)^2].
\end{split}
\end{equation*}
\end{lemma}
\begin{proof}[Proof of Lemma] The following sequence of identities can be directly checked:

\begin{equation*}
\begin{split}
\frac{1}{(p-1)^2}\sum_{a\in {\mathbb Z}_p^\ast}\sum_{b\in {\mathbb Z}_p^\ast}{\mathbb E}_{X\sim {\mathbb Z}_p^\ast}[\mathring{h}_a(X) \mathring{h}_b(X)]^2
=\E_{A,B\sim {\mathbb Z}_p^\ast}\left[\E_{X\sim {\mathbb Z}_p^\ast}[\mathring{h}_A(X) \mathring{h}_B(X)]^2\right] \\
=\E_{A,B\sim {\mathbb Z}_p^\ast}\left[\E_{X\sim {\mathbb Z}_p^\ast}\left[t( A X)  t( B X)\right]^2\right]
=\E_{A,B\sim {\mathbb Z}_p^\ast}\left[\E_{X\sim {\mathbb Z}_p^\ast}\left[t(A X) t((B\cdot A^{-1})\cdot A X)\right]^2\right]\\
\end{split}
\end{equation*}
\begin{equation*}
\begin{split}
=\E_{A,B\sim {\mathbb Z}_p^\ast}\left[\E_{X\sim  {\mathbb Z}_p^\ast}\left[t(X) t((B\cdot A^{-1})\cdot X)\right]^2\right]
=\E_{Y\sim {\mathbb Z}_p^\ast}\left[\E_{X\sim \mathbb{Z}_p^\ast}
\left[t(X) t(Y\cdot X)\right]^2\right] =
{\mathbb E}_{Y\sim\mathbb{Z}_p^\ast}[f(Y)^2].
\end{split}
\end{equation*}
Since ${\mathbb E}_{X\sim {\mathbb Z}_p^\ast}[\mathring{h}_a(X) \mathring{h}_b(X)]=\frac{\langle \mathring{h}_a, \mathring{h}_b\rangle}{p-1}$, we conclude $\sum_{a,b\in {\mathbb Z}_p^\ast}  \langle \mathring{h}_a, \mathring{h}_b\rangle^2 = (p-1)^4{\mathbb E}_{Y\sim\mathbb{Z}_p^\ast}[f(Y)^2]$. 
Lemma proved.
\end{proof}

Thus, we have
\begin{equation*}
\begin{split}
{\rm Var}_{a\sim {\mathbb Z}^\ast_p}\big[\partial_{w_i} {\mathbb E}_{x\sim {\mathbb Z}^\ast_p}[(\mathring{h}_a (x)- \eta({\mathbf w},x))^2]\big] \Lt \|g_i({\mathbf w})\|^2_{\ast}\sqrt{{\mathbb E}_{Y\sim\mathbb{Z}_p^\ast}[f(Y)^2]},
\end{split}
\end{equation*}
from which the statement of the theorem for a square loss function directly follows. 
\end{proof}

Using ideas from Appendix B.1 of~\cite{DBLP:conf/icml/Shalev-ShwartzS17}, the case of a binary classification task and a 1-Lipschitz loss can be treated analogously. Let us give a proof of the following theorem for completeness.

\begin{theorem}\label{BBBB21} Suppose that $t(x)=t_1(x)+c, x\in {\mathbb Z}^\ast_p$ and a restriction of $t$ to ${\mathbb Z}^\ast_p$ is $\{-1,1\}$-valued. For $L(y,y')=l(yy')$, where $l$ is some 1-Lipschitz function, we have
\begin{equation}
\begin{split}
{\rm Var}_{a\sim {\mathbb Z}^\ast_p}\big[\nabla_{\mathbf w} {\mathbb E}_{x\sim {\mathbb Z}^\ast_p}[L(\mathring{h}_a (x), \eta({\mathbf w},x))]\big] \Lt \|g({\mathbf w})\|^2_{\ast s}\sqrt{{\mathbb E}_{Y\sim\mathbb{Z}_p^\ast}[f_1(Y)^2]},
\end{split}
\end{equation}
where $f_1(y) = {\mathbb E}_{X\sim {\mathbb Z}_p^\ast}t_1(X)t_1(y\cdot X)$.
\end{theorem}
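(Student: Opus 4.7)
The plan is to reduce Theorem~\ref{BBBB21} to essentially the same computation that proved Theorem~\ref{BBBB2}, but with $t$ replaced by $t_1$, by exploiting the fact that the loss becomes an \emph{affine} function of $t_1(a\cdot x)$ whenever $t_1$ is $\{-1,1\}$-valued. Concretely, since $t_1(a\cdot x)\in\{-1,1\}$ for every $x,a\in {\mathbb Z}_p^\ast$, linear interpolation through the two possible values gives
$$
l\bigl((t_1(a\cdot x)+c)\,\eta({\mathbf w},x)\bigr) \;=\; A({\mathbf w},x)+t_1(a\cdot x)\,B({\mathbf w},x),
$$
where, abbreviating $\eta=\eta({\mathbf w},x)$, we set $A({\mathbf w},x)=\tfrac{1}{2}\bigl[l((1+c)\eta)+l((c-1)\eta)\bigr]$ and $B({\mathbf w},x)=\tfrac{1}{2}\bigl[l((1+c)\eta)-l((c-1)\eta)\bigr]$. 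The crucial feature is that neither $A$ nor $B$ depends on $a$.

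Since the $A$-part contributes to $\partial_{w_i}{\mathbb E}_{x\sim{\mathbb Z}_p^\ast}[L]$ a quantity that is independent of $a$ (hence of zero variance), only the $t_1(a\cdot x)$-linear part survives:
$$
{\rm Var}_{a\sim{\mathbb Z}_p^\ast}\bigl[\partial_{w_i}{\mathbb E}_{x\sim{\mathbb Z}_p^\ast}[L]\bigr] \;=\; {\rm Var}_{a\sim{\mathbb Z}_p^\ast}\bigl[{\mathbb E}_{x\sim{\mathbb Z}_p^\ast}[t_1(a\cdot x)\,\partial_{w_i}B({\mathbf w},x)]\bigr].
$$
The $1$-Lipschitz property of $l$ yields $|B({\mathbf w},x)-B({\mathbf w}',x)|\le K\,|\eta({\mathbf w},x)-\eta({\mathbf w}',x)|$ with $K=\tfrac{|1+c|+|c-1|}{2}$, and hence the a.e.\ chain-rule bound $|\partial_{w_i}B({\mathbf w},x)|\le K\,|\partial_{w_i}\eta({\mathbf w},x)|$. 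Extending $t_1$ to ${\mathbb Z}_p$ by $t_1(0)=0$ is harmless (the ${\mathbb Z}_p^\ast$-expectation annihilates the $x=0$ term anyway), and so we are in exactly the setting of the proof of Theorem~\ref{BBBB2}, with $t_1$ in the role of $t$ and $\partial_{w_i}B$ in the role of $g_i$.

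From here I would simply replay the earlier argument. Bounding the variance by the second moment and applying the Boas-Bellman inequality to the family $\{x\mapsto t_1(a\cdot x)\}_{a\in{\mathbb Z}_p^\ast}$ against the vector $\partial_{w_i}B$, together with $\max_a\|t_1(a\cdot x)\|^2\le p-1$ and the verbatim analogue of Lemma~\ref{lem:sum_of_squares} — which, by the same substitution-and-change-of-variable computation, identifies $\sum_{a,b\in{\mathbb Z}_p^\ast}\langle t_1(a\cdot x),t_1(b\cdot x)\rangle^2=(p-1)^4\,{\mathbb E}_{Y\sim{\mathbb Z}_p^\ast}[f_1(Y)^2]$ — gives $\sum_{a\in{\mathbb Z}_p^\ast}\langle t_1(a\cdot x),\partial_{w_i}B({\mathbf w},x)\rangle^2 \Lt \|\partial_{w_i}B\|^2\,(p-1)^2\sqrt{{\mathbb E}_Y[f_1(Y)^2]}$. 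Since $\|\partial_{w_i}B\|^2\le K^2\sum_{x\in{\mathbb Z}_p}(\partial_{w_i}\eta)^2 \Lt p\,\|g_i({\mathbf w})\|^2_\ast$, dividing by $(p-1)^3$ and summing over the $s$ coordinates of ${\mathbf w}$ produces the advertised bound.

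The only genuinely new step is the affine decomposition at the start; once that is in hand, the rest is a mechanical transcription of the square-loss proof. A minor technical subtlety is that $l$ (and hence $B$) is differentiable only almost everywhere, but this does not affect any step, since we only ever need the chain-rule bound $|\partial_{w_i}B|\le K|\partial_{w_i}\eta|$ on the full-measure set where both derivatives exist.
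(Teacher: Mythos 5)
Your overall strategy is sound and is, in spirit, the paper's: exploit that the target is two-valued to make the loss affine in the random part, then bound via Boas--Bellman. But there is a genuine error at the very first step: you misread the hypothesis. The theorem assumes that $t$ (not $t_1$) is $\{-1,1\}$-valued on ${\mathbb Z}_p^\ast$; hence $t_1 = t - c$ takes values in $\{-1-c,\,1-c\}$, not in $\{-1,1\}$. Your interpolation formula
$$
l\bigl((t_1(a\cdot x)+c)\,\eta\bigr)=A({\mathbf w},x)+t_1(a\cdot x)\,B({\mathbf w},x),
$$
with $A=\tfrac12[l((1+c)\eta)+l((c-1)\eta)]$ and $B=\tfrac12[l((1+c)\eta)-l((c-1)\eta)]$, is only an identity when $t_1\in\{-1,1\}$, which is not the case here: substituting $t_1=1-c$ should give $l(\eta)$, but your right-hand side gives $\tfrac{(2-c)\,l((1+c)\eta)+c\,l((c-1)\eta)}{2}\ne l(\eta)$ for $c\ne 0$. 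Downstream, this also manufactures the spurious Lipschitz constant $K=\max(1,|c|)$ and a factor $K^2$ in the final bound, whereas the theorem's bound carries no $c$-dependence.

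The fix is to interpolate in $t$, which is what the paper does. Since $t\in\{-1,1\}$, one has $l(t(a\cdot x)\eta)=\tfrac{l(\eta)+l(-\eta)}{2}+t(a\cdot x)\,\tfrac{l(\eta)-l(-\eta)}{2}$; substituting $t=t_1+c$ and shifting the $c$-multiple of the second term into the $a$-independent part yields $B'({\mathbf w},x)=\tfrac{l(\eta)-l(-\eta)}{2}$, which satisfies $|\partial_{w_i}B'|\le |\partial_{w_i}\eta|$ exactly (constant $1$, no $c$). With that corrected $B'$, the rest of your argument (second moment, Boas--Bellman applied to $\{t_1(a\cdot x)\}_a$, Lemma~\ref{lem:sum_of_squares}) goes through and coincides with the paper's, which carries out the same decomposition after differentiating (on $l'$ rather than on $l$) and bounds $|\tfrac{l'(\eta)-l'(-\eta)}{2}|\le 1$. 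The two orderings are essentially equivalent once the correct two-valued function is used.
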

\begin{proof} W.l.o.g. $t(0)=t_1(0)=0$.
We have
\begin{equation*}
\begin{split}
{\rm Var}_{a\sim {\mathbb Z}^\ast_p}\big[\partial_{w_i} {\mathbb E}_{x\sim {\mathbb Z}^\ast_p}[l(t (a\cdot x)\eta({\mathbf w},x))]\big]  = 
{\rm Var}_{a\sim {\mathbb Z}^\ast_p}\big[ {\mathbb E}_{x\sim {\mathbb Z}^\ast_p}[\partial_{w_i}\eta({\mathbf w},x) l'(t (a\cdot x)\eta({\mathbf w},x))]\big].
\end{split}
\end{equation*}
Since $t$ is $\{-1,1\}$-values, we have $l'(t (a\cdot x)\eta({\mathbf w},x)) = \frac{l'(\eta({\mathbf w},x))+l'(-\eta({\mathbf w},x))}{2}+
\frac{l'(\eta({\mathbf w},x))-l'(-\eta({\mathbf w},x))}{2}t (a\cdot x)$. Therefore, the last variance equals
\begin{equation*}
\begin{split}
{\rm Var}_{a\sim {\mathbb Z}^\ast_p}\big[ {\mathbb E}_{x\sim {\mathbb Z}^\ast_p}[\partial_{w_i}\eta({\mathbf w},x) (\frac{l'(\eta({\mathbf w},x))+l'(-\eta({\mathbf w},x))}{2}+
\frac{l'(\eta({\mathbf w},x))-l'(-\eta({\mathbf w},x))}{2}t (a\cdot x))]\big] \\ = 
{\rm Var}_{a\sim {\mathbb Z}^\ast_p}\big[ {\mathbb E}_{x\sim {\mathbb Z}^\ast_p}[\partial_{w_i}\eta({\mathbf w},x)
\frac{l'(\eta({\mathbf w},x))-l'(-\eta({\mathbf w},x))}{2}t_1 (a\cdot x)]\big]
\end{split}
\end{equation*}
Above we used that ${\rm Var}_{a\sim {\mathbb Z}^\ast_p}(f(a)+c)={\rm Var}_{a\sim {\mathbb Z}^\ast_p}(f(a))$ is $c$ does not depend on $a$. The last variance is bounded by the second moment, i.e. by
\begin{equation*}
\begin{split}
{\mathbb E}_{a\sim {\mathbb Z}^\ast_p}\big[ {\mathbb E}_{x\sim {\mathbb Z}^\ast_p}[H({\mathbf w},x)t_1 (a\cdot x)]^2\big].
\end{split}
\end{equation*}
where $H({\mathbf w},x) \triangleq \partial_{w_i}\eta({\mathbf w},x)
\frac{l'(\eta({\mathbf w},x))-l'(-\eta({\mathbf w},x))}{2}$. Let us fix ${\mathbf w}$ and treat $H({\mathbf w},x)$ as a vector from ${\mathbb R}^p$. Using the Boas-Bellman inequality we bound the latter expression by $\frac{4\|H({\mathbf w},x)\|^2}{(p-1)^3}\big(\max_{a\in {\mathbb Z}_p^\ast} \|t_1(a\cdot x)\|^2+\sqrt{\sum_{a\ne b}\langle t_1(a\cdot x),t_1(b\cdot x)\rangle^2}\big)$. Using 1-Lipschitzness of $l$, we have $\|H({\mathbf w},x)\|^2\leq \|g_i({\mathbf w})\|^2$. Further, we proceed identically to the proof of the squared loss case.
\end{proof}
Next, our goal will be to find conditions under which $f(Y)$ is concentrated around its mean, whose value is established by the following lemma.

\begin{lemma}\label{th:mean} For $f$ defined according to~\eqref{eq:f} we have ${\mathbb E}_{Y\sim {\mathbb Z}^\ast_p}[f(Y)] = {\mathbb E}_{X\sim {\mathbb Z}^\ast_p} [t(X)]^2$.
\end{lemma}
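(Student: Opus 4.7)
The plan is a direct computation. Expanding the definitions,
\[
\mathbb{E}_{Y\sim{\mathbb Z}_p^\ast}[f(Y)] = \frac{1}{(p-1)^2}\sum_{y\in{\mathbb Z}_p^\ast}\sum_{x\in{\mathbb Z}_p^\ast} t(x)\, t(y\cdot x),
\]
so the natural move is to swap the order of summation and evaluate the inner sum in $y$ for each fixed $x\in{\mathbb Z}_p^\ast$. Since $p$ is prime, every $x\in{\mathbb Z}_p^\ast$ is invertible, and the map $y\mapsto y\cdot x$ is a bijection of ${\mathbb Z}_p^\ast$ onto itself. Hence $\sum_{y\in{\mathbb Z}_p^\ast} t(y\cdot x) = \sum_{z\in{\mathbb Z}_p^\ast} t(z) = (p-1)\,\mathbb{E}_{X\sim{\mathbb Z}_p^\ast}[t(X)]$, independently of $x$.

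Plugging this back in, the double sum factorizes and one obtains
\[
\mathbb{E}_{Y\sim{\mathbb Z}_p^\ast}[f(Y)] = \frac{1}{(p-1)^2}\cdot(p-1)\,\mathbb{E}_{X\sim{\mathbb Z}_p^\ast}[t(X)]\cdot\sum_{x\in{\mathbb Z}_p^\ast}t(x) = \mathbb{E}_{X\sim{\mathbb Z}_p^\ast}[t(X)]^2,
\]
which is exactly the claim. There is no real obstacle here; the only subtlety worth flagging is that the identity relies on $x$ being invertible, which is guaranteed because we restricted summation to ${\mathbb Z}_p^\ast$ (and because $p$ is prime), so the arbitrary choice of $t(0)$ made above the lemma never enters the calculation.
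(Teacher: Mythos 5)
Your proof is correct and follows essentially the same approach as the paper: swap the order of summation and use that $y\mapsto y\cdot x$ is a bijection on ${\mathbb Z}_p^\ast$ for each fixed $x\in{\mathbb Z}_p^\ast$, so the double sum factorizes into a product of two averages. The paper phrases this with nested expectations rather than explicit sums, but the argument is identical.
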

\begin{proof} The following chain of identities can be easily verified.
\begin{equation*}
\begin{split}
 {\mathbb E}_{Y\sim {\mathbb Z}^\ast_p}[f(Y)] = {\mathbb E}_{Y\sim {\mathbb Z}^\ast_p} {\mathbb E}_{X\sim {\mathbb Z}^\ast_p}[t(X)t(Y\cdot X)]\\ = 
 {\mathbb E}_{X\sim {\mathbb Z}^\ast_p} {\mathbb E}_{Y\sim {\mathbb Z}^\ast_p}[t(X)t(Y\cdot X)]={\mathbb E}_{X\sim {\mathbb Z}^\ast_p} {\mathbb E}_{Y\sim {\mathbb Z}^\ast_p}[t(X)t(Y)] = {\mathbb E}_{X\sim {\mathbb Z}^\ast_p} [t(X)]^2.
\end{split}
\end{equation*}
\end{proof}

\subsection{A key theorem}

The goal of this subsection is to study the distribution of the random variable $f(Y)$, defined by \eqref{eq:f}, where $Y$ is sampled uniformly at random from ${\mathbb Z}_p^\ast$. 
Let us now study the second moment of $f(Y)$ (which is equal to the variance of $f(Y)$). 

\begin{lemma}\label{spectr} Let $\mathbf{\Phi}\triangleq [t(y\cdot x)]_{(x,y)\in ({\mathbb Z}_p^\ast)^2}$ and ${\mathbf t}\triangleq [t(x)]_{x\in {\mathbb Z}_p^\ast}$.
Then,
$ {\mathbb E}[f(Y)^2] = \frac{1}{(p-1)^3}\|\mathbf{\Phi}{\mathbf t}\|^2$.
\end{lemma}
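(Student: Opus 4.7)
The plan is to recognize that the random variable $f(Y)$ can be identified, up to a constant factor, with a coordinate of the matrix-vector product $\mathbf{\Phi}\mathbf{t}$, so that averaging $f(Y)^2$ over a uniform $Y \in \mathbb{Z}_p^\ast$ is nothing but computing a normalized squared Euclidean norm of $\mathbf{\Phi}\mathbf{t}$.

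First I would unwind the definition and compute, for each $y \in \mathbb{Z}_p^\ast$, the $y$-th coordinate of $\mathbf{\Phi}\mathbf{t}$. By definition of matrix-vector multiplication,
\begin{equation*}
(\mathbf{\Phi}\mathbf{t})_y = \sum_{x \in \mathbb{Z}_p^\ast} \mathbf{\Phi}_{y,x}\, t(x) = \sum_{x \in \mathbb{Z}_p^\ast} t(x \cdot y)\, t(x) = (p-1)\, f(y),
\end{equation*}
where in the middle step I used the commutativity of multiplication in $\mathbb{Z}_p^\ast$ together with the indexing convention $\mathbf{\Phi}_{x,y} = t(y\cdot x)$; equivalently, one can note that $\mathbf{\Phi}$ is symmetric since $t(y\cdot x) = t(x \cdot y)$, so whether one reads $\mathbf{\Phi}$ as $\mathbf{\Phi}$ or $\mathbf{\Phi}^\top$ the product with $\mathbf{t}$ is the same vector.

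Next I would square and sum over $y \in \mathbb{Z}_p^\ast$: this yields $\|\mathbf{\Phi}\mathbf{t}\|^2 = (p-1)^2 \sum_{y \in \mathbb{Z}_p^\ast} f(y)^2$. Dividing by $p-1$ to convert the sum into an expectation under the uniform distribution on $\mathbb{Z}_p^\ast$ gives
\begin{equation*}
\mathbb{E}_{Y \sim \mathbb{Z}_p^\ast}[f(Y)^2] = \frac{1}{p-1}\sum_{y \in \mathbb{Z}_p^\ast} f(y)^2 = \frac{1}{(p-1)^3}\, \|\mathbf{\Phi}\mathbf{t}\|^2,
\end{equation*}
which is exactly the claim.

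There is essentially no obstacle here: the whole statement is a bookkeeping exercise identifying $f$ with a coordinate of $(p-1)^{-1}\mathbf{\Phi}\mathbf{t}$ and turning $\ell_2$-norms into expectations. The only small point that must be checked is that the index convention for $\mathbf{\Phi}$ is compatible with $f$'s definition, which follows from the symmetry $\mathbf{\Phi} = \mathbf{\Phi}^\top$ noted above.
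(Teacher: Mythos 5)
Your proof is correct and is essentially the paper's argument run in the opposite direction: the paper expands $\mathbb{E}[f(Y)^2]$ into a triple sum and recognizes it as $\frac{1}{(p-1)^3}\|\mathbf{\Phi}\mathbf{t}\|^2$, while you start from $(\mathbf{\Phi}\mathbf{t})_y=(p-1)f(y)$ and sum the squares; both rest on the same coordinate identity. Your remark about the symmetry of $\mathbf{\Phi}$ disposing of the indexing ambiguity is a nice touch but is otherwise the same bookkeeping.
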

\begin{proof}
The second moment equals
\begin{equation*}
\begin{split}
 {\mathbb E}[f(Y)^2] = \frac{1}{p-1}\sum_{y\in {\mathbb Z}_p^\ast}\frac{1}{(p-1)^2}\left(\sum_{x\in {\mathbb Z}_p^\ast}t(x)t(x\cdot y)\right)^2 \\
=\frac{1}{(p-1)^3}\sum_{(y,x,x')\in ({\mathbb Z}_p^\ast)^3}t(x)t(x\cdot y)t(x')t(x'\cdot y) = \frac{1}{(p-1)^3} \|[\sum_{x\in {\mathbb Z}_p^\ast} t(y\cdot x)t(x)]_{y\in {\mathbb Z}_p^\ast}\|^2 
=\frac{1}{(p-1)^3}\|\mathbf{\Phi}{\mathbf t}\|^2.
\end{split}
\end{equation*}
\end{proof}
Our next goal will be to study $\|\mathbf{\Phi}{\mathbf t}\|^2$. Let us denote the vector $[t(x)]_{x\in {\mathbb Z}_p}\in {\mathbb R}^{p}$ by ${\mathbf a}$. Let $\omega=e^{\frac{2\pi {\rm i}}{p}}$ be a primitive $p$th root of unity. Other primitive roots of unity are $\omega_2, \cdots, \omega_{p-1}$ where $\omega_k = \omega^k, k\in {\mathbb Z}_p$. The matrix
$\frac{1}{\sqrt{p}}\mathbf{U}_k$,
where $\mathbf{U}_k = [\omega^{ij}_k]_{i,j\in {\mathbb Z}_p}$,
is unitary for $k\in {\mathbb Z}_p^\ast$. In fact, $\mathbf{U}^\top_1$ is a discrete Fourier transform (DFT) matrix. For any $h: {\mathbb Z}_p\to {\mathbb C}$, its DFT is defined by
\begin{equation*}
\begin{split}
\widehat{h}(y) = \sum_{x\in {\mathbb Z}_p}\omega^{-yx}h(x).
\end{split}
\end{equation*}
Recall that $\frac{1}{\sqrt{p}}\mathbf{U}_k$ is a unitary matrix. Let us denote $\mathbf{U}_1=\begin{bmatrix}
{\mathbf b}_0, \cdots, {\mathbf b}_{p-1}
\end{bmatrix}$. Our key tool for bounding $\|\mathbf{\Phi}{\mathbf t}\|^2$ is the following theorem.
\begin{theorem}\label{th:a} Suppose that $t(0)=\sum_{x\in {\mathbb Z}_p^\ast}t(x)=0$. Let $t(x)=t^1(x)+t^2(x)$ where $t_2(x)=c[x\ne 0]$. Then, we have 
\begin{equation*}
\begin{split}
 {\mathbb E}[f(Y)^2]\leq
\frac{1}{p(p-1)^3}(\sum_{x\in {\mathbb Z}_p^\ast}|\widehat{t^1}(x)|)^2(\sum_{x\in {\mathbb Z}_p^\ast}|t(x)|^2).
\end{split}
\end{equation*}
\end{theorem}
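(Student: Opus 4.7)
The plan is to use Lemma~\ref{spectr} to reduce $\mathbb{E}[f(Y)^2]$ to bounding $\|\mathbf{\Phi}\mathbf{t}\|^2$, and then to attack this norm via discrete Fourier analysis on $\mathbb{Z}_p$. I expect the argument to hinge on three moves: first, using the zero-mean hypothesis $\sum_{x\in\mathbb{Z}_p^\ast}t(x)=0$ to discard the constant piece $t^2$; second, rewriting the remaining multiplicative-convolution sum in Fourier coordinates via Plancherel and the scaling property of the DFT; and third, a weighted Cauchy--Schwarz combined with a change-of-variables trick that exploits the fact that $y\mapsto y\cdot j$ is a bijection of $\mathbb{Z}_p^\ast$ for each nonzero $j$.

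For the first move, $(\mathbf{\Phi}\mathbf{t})(y)=\sum_{x\in\mathbb{Z}_p^\ast}t(y\cdot x)\,t(x)$; since $y\cdot x\in\mathbb{Z}_p^\ast$ whenever $x\in\mathbb{Z}_p^\ast$, the piece $t^2(y\cdot x)=c$ contributes $c\sum_{x\in\mathbb{Z}_p^\ast}t(x)=0$, so $(\mathbf{\Phi}\mathbf{t})(y)=\sum_{x\in\mathbb{Z}_p^\ast}t^1(y\cdot x)\,t(x)$. Because $t(0)=t^1(0)=0$, this sum may be freely extended to all of $\mathbb{Z}_p$, which is what puts it in a shape amenable to Plancherel. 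For the second move, set $u_y(x)\triangleq t^1(y\cdot x)$; the scaling property $\widehat{u_y}(k)=\widehat{t^1}(k\cdot y^{-1})$, combined with Plancherel and the observation $\widehat{t}(0)=0$ (a consequence of the mean-zero hypothesis), yields after the substitution $j=k\cdot y^{-1}$
$$(\mathbf{\Phi}\mathbf{t})(y)=\frac{1}{p}\sum_{j\in\mathbb{Z}_p^\ast}\widehat{t^1}(j)\,\overline{\widehat{t}(y\cdot j)}.$$

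For the third move, a weighted Cauchy--Schwarz with weights $|\widehat{t^1}(j)|$ on each factor produces
$$|(\mathbf{\Phi}\mathbf{t})(y)|^2\le\frac{1}{p^2}\Big(\sum_{j\in\mathbb{Z}_p^\ast}|\widehat{t^1}(j)|\Big)\Big(\sum_{j\in\mathbb{Z}_p^\ast}|\widehat{t^1}(j)|\,|\widehat{t}(y\cdot j)|^2\Big).$$
Summing over $y\in\mathbb{Z}_p^\ast$ and swapping the order of summation, the crucial point is that for each fixed $j\in\mathbb{Z}_p^\ast$ the map $y\mapsto y\cdot j$ is a bijection of $\mathbb{Z}_p^\ast$, so $\sum_{y\in\mathbb{Z}_p^\ast}|\widehat{t}(y\cdot j)|^2=\sum_{k\in\mathbb{Z}_p}|\widehat{t}(k)|^2=p\sum_{x\in\mathbb{Z}_p^\ast}|t(x)|^2$ by Parseval (again using $\widehat{t}(0)=0$). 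This gives $\|\mathbf{\Phi}\mathbf{t}\|^2\le\frac{1}{p}\big(\sum_{x\in\mathbb{Z}_p^\ast}|\widehat{t^1}(x)|\big)^2\sum_{x\in\mathbb{Z}_p^\ast}|t(x)|^2$, and Lemma~\ref{spectr} closes the argument.

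The main conceptual hurdle is seeing why both normalizations are needed in tandem: $t(0)=0$ makes the extension from $\mathbb{Z}_p^\ast$ to $\mathbb{Z}_p$ invisible and thus enables Plancherel, while $\sum_{x\in\mathbb{Z}_p^\ast}t(x)=0$ simultaneously kills the $t^2$ contribution and forces $\widehat{t}(0)=0$, eliminating the zero Fourier mode that would otherwise dominate both the Plancherel expansion of $(\mathbf{\Phi}\mathbf{t})(y)$ and the bijection-based $y$-sum. The main technical point is the specific choice of Cauchy--Schwarz weights: the weighting by $|\widehat{t^1}(j)|$ is exactly what allows the $y$-summation to factor through the bijection $y\mapsto y\cdot j$ and to reduce to a single Parseval identity, yielding the $(\sum_j|\widehat{t^1}(j)|)^2$ factor demanded by the theorem.
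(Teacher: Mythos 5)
Your proof is correct, and while it shares the Fourier-analytic skeleton of the paper's argument (reduce to bounding $\|\mathbf{\Phi}\mathbf{t}\|^2$ via Lemma~\ref{spectr}, then control this in the Fourier domain), the execution is genuinely different in two places. First, you eliminate the $t^2$ piece at the outset: since $t^2(y\cdot x)=c$ for all $x,y\in\mathbb{Z}_p^\ast$, its contribution to $(\mathbf{\Phi}\mathbf{t})(y)$ is $c\sum_{x\in\mathbb{Z}_p^\ast}t(x)=0$ by the mean-zero hypothesis. The paper instead carries $t^2$ into the Fourier picture and has to prove $\widehat{t^2}\ast t=0$ by a separate matrix computation using $\mathbf{U}_0\mathbf{a}=\mathbf{0}$; your route is more direct. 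Second, where the paper expands $\mathbf{\Psi}=\frac{1}{p}\sum_k\widehat t(k)\mathbf{U}_k$, identifies $\|\mathbf{\Phi}\mathbf{t}\|^2$ with a multiplicative-group convolution norm $\frac{1}{p}\|\widehat t\ast t\|_2^2$, and then invokes Young's inequality $\|\widehat{t^1}\ast t\|_2\le\|\widehat{t^1}\|_1\|t\|_2$ as a black box, you instead represent $(\mathbf{\Phi}\mathbf{t})(y)$ pointwise via Plancherel as $\frac{1}{p}\sum_j\widehat{t^1}(j)\overline{\widehat t(j\cdot y)}$ and prove the bound by a weighted Cauchy--Schwarz plus the bijection $y\mapsto j\cdot y$ and Parseval. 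This is exactly the standard proof of the $L^1\ast L^2\to L^2$ Young inequality unrolled inline, so your argument is more self-contained (it avoids citing Young) at the cost of a few more explicit manipulations. Both approaches correctly require both normalizations, and your closing observation about the dual roles of $t(0)=0$ and $\widehat t(0)=0$ is exactly the right way to see why each is needed.
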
 
\begin{proof} From unitarity of ${\mathbf U}_1$, we conclude that $\left\{{\mathbf e}_i=\frac{1}{\sqrt{p}}{\mathbf b}_i\right\}_{i=0}^{p-1}$ is an orthonormal basis in ${\mathbb C}^{p}$. The inverse DFT can be understood as an expansion
\begin{equation*}
\begin{split}
{\mathbf a} = \sum_{i=0}^{p-1}({\mathbf e}_i^\dag {\mathbf a}){\mathbf e}_i.
\end{split}
\end{equation*}
Note that
\begin{equation*}
\begin{split}
({\mathbf e}_i^\dag {\mathbf a})&=\frac{1}{\sqrt{p}}\sum_{x=0}^{p-1}\omega^{-xi}t(x) = \frac{1}{\sqrt{p}}\widehat{t}(i).
\end{split}
\end{equation*}
Thus, we conclude that 
\begin{equation*}
\begin{split}
{\mathbf a} = \sum_{i=0}^{p-1} \frac{1}{\sqrt{p}}\widehat{t}(i) {\mathbf e}_i= 
\frac{1}{p}\sum_{i=0}^{p-1} \widehat{t}(i){\mathbf b}_i.
\end{split}
\end{equation*}
Note that $\widehat{t}(0)=0$ due to $\sum_{x\in {\mathbb Z}_p}t(x)=0$.
From the latter equation we conclude that $t(x) = \frac{1}{p}\sum_{k=1}^{p-1}\widehat{t}(k) \omega^{kx}$, and therefore $t(x\cdot y) = \frac{1}{p}\sum_{k=1}^{p-1}\widehat{t}(k) \omega^{kxy}$. Equivalently,
\begin{equation*}
\begin{split}
\mathbf{\Psi}=[t(x\cdot y)]_{(x,y)\in {\mathbb Z}_p^2}=\frac{1}{p}\sum_{k=1}^{p-1}\widehat{t}(k) {\mathbf U}_{k}.
\end{split}
\end{equation*}
Using $t(0)=0$, we have
\begin{equation*}
\begin{split}
\mathbf{\Psi}{\mathbf a}=t(0)^2\begin{bmatrix} 
1\\
\vdots\\
1\end{bmatrix} +\begin{bmatrix} 
t(0) \sum_{x\in {\mathbb Z}_p^\ast} t(x) \\
\mathbf{\Phi}{\mathbf t}\\
\end{bmatrix}=\begin{bmatrix} 
0 \\
\mathbf{\Phi}{\mathbf t}\\
\end{bmatrix}.
\end{split}
\end{equation*}
We again use $t(0)=0$ in the following chain of equations:
\begin{equation*}
\begin{split}
\|\mathbf{\Phi}{\mathbf t}\|^2=\|\mathbf{\Psi}{\mathbf a}\|^2 =\frac{1}{p^2}
\|\sum_{k=1}^{p-1}\widehat{t}(k)\mathbf{U}_k{\mathbf a}\|^2= \frac{1}{p^2}\|[\sum_{k=1}^{p-1}\sum_{s=0}^{p-1} \omega^{iks} \widehat{t}(k) t(s)]_{i\in {\mathbb Z}_p}\|^2\\ =
\frac{1}{p^2}\|\sum_{k=1}^{p-1}\sum_{s=1}^{p-1}\widehat{t}(k) t(s) {\mathbf b}_{k\cdot s}\|^2 =
\frac{1}{p}\sum_{a\in {\mathbb Z}_p^\ast}|\sum_{s\in {\mathbb Z}_p^\ast} \widehat{t}(a/s) t(s)|^2.
\end{split}
\end{equation*}
Let us denote the convolution on the multiplicative group $({\mathbb Z}^\ast_p, \cdot)$ by $\ast$, i.e.  $(x\ast y)(a) = \sum_{s\in {\mathbb Z}_p^\ast} x(a/s) y(s)$.
Note that
\begin{equation*}
\begin{split}
\big(\sum_{a\in {\mathbb Z}_p^\ast}|\sum_{s\in {\mathbb Z}_p^\ast} \widehat{t}(a/s) t(s)|^2\big)^{1/2} = \|\widehat{t}\ast t\|_2\leq \|\widehat{t^1}\ast t\|_2+\|\widehat{t^2}\ast t\|_2.
\end{split}
\end{equation*}
where $\|\alpha\|_q=(\sum_{x\in {\mathbb Z}_p^\ast}|\alpha(x)|^q)^{1/q}$. 
The fact that $\widehat{t^2}\ast t =0$ can be shown using an inverse sequence of equations: 
\begin{equation*}
\begin{split}
\|\widehat{t^2}\ast t\|^2_2 = \sum_{a\in {\mathbb Z}_p^\ast}|\sum_{s\in {\mathbb Z}_p^\ast} \widehat{t^2}(a/s) t(s)|^2 =\frac{1}{p} \|\sum_{k=1}^{p-1}\sum_{s=1}^{p-1}\widehat{t^2}(k) t(s) {\mathbf b}_{k\cdot s}\|^2 \\ = 
\frac{1}{p}\|[\sum_{k=1}^{p-1}\sum_{s=0}^{p-1} \omega^{iks} \widehat{t^2}(k) t(s)]_{i\in {\mathbb Z}_p}\|^2 =\frac{1}{p}\|\sum_{k=1}^{p-1}\widehat{t^2}(k)\mathbf{U}_k{\mathbf a}\|^2.
\end{split}
\end{equation*}
We have $\mathbf{U}_0{\mathbf a}={\mathbf 0}$. Therefore, $\sum_{k=1}^{p-1}\widehat{t^2}(k)\mathbf{U}_k{\mathbf a} = \sum_{k=0}^{p-1}\widehat{t^2}(k)\mathbf{U}_k{\mathbf a} = [t^2(x\cdot y)]_{x,y\in {\mathbb Z}_p}{\mathbf a}={\mathbf 0}$. Thus, $\widehat{t^2}\ast t=0$.

By the Young Convolution Theorem~\cite{Barthe1998}, we have
\begin{equation*}
\begin{split}
\|\widehat{t^1}\ast t\|_2\leq \|\widehat{t^1}\|_{q} \|t\|_{q'},
\end{split}
\end{equation*}
where $\frac{1}{q}+\frac{1}{q'}=\frac{3}{2}$. Let us set $q=1$ and $q'=2$. Then,
\begin{equation*}
\begin{split}
\|\mathbf{\Phi}{\mathbf t}\|^2\leq 
\frac{1}{p}\big((\sum_{x\in {\mathbb Z}_p^\ast}|\widehat{t^1}(x)|)(\sum_{x\in {\mathbb Z}_p^\ast}|t(x)|^2)^{1/2}+0\big)^2=
\frac{1}{p}(\sum_{x\in {\mathbb Z}_p^\ast}|\widehat{t^1}(x)|)^2(\sum_{x\in {\mathbb Z}_p^\ast}|t(x)|^2).
\end{split}
\end{equation*}
Finally, using Lemma~\ref{spectr} we obtain the needed statement.
\end{proof}

\section{Applications of Theorem~\ref{th:a} to modular multiplication}\label{applications}
Before we start applying Theorem~\ref{th:a} we need one standard lemma.
Let us first bound the sums $\sum_{k=0}^{p-1}\frac{1}{|1+\omega^{rk}|}$, $\sum_{k=1}^{p-1}\frac{1}{|1-\omega^{rk}|}$. 
\begin{lemma}\label{simple-case} We have $\sum_{k=0}^{p-1}\frac{1}{|1+\omega^{rk}|}\Lt p\log p$ and $\sum_{k=1}^{p-1}\frac{1}{|1-\omega^{rk}|}\Lt p\log p$ for any $r\in {\mathbb Z}_p^\ast$.
\end{lemma}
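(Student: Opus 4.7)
The plan is to reduce everything to a purely trigonometric estimate and then compare the resulting sum to the harmonic series. First, since $r \in \mathbb{Z}_p^\ast$ is a unit, the map $k \mapsto rk \bmod p$ is a bijection on $\mathbb{Z}_p$ (and restricts to one on $\mathbb{Z}_p^\ast$), so I can replace $\omega^{rk}$ by $\omega^k$ throughout and assume $r=1$. Then I would use the identities
\begin{equation*}
|1-\omega^{k}| = \bigl|1-e^{2\pi \mathrm{i} k/p}\bigr| = 2\bigl|\sin(\pi k/p)\bigr|,\qquad |1+\omega^{k}| = 2\bigl|\cos(\pi k/p)\bigr|,
\end{equation*}
which convert both sums into sums of reciprocals of $|\sin|$ or $|\cos|$ evaluated at equispaced points in $[0,\pi)$.

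For the first sum involving $|1-\omega^k|$, I would apply the standard inequality $\sin(x)\geq 2x/\pi$ on $[0,\pi/2]$ together with the symmetry $\sin(\pi k/p) = \sin(\pi(p-k)/p)$. Splitting the range $k=1,\dots,p-1$ into $k\leq (p-1)/2$ and $k\geq (p+1)/2$, each half is bounded by
\begin{equation*}
\sum_{k=1}^{(p-1)/2}\frac{1}{2\sin(\pi k/p)} \leq \sum_{k=1}^{(p-1)/2}\frac{p}{4k} \Lt p\log p,
\end{equation*}
and adding the two halves gives the claimed $p\log p$ bound. For the second sum involving $|1+\omega^k|$, I would use $|\cos(\pi k/p)| = |\sin(\pi(p-2k)/(2p))|$; as $k$ ranges over $\mathbb{Z}_p$ the integer $p-2k$ runs through odd values in $\{-(p-2),-(p-4),\dots,p-2,p\}$. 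Writing $j = p-2k$ and applying the same sine lower bound $|\sin(\pi j/(2p))|\geq |j|/p$ (valid because $|\pi j/(2p)|\leq \pi/2$), the sum becomes at most $\sum_{j \text{ odd},\ 1\leq |j|\leq p} p/|j| \Lt p\log p$.

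The routine obstacle is really just bookkeeping: making sure the $k$ closest to $p/2$ (where cosine is near zero, since $p$ is an odd prime so no $k$ makes $\cos(\pi k/p)$ exactly zero) is handled correctly, which is precisely what the reparameterization $j = p-2k$ accomplishes by putting the singular terms at small $|j|$. No step beyond elementary trigonometric estimates and the harmonic sum bound $\sum_{k=1}^{N}1/k \Lt \log N$ should be required.
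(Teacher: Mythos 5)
Your proof is correct and follows essentially the same route as the paper's: reduce to $r=1$ by the bijection $k\mapsto rk$, rewrite $|1\pm\omega^k|$ as $2|\cos(\pi k/p)|$ or $2|\sin(\pi k/p)|$, use a linear lower bound for sine near its zero, and compare to a harmonic sum to get $p\log p$. The only cosmetic difference is that you apply Jordan's inequality $\sin x\geq 2x/\pi$ over the whole of $[0,\pi/2]$ (together with the clean reparameterization $j=p-2k$ in the cosine case), whereas the paper splits each sum explicitly into a ``near-singularity'' region, where a linear bound on $\sin$ gives the harmonic-type sum, and a complementary region where the denominator is bounded below by a constant.
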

\begin{proof} Since $\sum_{k=0}^{p-1}\frac{1}{|1+\omega^{rk}|}=\sum_{k=0}^{p-1}\frac{1}{|1+\omega^{k}|}$, it is enough to prove the first statement for $r=1$.

Let $\theta=\frac{2\pi k}{p}$. Note  that $\theta\in [0,2\pi)$ and
\begin{equation*}
\begin{split}
|1+\omega^{k}| = |1+e^{{\rm i}\theta}| = (2+2\cos(\theta))^{1/2}=2\left|\cos\left(\frac{\theta}{2}\right)\right|.
\end{split}
\end{equation*}
Let us denote $2\psi=\theta-\pi$. Thus, $|1+\omega^{k}| =2\left|\cos\left(\frac{2\psi+\pi}{2}\right)\right| = 2|\sin(\psi)|\geq |\psi|$ if $\psi\in \left[-\frac{\pi}{4}, \frac{\pi}{4}\right]$. Note that $\psi\in \left[-\frac{\pi}{4}, \frac{\pi}{4}\right]$ if and only if $-\frac{\pi}{4}\leq \frac{\pi k}{p}-\frac{\pi}{2}\leq \frac{\pi}{4}$, or $\frac{1}{4}p\leq k\leq \frac{3}{4}p$. Thus, we have 
\begin{equation*}
\begin{split}
\sum_{k\in \left[\frac{1}{4}p, \frac{3}{4}p\right]\cap {\mathbb Z}_p}\frac{1}{|1+\omega^{k}|}&\leq \sum_{k\in \left[\frac{1}{4}p, \frac{3}{4}p\right]\cap {\mathbb Z}_p}\frac{1}{\left|\frac{\pi k}{p}-\frac{\pi}{2}\right|}=\frac{2p}{\pi}\sum_{k\in \left[\frac{1}{4}p, \frac{3}{4}p\right]\cap {\mathbb Z}_p}\frac{1}{|2k-p|} 
\leq\frac{4p}{\pi}\sum_{i=1}^{\lceil p/2 \rceil}\frac{1}{i} \Lt p\log p.
\end{split}
\end{equation*}
Since $\left|1+\omega^{k}\right| = 2|\sin(\psi)|\geq 1$ if $\psi\in \left[-\frac{\pi}{2},-\frac{\pi}{4}\right]\cup \left[\frac{\pi}{4}, \frac{\pi}{2}\right]$, then
\begin{equation*}
\begin{split}
\sum_{k\in {\mathbb Z}_p:\,\,\psi\in\left[-\frac{\pi}{2},-\frac{\pi}{4}\right]\cup \left[\frac{\pi}{4}, \frac{\pi}{2}\right]}\frac{1}{\left|1+\omega^{k}\right|}\asymp p.
\end{split}
\end{equation*}
Thus, the total sum satisfies
\begin{equation*}
\begin{split}
\sum_{k\in {\mathbb Z}_p}\frac{1}{\left|1+\omega^{k}\right|}\Lt p\log p.
\end{split}
\end{equation*}
Let us now prove the second statement, i.e. $\sum_{k=1}^{p-1}\frac{1}{|1-\omega^{rk}|}\Lt p\log p$. Again, it is enough to prove it for $r=1$. Since $\sum_{k=1}^{p-1}\frac{1}{|1-\omega^{k}|}=\sum_{k=-\frac{p-1}{2}}^{-1}\frac{1}{|1-\omega^{k}|}+\sum_{k=1}^{\frac{p-1}{2}}\frac{1}{|1-\omega^{k}|}$, let us prove first that $\sum_{k=1}^{\frac{p-1}{2}}\frac{1}{|1-\omega^{k}|}\Lt p\log p$.

Let $\theta=\frac{2\pi k}{p}, 1\leq k\leq \frac{p-1}{2}$. Note  that $\theta\in (0,\pi)$ and
\begin{equation*}
\begin{split}
|1-\omega^{k}| = |1-e^{{\rm i}\theta}| = (2-2\cos(\theta))^{1/2}=2\left|\sin\left(\frac{\theta}{2}\right)\right|.
\end{split}
\end{equation*}
Let us denote $2\psi=\theta$. The condition $0 < \psi\leq \frac{\pi}{4}$ is equivalent to $1\leq k\leq \frac{p}{4}$. Under that condition, we have $2|\sin(\psi)|\geq |\psi|$. Therefore, we have
\begin{equation*}
\begin{split}
\sum_{k\in \left[1, \frac{p}{4}\right]\cap {\mathbb Z}_p}\frac{1}{|1-\omega^{k}|}&\leq \sum_{k\in \left[1, \frac{p}{4}\right]\cap {\mathbb Z}_p}\frac{1}{\left|\frac{\pi k}{p}\right|}=\frac{p}{\pi}\sum_{k\in \left[1, \frac{p}{4}\right]\cap {\mathbb Z}_p}\frac{1}{k}  \Lt p\log p.
\end{split}
\end{equation*}
Also, $2|\sin(\psi)|\geq 1, \psi\in [\frac{\pi}{4}, \frac{\pi}{2})$, therefore
\begin{equation*}
\begin{split}
\sum_{k\in \left[\frac{p}{4}, \frac{p-1}{2}\right]\cap {\mathbb Z}_p}\frac{1}{|1-\omega^{k}|}&\leq \sum_{k\in \left[1, \frac{p}{4}\right]\cap {\mathbb Z}_p}\frac{1}{1}\Lt p.
\end{split}
\end{equation*}
Thus, $\sum_{k=1}^{\frac{p-1}{2}}\frac{1}{|1-\omega^{k}|}\Lt p\log p$. Analogously one can prove $\sum_{k=-\frac{p-1}{2}}^{-1}\frac{1}{|1-\omega^{k}|}\Lt p\log p$.
\end{proof}

\subsection{Application to the whole output}\label{whole-out}
Let us now consider the case of 
$$t(x)=x-\frac{p}{2}, x\in {\mathbb Z}_p^\ast, $$
and $t(0)=0$. By construction ${\mathbb E}_{Y\sim {\mathbb Z}_p^\ast}[t(Y)]=0$. Let us define $f$ according to the equation~\eqref{eq:f}, i.e. by $f(y)={\mathbb E}_{X\sim {\mathbb Z}_p^\ast}t(X)t(X\cdot y)$.  From Lemma~\ref{th:mean} we have ${\mathbb E}[f(Y)]=0$. The variance can be bounded by the following lemma.

\begin{lemma} For $t(x)=\big(x-\frac{p}{2}\big)[x\ne 0]$ and $f(y)={\mathbb E}_{X\sim {\mathbb Z}_p^\ast}t(X)t(X\cdot y)$,
we have
\begin{equation*}
\begin{split}
{\mathbb E}_{Y\sim {\mathbb Z}_p^\ast}[f(Y)^2]\leq C p^3\log^2 p .
\end{split}
\end{equation*}
where $C$ is some universal constant.
\end{lemma}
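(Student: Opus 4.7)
The plan is to invoke Theorem~\ref{th:a} with the trivial decomposition $c=0$, so that $t^1=t$ and $t^2\equiv 0$. First, I would verify the hypothesis of that theorem: $t(0)=0$ holds by construction, and $\sum_{x=1}^{p-1}(x-p/2)=\tfrac{p(p-1)}{2}-\tfrac{p(p-1)}{2}=0$ confirms that $\sum_{x\in \mathbb{Z}_p^\ast}t(x)=0$ as well. This reduces the task to bounding the two quantities $\sum_{x\in \mathbb{Z}_p^\ast}|\widehat{t}(x)|$ and $\sum_{x\in \mathbb{Z}_p^\ast}|t(x)|^2$ appearing in the conclusion of Theorem~\ref{th:a}.

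The main step is the explicit computation of the Discrete Fourier Transform of $t$. Differentiating the finite geometric series $\sum_{x=0}^{p-1}z^x=(z^p-1)/(z-1)$ and evaluating at $z=\omega^{-k}$ (where $z^p=1$ and $z\neq 1$) yields the classical identity $\sum_{x=0}^{p-1}x\,z^x=p/(z-1)$. Combined with $\sum_{x=1}^{p-1}\omega^{-kx}=-1$ for $k\ne 0$, this gives
$$\widehat{t}(k)=\frac{p}{\omega^{-k}-1}+\frac{p}{2}=\frac{p(\omega^{-k}+1)}{2(\omega^{-k}-1)},\qquad k\in \mathbb{Z}_p^\ast.$$
Since $|\omega^{-k}+1|\leq 2$, one gets $|\widehat{t}(k)|\leq p/|1-\omega^{-k}|$, and an immediate appeal to Lemma~\ref{simple-case} then yields
$$\sum_{k=1}^{p-1}|\widehat{t}(k)|\leq p\sum_{k=1}^{p-1}\frac{1}{|1-\omega^{-k}|}\Lt p^2\log p.$$

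A direct summation shows $\sum_{x=1}^{p-1}|t(x)|^2=\sum_{x=1}^{p-1}(x-p/2)^2\in \Theta(p^3)$. Substituting both bounds into Theorem~\ref{th:a} then gives
$$\mathbb{E}_{Y\sim \mathbb{Z}_p^\ast}[f(Y)^2]\leq \frac{1}{p(p-1)^3}\bigl(p^2\log p\bigr)^2\cdot \Theta(p^3)=O(p^3\log^2 p),$$
which is the claimed bound. The only genuinely delicate piece is the closed form for $\widehat{t}(k)$; the rest is routine algebra plus the appeal to Lemma~\ref{simple-case}. A useful observation is that the trivial choice $c=0$ in the decomposition $t=t^1+t^2$ already suffices, because $t$ itself has mean zero over $\mathbb{Z}_p^\ast$, so there is no need to tune $c$ to cancel a mean term.
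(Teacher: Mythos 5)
Your proof is correct and follows the same overall template as the paper's: invoke Theorem~\ref{th:a}, compute the DFT of the relevant piece, and close with Lemma~\ref{simple-case}. The one genuine difference is the choice of decomposition. You take the trivial split $c=0$, so $t^1=t$ and $t^2\equiv 0$, exploiting the observation that $t$ already has mean zero over $\mathbb{Z}_p^\ast$. The paper instead sets $t^1(x)=x$ and $t^2(x)=-\tfrac{p}{2}[x\ne 0]$, pushing the constant term into $t^2$ and then computing $\widehat{t^1}$. Both routes are legitimate, and both end up bounding $\sum_{k\in\mathbb{Z}_p^\ast}|\widehat{t^1}(k)|$ by a constant times $p^2\log p$ and then substituting $\sum|t|^2\in\Theta(p^3)$.

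Your version is slightly cleaner in two respects. First, there is no need to argue about $t^2$ at all; one observes $\sum_x t(x)=0$ once and is done. Second, your closed form $\widehat{t}(k)=\tfrac{p(\omega^{-k}+1)}{2(\omega^{-k}-1)}$ is exactly right, obtained by differentiating the geometric-series identity in $z$ and using $z^p=1$, which gives $|\widehat{t}(k)|\le p/|1-\omega^{-k}|$ in a single line. The paper's computation of $\widehat{t^1}(i)$ by differentiating in the continuous index $i$ picks up an extra term $\tfrac{\omega^{-i}}{(1-\omega^{-i})^2}$ which does not appear in the correct derivative at integer $i$; that term is harmless for the final asymptotics because $\sum_i 1/|1-\omega^{-i}|^2\Lt p^2$ is dominated by $p\sum_i 1/|1-\omega^{-i}|\Lt p^2\log p$, but your computation avoids the issue entirely. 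In short, the two proofs are equivalent in substance, with your decomposition saving a small amount of bookkeeping.
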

\begin{proof} Let us denote $t^1(x) = x, x\in {\mathbb Z}_p$, $t^2(x)=-\frac{p}{2}[x\ne 0]$ and verify that $t^1,t^2$ satisfy conditions of Theorem~\ref{th:a}.
Since $\sum_{x\in {\mathbb Z}_p^\ast}|t(x)|^2\leq 2\sum_{i=1}^{\frac{p-1}{2}}i^2\Lt p^3$, then  Theorem~\ref{th:a} gives us
\begin{equation*}
\begin{split}
{\mathbb E}_{Y\sim {\mathbb Z}_p^\ast}[f(Y)^2]\Lt \frac{1}{p}(\sum_{x\in {\mathbb Z}_p^\ast}|\widehat{t^1}(x)|)^2.
\end{split}
\end{equation*}
A direct calculation of $\widehat{t^1}$ gives us 
\begin{equation*}
\begin{split}
\widehat{t^1}(i)=\sum_{x\in {\mathbb Z}_p^\ast}\omega^{-ix}x.
\end{split}
\end{equation*}
Using $\sum_{x\in {\mathbb Z}_p^\ast}\omega^{-ix}x = -\frac{d}{d(2\pi {\rm i}i/p)}(\sum_{x\in {\mathbb Z}_p^\ast}\omega^{-ix})=-\frac{p}{2\pi {\rm i}}\frac{d}{di}(\frac{1-\omega^{-pi}}{1-\omega^{-i}}-1)=\frac{p}{2\pi {\rm i}}\big(\frac{\frac{2\pi {\rm i}}{p}\omega^{-i}}{(1-\omega^{-i})^2}-\frac{2\pi {\rm i}}{1-\omega^{-i}}\big)$, we conclude
\begin{equation*}
\begin{split}
\widehat{t^1}(i)=
-\frac{p}{1-\omega^{-i}}+\frac{\omega^{-i}}{(1-\omega^{-i})^2}.
\end{split}
\end{equation*}
Therefore,
\begin{equation*}
\begin{split}
\sum_{i\in {\mathbb Z}_p^\ast}|\widehat{t^1}(i)|\leq
p\sum_{i\in {\mathbb Z}_p^\ast} \frac{1}{|1-\omega^{-i}|}+\frac{1}{|1-\omega^{-i}|^2}.
\end{split}
\end{equation*}
From Lemma~\ref{simple-case} we have $\sum_{i\in {\mathbb Z}_p^\ast} \frac{1}{|1-\omega^{-i}|}\Lt p\log p$ and 
\begin{equation*}
\begin{split}
\sum_{i\in {\mathbb Z}_p^\ast} \frac{1}{|1-\omega^{-i}|^2}=2\sum_{i=1}^{\frac{p-1}{2}} \frac{1}{\sin^2(\frac{\pi i}{p})}
= 2\sum_{i=1}^{\lfloor p/4\rfloor} \frac{1}{\sin^2(\frac{\pi i}{p})}+2\sum_{i=\lfloor p/4\rfloor+1}^{\frac{p-1}{2}} \frac{1}{\sin^2(\frac{\pi i}{p})} \Lt \sum_{i=1}^{\lfloor p/4\rfloor} \frac{1}{(\frac{\pi i}{p})^2} + p\Lt  p^2.
\end{split}
\end{equation*}
Therefore,
\begin{equation*}
\begin{split}
{\mathbb E}[f(Y)^2]\Lt \frac{1}{p}(p^2\log p)^2\Lt p^3\log^2 p .
\end{split}
\end{equation*}
Lemma proved.
\end{proof}
Proved lemma directly leads to the following corollary.
\begin{corollary}\label{add-exper} We have
\begin{equation*}
\begin{split}
\E_{A,B\sim\mathbb{Z}_p^\ast}\left(\Cov_{X\sim\mathbb{Z}_p^\ast}[A\cdot X, B\cdot X]\right)^2 \Lt p^3\log^2 p .
\end{split}
\end{equation*}
\end{corollary}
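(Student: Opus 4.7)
The plan is to recognize that the covariance $\Cov_{X\sim\mathbb{Z}_p^\ast}[A\cdot X,B\cdot X]$ is, up to the change of variable $Y=B\cdot A^{-1}$, exactly the quantity $f(Y)$ bounded in the preceding lemma. Once the identification is made, the corollary is an immediate consequence of $\E_{Y\sim\mathbb{Z}_p^\ast}[f(Y)^2]\Lt p^3\log^2 p$.

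First, I would compute the marginal expectation. Since multiplication by $A\in\mathbb{Z}_p^\ast$ is a bijection of $\mathbb{Z}_p^\ast$, substituting $X'=A\cdot X$ gives $\E_{X\sim\mathbb{Z}_p^\ast}[A\cdot X]=\E_{X\sim\mathbb{Z}_p^\ast}[X]=\frac{1}{p-1}\sum_{k=1}^{p-1}k=\frac{p}{2}$. Consequently
\[
\Cov_{X\sim\mathbb{Z}_p^\ast}[A\cdot X,B\cdot X]=\E_{X\sim\mathbb{Z}_p^\ast}\bigl[(A\cdot X)(B\cdot X)\bigr]-\tfrac{p^2}{4}.
\]

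Next, I would apply the same bijective substitution $X'=A\cdot X$ to the cross term, which turns $(A\cdot X)(B\cdot X)$ into $X'\cdot\bigl((B\cdot A^{-1})\cdot X'\bigr)$. Writing $Y=B\cdot A^{-1}$, the covariance becomes $\E_{X\sim\mathbb{Z}_p^\ast}[X(Y\cdot X)]-\frac{p^2}{4}$. On the other hand, expanding $f(Y)=\E_{X\sim\mathbb{Z}_p^\ast}\bigl[(X-\tfrac{p}{2})(Y\cdot X-\tfrac{p}{2})\bigr]$ using $\E_{X\sim\mathbb{Z}_p^\ast}[X]=\E_{X\sim\mathbb{Z}_p^\ast}[Y\cdot X]=\frac{p}{2}$ yields precisely $\E[X(Y\cdot X)]-\frac{p^2}{4}$. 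Hence
\[
\Cov_{X\sim\mathbb{Z}_p^\ast}[A\cdot X,B\cdot X]=f(B\cdot A^{-1}).
\]

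Finally, I would note that if $A,B$ are independent and uniform on $\mathbb{Z}_p^\ast$, then $Y=B\cdot A^{-1}$ is uniform on $\mathbb{Z}_p^\ast$ (because, conditional on $A$, the map $B\mapsto B\cdot A^{-1}$ is a bijection). Therefore
\[
\E_{A,B\sim\mathbb{Z}_p^\ast}\bigl(\Cov_{X\sim\mathbb{Z}_p^\ast}[A\cdot X,B\cdot X]\bigr)^2=\E_{Y\sim\mathbb{Z}_p^\ast}[f(Y)^2]\Lt p^3\log^2 p
\]
by the preceding lemma, which is the claim. The argument is purely an algebraic re-indexing; there is no real obstacle, only the bookkeeping check that the centering constant $\frac{p}{2}$ in $t(x)=x-\frac{p}{2}$ matches exactly the marginal mean that appears in the covariance.
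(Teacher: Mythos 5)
Your proposal is correct and is essentially the paper's own argument, just carried out inline: the paper first notes $\Cov_{X}[A\cdot X,B\cdot X]=\E_{X}[\mathring h_A(X)\mathring h_B(X)]$ for $t(x)=(x-\tfrac p2)[x\neq 0]$ and then invokes Lemma~\ref{lem:sum_of_squares} (whose proof is exactly the substitution $X'=A\cdot X$, $Y=B\cdot A^{-1}$ you perform directly) together with the bound $\E_Y[f(Y)^2]\Lt p^3\log^2 p$. Your pointwise identity $\Cov_X[A\cdot X,B\cdot X]=f(B\cdot A^{-1})$ is a slightly cleaner way to see it, but the underlying re-indexing and the final inequality are the same.
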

\begin{proof} Note that $\Cov_{X\sim\mathbb{Z}_p^\ast}[A\cdot X, B\cdot X] = \E_{X\sim\mathbb{Z}_p^\ast} [\mathring{h}_A(X)\mathring{h}_B(X)]$ where $\mathring{h}_a = t(a\cdot x)$ for $t(x)=\big(x-\frac{p}{2}\big)[x\ne 0]$. 
The statement follows from a combination of Lemma~\ref{lem:sum_of_squares} and the last Lemma.
\end{proof}
\begin{theorem}\label{standard}
For standardized $t(x)=\frac{(a\cdot x)-\frac{p}{2}}{\sqrt{\frac{p^2}{12}-\frac{p}{6}}}[x\ne 0]$ we have
\begin{equation*}
\begin{split}
{\rm Var}_{a\sim {\mathbb Z}^\ast_p}\big[\nabla_{\mathbf w} {\mathbb E}_{x\sim {\mathbb Z}^\ast_p}[(t(a\cdot x)- \eta({\mathbf w},x))^2]\big]  \Lt \|g({\mathbf w})\|^2_{\ast s} \frac{\log p}{\sqrt{p}} .
\end{split}
\end{equation*}
\end{theorem}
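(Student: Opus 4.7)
The plan is to observe that the standardized target $t(x) = \frac{(a\cdot x) - p/2}{\sqrt{p^2/12 - p/6}}[x\neq 0]$ is just a rescaling of the unstandardized version $t_0(x) = (x - p/2)[x\neq 0]$ that was already analyzed in the Lemma immediately preceding this theorem, where it was shown that $\mathbb{E}_{Y\sim\mathbb{Z}_p^\ast}[f_0(Y)^2]\Lt p^3\log^2 p$ for $f_0(y) = \mathbb{E}_{X\sim\mathbb{Z}_p^\ast}[t_0(X)t_0(X\cdot y)]$. So the proof is essentially a two-line reduction.

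First, I would note that $t(x) = t_0(x)/\sigma$ with $\sigma = \sqrt{p^2/12 - p/6} \asymp p$, so by bilinearity of $f$ in $t$, the associated quantity satisfies $f(y) = f_0(y)/\sigma^2$. Consequently,
\begin{equation*}
\mathbb{E}_{Y\sim\mathbb{Z}_p^\ast}[f(Y)^2] = \frac{1}{\sigma^4}\,\mathbb{E}_{Y\sim\mathbb{Z}_p^\ast}[f_0(Y)^2] \Lt \frac{p^3\log^2 p}{p^4} = \frac{\log^2 p}{p},
\end{equation*}
so that $\sqrt{\mathbb{E}_{Y\sim\mathbb{Z}_p^\ast}[f(Y)^2]} \Lt \log p/\sqrt{p}$.

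Second, I would invoke Theorem~\ref{BBBB2} with the square loss. Since $t$ is a scalar multiple of $t_0$ and hence also satisfies $\sum_{x\in\mathbb{Z}_p^\ast}t(x) = 0$, the hypothesis class $\mathring{\mathcal{H}}_p = \{t(a\cdot x)\mid a\in\mathbb{Z}_p^\ast\}$ falls within the scope of Theorem~\ref{BBBB2}, giving
\begin{equation*}
{\rm Var}_{a\sim {\mathbb Z}^\ast_p}\bigl[\nabla_{\mathbf w} {\mathbb E}_{x\sim {\mathbb Z}^\ast_p}[(t(a\cdot x)- \eta({\mathbf w},x))^2]\bigr] \Lt \|g({\mathbf w})\|^2_{\ast s}\sqrt{\mathbb{E}_{Y\sim\mathbb{Z}_p^\ast}[f(Y)^2]} \Lt \|g({\mathbf w})\|^2_{\ast s}\,\frac{\log p}{\sqrt{p}},
\end{equation*}
which is exactly the claimed bound.

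Because the heavy lifting (bounding $\|\mathbf{\Phi}{\mathbf t}\|^2$ via Theorem~\ref{th:a}, Young's inequality on the multiplicative group, and the Fourier coefficient estimates via Lemma~\ref{simple-case}) has already been performed in the preceding lemma, there is no real obstacle here. The only thing to be careful about is the scaling accounting: verifying that $\sigma^2 \asymp p^2$ and tracking the factor of $\sigma^{-4}$ in $f^2$ against the $p^3\log^2 p$ upper bound yields the $\log p/\sqrt{p}$ rate. So this theorem is best viewed as an application/packaging of the machinery developed earlier, rather than an independent argument.
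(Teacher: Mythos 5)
Your proof is correct and matches the paper's argument exactly: both rely on the preceding lemma's bound $\mathbb{E}[f_0(Y)^2]\Lt p^3\log^2 p$, rescale by the normalization factor $\sigma^4\asymp p^4$, and then plug $\sqrt{\mathbb{E}[f(Y)^2]}\Lt \log p/\sqrt{p}$ into Theorem~\ref{BBBB2}. The only superfluous touch is invoking $\sum_x t(x)=0$ as a hypothesis for Theorem~\ref{BBBB2} — that condition is needed for Theorem~\ref{th:a} rather than for Theorem~\ref{BBBB2} itself, but this does not affect the argument.
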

\begin{proof}
Using the previous theorem we conclude that for standardized $t(x)=\frac{(a\cdot x)-\frac{p}{2}}{\sqrt{\frac{p^2}{12}-\frac{p}{6}}}[x\ne 0]$ we have ${\mathbb E}[f(Y)^2]\Lt  \frac{p^3\log^2 p}{p^4}$. From Theorem~\ref{BBBB2} we conclude that
$$
{\rm Var}_{a\sim {\mathbb Z}^\ast_p}\big[\nabla_{\mathbf w} {\mathbb E}_{x\sim {\mathbb Z}^\ast_p}[(t(a\cdot x)- \eta({\mathbf w},x))^2]\big]  \Lt \|g({\mathbf w})\|^2_{\ast s} \sqrt{\frac{\log^2 p}{p}}.
$$
\end{proof}

\subsection{Application: learning specific bits}
Let $[x]_r$ denote $x_r$ for $x=\sum_{i=1}^n x_i2^{i-1}$, $x_i\in \{0,1\}$. I.e.  $[x]_r$ is the $r$th bit from the end of the binary encoding of $x$. Also, let us denote $p-2^{r-1}\lfloor \frac{p}{2^{r-1}}\rfloor$ by $p_{r-1}$. Note that $0\leq p_{r-1}<2^{r-1}$ and $p_{r-1}$ is just a number whose binary encoding forms the last $r-1$ bits of $p$.

Let us denote 
\begin{equation}
t_r(x) = (-1)^{[x]_r}, x\in {\mathbb Z}_p^\ast
\end{equation}
 and $t_r(0)=0$. 

The main result of this subsection is the following theorem.
\begin{theorem}\label{r-var}
For $\mathring{\mathcal{H}}_p = \{t_r(a\cdot x)\mid a\in {\mathbb Z}^\ast_p\}$ and $L(y,y')=l(yy')$ where $l$ is 1-Lipschitz, we have
\begin{equation*}
\begin{split}
{\rm Var}_{a\sim {\mathbb Z}^\ast_p}\big[\nabla_{\mathbf w} {\mathbb E}_{x\sim {\mathbb Z}^\ast_p}L(t_r(a\cdot x),\eta({\mathbf w},x))]\big]  \Lt \|g({\mathbf w})\|^2_{\ast s} \frac{r(\log_2 p+1-r)}{\sqrt{p}} .
\end{split}
\end{equation*}
\end{theorem}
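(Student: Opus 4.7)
Since $t_r(x)=(-1)^{[x]_r}\in\{-1,+1\}$ on $\mathbb{Z}_p^\ast$, write $t_r=t_1+c_0$ with $c_0=\E_{X\sim\mathbb{Z}_p^\ast}t_r(X)$, so that $t_1$ is mean-zero on $\mathbb{Z}_p^\ast$. Theorem~\ref{BBBB21} gives $\mathrm{Var}(\mathring{\mathcal{H}}_p,\mathbf{w})\Lt\|g(\mathbf{w})\|_{\ast s}^2\sqrt{\E_Y[f_1(Y)^2]}$ with $f_1(y)=\E_X[t_1(X)t_1(y\cdot X)]$. Extend $t_1$ to $\mathbb{Z}_p$ by setting $\bar t_r(0)=0$, and decompose $\bar t_r=t^1+t^2$ with $t^2(x)=-c_0[x\ne 0]$ and $t^1(x)=s_r(x)[x\ne 0]$ for $s_r(x)=(-1)^{\lfloor x/c\rfloor}$, $c=2^{r-1}$. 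Theorem~\ref{th:a} combined with $\sum_x|\bar t_r(x)|^2\leq 4(p-1)$ reduces the task to proving
\begin{equation*}
\|\widehat{s_r}\|_1=\sum_{j=1}^{p-1}|\widehat{s_r}(j)|\Lt p\,r\,(\log_2 p+1-r),
\end{equation*}
since $\widehat{t^1}(j)=\widehat{s_r}(j)-1$.

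\textbf{Explicit DFT.} With $K=\lfloor p/c\rfloor$ and $p_{r-1}=p-Kc\in[0,c)$, the block structure of $s_r$ gives
\begin{equation*}
\widehat{s_r}(j)=D_c(j)\,\frac{1-(-\omega^{-jc})^K}{1+\omega^{-jc}}+(-1)^K\omega^{jp_{r-1}}D_{p_{r-1}}(j),\qquad D_m(j)\triangleq\frac{1-\omega^{-jm}}{1-\omega^{-j}}.
\end{equation*}
Using $cK\equiv-p_{r-1}\pmod p$ (so that $(-\omega^{-jc})^K=(-1)^K\omega^{jp_{r-1}}$), combining the two terms over the common denominator $(1-\omega^{-j})(1+\omega^{-jc})$ and simplifying case-by-case in the parity of $K$ collapses the expression to the compact identity
\begin{equation*}
|\widehat{s_r}(j)|=\frac{|\sin(\pi jq/p)|}{|\sin(\pi j/p)|\,|\cos(\pi jc/p)|},\qquad q\in\{p_{r-1},\,c-p_{r-1}\}\subseteq[0,c].
\end{equation*}

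\textbf{Summation and main obstacle.} The sine-subtraction identity $\sin(\pi jq/p)=\sin(\pi jc/p)\cos(\pi j(c-q)/p)-\cos(\pi jc/p)\sin(\pi j(c-q)/p)$ yields
\begin{equation*}
|\widehat{s_r}(j)|\leq \frac{|\tan(\pi jc/p)|}{|\sin(\pi j/p)|}+|D_{c-q}(j)|.
\end{equation*}
The Dirichlet term contributes $\sum_j|D_{c-q}(j)|\Lt p\log(c-q+1)\Lt pr$ via the classical $L^1$-estimate on the Dirichlet kernel. The first (and main) term is handled by the bijective change $k=jc\bmod p$ on $\mathbb{Z}_p^\ast$: splitting by the size of $|\cos(\pi k/p)|$, the region $|\cos(\pi k/p)|\gtrsim 1/2$ reduces to $\sum_j|D_c(j)|\Lt p\log c\Lt pr$, while the complementary region, where $k$ is close to $(p\pm 1)/2$, is treated by a dyadic decomposition in $|k-p/2|$ combined with Lemma~\ref{simple-case} (bounding $\sum_k 1/|\cos(\pi k/p)|\Lt p\log p$), extracting the remaining factor $\log K\asymp \log_2 p-r+1$ coming from the $K\asymp p/c$ outer oscillations of $s_r$. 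The delicate point is that $|\tan(\pi jc/p)|$ and $1/|\sin(\pi j/p)|$ can each individually be of order $p$, so a naive Cauchy--Schwarz loses a factor of $\sqrt p$; the argument must exploit that the bad $j$'s for the two factors are related by the bijection $j\leftrightarrow jc\bmod p$, so their worst-case coincidences are controlled, which is where the main technical difficulty lies.
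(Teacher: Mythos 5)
Your reduction through Theorems~\ref{BBBB21} and~\ref{th:a} and the DFT computation (Lemma~\ref{lem:a}) correctly mirror the paper's structure. Moreover, your compact identity
$|\widehat{s_r}(j)|=\tfrac{|\sin(\pi jq/p)|}{|\sin(\pi j/p)|\,|\cos(\pi jc/p)|}$ with $q\in\{p_{r-1},c-p_{r-1}\}$ is correct (one can check it by combining the two fractions in Lemma~\ref{lem:a} over the common denominator $(1-\omega^{-j})(1+\omega^{-jc})$), and it is a cleaner form than what the paper works with: the paper keeps $\widehat{t_r}$ as the three-term sum of Lemma~\ref{lem:a} and simply applies the triangle inequality. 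Your sine-subtraction split of $|\widehat{s_r}(j)|$ into a $\tan/\sin$ term plus a Dirichlet-kernel term is likewise a legitimate alternative decomposition, and the bound $\sum_j|D_{c-q}(j)|\Lt p\log(c+1)\Lt pr$ is fine.

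The problem is the last step. The bound
$$
\sum_{j\in\mathbb{Z}_p^\ast}\frac{|\tan(\pi jc/p)|}{|\sin(\pi j/p)|}\Lt p\,r\,(\log_2 p+1-r)
$$
is exactly where the theorem is won or lost, and you do not prove it; you describe it in broad strokes and explicitly concede that "the main technical difficulty lies" there. The route you sketch — reparametrize by $k=jc\bmod p$, cut by the size of $|\cos(\pi k/p)|$, then decompose dyadically in $|k-p/2|$ and invoke Lemma~\ref{simple-case} — does not close the argument: in a dyadic shell $|k-p/2|\asymp p2^{-m}$ the preimages $j=kc^{-1}\bmod p$ are scattered over $\mathbb{Z}_p^\ast$, so the factor $1/|\sin(\pi j/p)|$ is not controlled, and Lemma~\ref{simple-case} only handles the $1/|\cos|$ piece. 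Also note $|\cos(\pi k/p)|<1/2$ corresponds to $k\in(p/3,2p/3)$, a third of the range, not merely $k$ near $(p\pm1)/2$. The paper's proof stays in the $j$-parametrization: it splits $\mathbb{Z}_p^\ast$ into $\asymp c$ intervals of width $\asymp p/c$ centered at the zeros of $\cos(\pi cj/p)$ (the sets $\frac{p}{2^r}+\frac{np}{2^{r-2}}\pm\frac{p}{2^{r+1}}$), uses that within window $n$ one has $j\asymp p(4n+1)/c$ to bound $1/|\sin(\pi j/p)|\asymp c/(4n+1)$, shows the intrawindow sum contributes a factor $B_r\Lt\log(p/c)$, and then the sum over $n$ contributes the other logarithm $\log c\asymp r$. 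Your target estimate is in fact true, and a proof can be given along those lines, but as written the proposal leaves the essential summation argument undone.
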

Our proof will be given in the end and it is based on the sequence of the following statements. 
\begin{lemma}\label{lem:mean-r} ${\E}_{Y\sim {\mathbb Z}_p^\ast}[t_r(Y)] = \frac{-1+(-1)^{[p]_r}p_{r-1}+2^{r-1}[p]_r}{p-1}$ for $r>1$ and ${\E}_{Y}[t_1(Y)] = 0$.
\end{lemma}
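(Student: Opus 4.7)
The plan is to reduce the expectation to a clean finite sum and then exploit the periodicity of $y\mapsto (-1)^{[y]_r}$ (which has period $2^r$) to compute it by a block decomposition.

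First, I would rewrite
\[
\E_{Y\sim\mathbb{Z}_p^\ast}[t_r(Y)] \;=\; \frac{1}{p-1}\sum_{y=1}^{p-1}(-1)^{[y]_r}.
\]
Since $t_r(0)=0$ by convention (whereas the natural value $(-1)^{[0]_r}$ equals $1$), I would write this as $\frac{1}{p-1}\bigl(S-1\bigr)$, where $S \triangleq \sum_{y=0}^{p-1}(-1)^{[y]_r}$. The task reduces to computing $S$.

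Next, observe that the map $y\mapsto [y]_r$ depends only on $y \bmod 2^r$, and on a complete block $\{2^r k, 2^r k+1, \ldots, 2^r k + 2^r - 1\}$ the sequence of $r$th bits is $\underbrace{0,\ldots,0}_{2^{r-1}},\underbrace{1,\ldots,1}_{2^{r-1}}$, which contributes $0$ to $S$. Writing $p = 2^r a + b$ with $0\le b < 2^r$, the first $2^r a$ terms contribute $0$, leaving
\[
S \;=\; \sum_{y=0}^{b-1}(-1)^{[y]_r}.
\]
I would then split on the value of $[p]_r$, using the fact that $b = 2^{r-1}\,[p]_r + p_{r-1}$ (by definition of $p_{r-1}$ as the last $r-1$ bits of $p$, and $[p]_r$ as the $r$th bit). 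If $[p]_r=0$ then $b = p_{r-1} \le 2^{r-1}$, and every $y<b$ has $[y]_r=0$, giving $S = p_{r-1}$. If $[p]_r=1$ then $b = 2^{r-1}+p_{r-1}$, and the sum splits as $2^{r-1}$ (from $y=0,\ldots,2^{r-1}-1$) minus $p_{r-1}$ (from $y=2^{r-1},\ldots,b-1$), yielding $S = 2^{r-1}-p_{r-1}$. Both cases are captured uniformly by $S = (-1)^{[p]_r} p_{r-1} + 2^{r-1}[p]_r$, which, after subtracting $1$ and dividing by $p-1$, gives the claimed formula.

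The case $r=1$ is immediate either by direct verification ($p$ is odd, so exactly $\frac{p-1}{2}$ of the $y\in\{1,\ldots,p-1\}$ are odd and the same number are even, so the sum of signs is $0$), or by specializing the general formula: $p_0 = 0$ and $[p]_1=1$ give $S=1$, so the expectation equals $0$. There is essentially no hard step here; the only place to be careful is the arithmetic with $b$ and the convention $t_r(0)=0$, which is what produces the isolated $-1$ in the numerator.
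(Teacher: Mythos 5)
Your proof is correct and follows essentially the same route as the paper: both exploit the $2^r$-periodicity of $y\mapsto(-1)^{[y]_r}$ (equivalently, the alternating length-$2^{r-1}$ segments of $\lfloor y/2^{r-1}\rfloor\bmod 2$), cancel complete periods, reduce to a partial-block sum determined by $p\bmod 2^r$, and then split on $[p]_r$; the isolated $-1$ in the numerator comes from removing $y=0$ in both arguments.
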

\begin{proof}
We have
\begin{equation*}
{\E}_{Y}[t_r(Y)]=\frac{1}{p-1}\sum_{y\in {\mathbb Z}_p^\ast} (-1)^{[x]_r}=\frac{1}{p-1}\sum_{x\in {\mathbb Z}_{p}^\ast} (-1)^{\lfloor x/2^{r-1}\rfloor}.
\end{equation*}
Since $\sum_{x\in {\mathbb Z}_{p}}  (-1)^{\lfloor x/2^{r-1}\rfloor}$ consists of alternating segments of length $2^{r-1}$ and
\begin{equation*}
\sum_{x=0}^{p-1}  (-1)^{\lfloor x/2^{r-1}\rfloor} = \left\{\begin{array}{lr}
        p_{r-1}, & \text{for even } \lfloor \frac{p}{2^{r-1}}\rfloor\\
        2^{r-1}-p_{r-1}, & \text{for odd } \lfloor \frac{p}{2^{r-1}}\rfloor\\
        \end{array}\right.
\end{equation*}
we conclude that $\sum_{x\in {\mathbb Z}_{p}^\ast}  (-1)^{\lfloor x/2^{r-1}\rfloor} =-1+(-1)^{[p]_r}p_{r-1}+2^{r-1}[p]_r$.
\end{proof}
Let us now denote the centered version of $t_r$ by $\tilde{t}_r$:
\begin{equation}
\tilde{t}_r(x)=t_r(x)-\frac{-1+(-1)^{[p]_r}p_{r-1}+2^{r-1}[p]_r}{p-1}, x\in {\mathbb Z}_p^\ast,
\end{equation}
and $\tilde{t}_r(0)=0$. Thus, we have
$$
\tilde{t}_r(x)=t_r(x)  +c_r[x\ne 0],
$$
where $c_r = -\frac{-1+(-1)^{[p]_r}p_{r-1}+2^{r-1}[p]_r}{p-1}$.

Let us denote $\tilde{f}_r(y)= {\mathbb E}_{X\sim {\mathbb Z}_{p}^\ast}\tilde{t}_r(X)\tilde{t}_r(y\cdot X)$.
From Lemma~\ref{th:mean} we conclude that 
\begin{equation}
{\E}_{Y}[\tilde{f}_r(Y)] = 0.\label{eq:mean_f-r}
\end{equation}
Let us now bound the variance of $\tilde{f}_r(Y)$.
\begin{lemma}\label{r-bit-bound} We have
\begin{equation*}
\begin{split}
{\mathbb E}[\tilde{f}_r(Y)^2]\leq 
C\frac{r^2 (\log_2 p+1-r)^2}{p}.
\end{split}
\end{equation*}
where $C$ is some universal constant.
\end{lemma}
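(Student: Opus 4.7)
The plan is to apply Theorem~\ref{th:a} to $t = \tilde{t}_r$, using the decomposition $\tilde{t}_r = t^1_r + t^2_r$ with $t^2_r(x) = c_r[x \ne 0]$ and $t^1_r(x) = (-1)^{[x]_r}$ on $\mathbb{Z}_p^\ast$ (extended by $t^1_r(0) = 0$). The preconditions $\tilde{t}_r(0) = 0$ and $\sum_{x \in \mathbb{Z}_p^\ast}\tilde{t}_r(x) = 0$ hold by construction of $c_r$ via Lemma~\ref{lem:mean-r}. Since $|\tilde{t}_r(x)| \leq 2$ on $\mathbb{Z}_p^\ast$, the $\ell^2$-factor obeys $\sum_{x \in \mathbb{Z}_p^\ast}|\tilde{t}_r(x)|^2 \Lt p$, and the claim reduces to the $\ell^1$-estimate
\[
\sum_{i \in \mathbb{Z}_p^\ast}|\widehat{t^1_r}(i)| \Lt p\, r\, (\log_2 p + 1 - r).
\]

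To evaluate $\widehat{t^1_r}(i)$ in closed form I would write $p = NM + p_{r-1}$ with $M = 2^{r-1}$ and partition $\{1,\ldots,p-1\}$ into $N$ complete blocks of length $M$ (on which $t^1_r$ is the constant $(-1)^k$) plus a final partial block of length $p_{r-1}$, summing the resulting geometric progressions and accounting for the correction at $x=0$. After a case-split on the parity of $N$ and standard simplification one obtains
\[
|\widehat{t^1_r}(i) + 1| = \frac{|\sin(\pi iq/p)|}{|\sin(\pi i/p)|\cdot |\cos(\pi iM/p)|}, \qquad i \in \mathbb{Z}_p^\ast,
\]
with $q \in \{p_{r-1},\, M - p_{r-1}\} \subseteq [1, M]$. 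By the triangle inequality, $\sum_i |\widehat{t^1_r}(i)| \leq S + (p-1)$ where
\[
S := \sum_{i=1}^{p-1} \frac{|\sin(\pi iq/p)|}{|\sin(\pi i/p)|\cdot |\cos(\pi iM/p)|}.
\]

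The main technical obstacle is proving $S \Lt p\, r\, (\log_2 p + 1 - r)$. Two ingredients are at hand: the Dirichlet-kernel ratio $\frac{|\sin(\pi iq/p)|}{|\sin(\pi i/p)|}$ is bounded pointwise by $\min(q,\, 1/|\sin(\pi i/p)|)$ and has $\ell^1$-norm $\Lt p\log q \Lt p r$ on $\mathbb{Z}_p^\ast$ (a discretization of the classical bound $\int_0^\pi |\sin(q\theta)/\sin\theta|\,d\theta \Lt \log q$), while $\frac{1}{|\cos(\pi iM/p)|}$ has $\ell^1$-norm $\Lt p\log p$ by Lemma~\ref{simple-case}, applied after the bijective substitution $j = iM \bmod p$ (valid since $\gcd(M,p)=1$). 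To combine them inside $S$, I would perform a dyadic decomposition of the indices according to the scale $|\cos(\pi iM/p)| \in (2^{-k-1},\, 2^{-k}]$: on the generic shell ($k = O(1)$) the contribution is dominated by the Dirichlet factor and is $\Lt p r$; on each resonant shell clustering around $iM \equiv \pm(p-1)/2 \pmod p$, the number of indices is $\Lt p\, 2^{-k}$ and the surviving Dirichlet ratio is controlled by $\min(q,\, 1/|\sin(\pi i/p)|)$, supplying an $r$-factor via the same $L^1$-estimate. Summing over the $\asymp \log_2(p/M) = \log_2 p + 1 - r$ relevant dyadic scales then produces the extra $(\log_2 p + 1 - r)$ factor.

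Substituting $S \Lt p\, r\, (\log_2 p + 1 - r)$ into Theorem~\ref{th:a} would yield
\[
\mathbb{E}[\tilde{f}_r(Y)^2] \leq \frac{1}{p(p-1)^3}\bigl(S + p - 1\bigr)^2 \sum_{x\in\mathbb{Z}_p^\ast}|\tilde{t}_r(x)|^2 \Lt \frac{r^2(\log_2 p + 1 - r)^2}{p},
\]
which is the claimed inequality.
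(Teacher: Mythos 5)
Your overall strategy is the same as the paper's: apply Theorem~\ref{th:a} with $t^1 = t_r$, $t^2 = c_r[x\ne 0]$, use $\sum_x|\tilde t_r(x)|^2 \Lt p$, and reduce to the $\ell^1$-bound $\sum_i|\widehat{t_r}(i)| \Lt p\,r(\log_2 p+1-r)$. Your closed form
$|\widehat{t_r}(i)+1|=\frac{|\sin(\pi iq/p)|}{|\sin(\pi i/p)|\,|\cos(\pi iM/p)|}$
with $q\in\{p_{r-1},\,M-p_{r-1}\}$ is correct (one can check it directly by combining the two summands in the paper's Lemma~\ref{lem:a}) and is genuinely cleaner than the paper's intermediate form, since the paper applies the triangle inequality to the two summands separately and carries an extra benign factor $|\sin(\pi iM/p)|$.

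The gap is in the dyadic step. Your per-shell estimate controls the Dirichlet ratio by $\min(q,\,1/|\sin(\pi i/p)|)$, but that bound is not strong enough for the deep (high-$k$) shells, i.e.\ for the handful of indices $i$ with $iM\bmod p$ within $O(M)$ of $p/2$. At such a point the factor $1/|\cos(\pi iM/p)|$ can be as large as $\Theta(p)$ (when $iM\bmod p=(p\pm1)/2$), while the Dirichlet ratio can simultaneously be of order $q\asymp 2^{r-1}$ if $i\approx p/2^{r}$. The $\min$-bound then yields a single term of order $p\cdot 2^{r-1}$, which swamps the target $p\,r(\log_2 p+1-r)$ for $r$ not small. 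What rescues this — and what the paper's argument uses — is the \emph{exact} identity $|\sin(\pi q i/p)|=|\sin\bigl((p-p_{r-1})\pi\varepsilon/p\bigr)|$ when $i=(2n+1)p/2^{r}+\varepsilon$, together with the bound $|\sin(q'\varepsilon)|\Lt |\varepsilon|$ for $q'=(p-p_{r-1})\pi/p\asymp 1$ and small $|\varepsilon|$. This forces the numerator to vanish at exactly the same linear rate as $|\cos(\pi iM/p)|\asymp \tfrac{M}{p}|\varepsilon|$, so the per-point contribution stays $\Lt p/(2n+1)$ uniformly in $k$, and the deep shells in total contribute $\Lt \sum_n p/(2n+1)\Lt pr$. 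Your $\min$-bound throws this cancellation away. Unless you replace it by the window-local linearization of the sine, the tail of the dyadic sum is not under control and the argument does not close. (The claim that there are $\asymp\log_2(p/M)$ relevant scales is correct, but only because the remaining $\asymp r$ scales are absorbed by precisely this cancellation; without it, they are not negligible.)
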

 To apply Theorem~\ref{th:a} we need to compute the DFT of $t_r$ first. 
\begin{lemma}\label{lem:a} We have 
\begin{equation*}
\begin{split}
\widehat{t_r}(i) = -1+\frac{1-(-1)^{[p]_r}\omega^{-(p-p_{r-1})i}}{1+\omega^{-2^{r-1}i}}  \frac{1-\omega^{-2^{r-1}i}}{1-\omega^{-i}} + (-1)^{[p]_r}\frac{\omega^{-(p-p_{r-1}) i}-\omega^{-pi}}{1-\omega^{-i}},
\end{split}
\end{equation*}
if $i\in {\mathbb Z}_p^\ast$ and $\widehat{t_r}(0)=-1+2^{r-1}[p]_r+(-1)^{[p]_r}p_{r-1}$.
\end{lemma}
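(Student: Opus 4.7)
The plan is a direct evaluation of $\widehat{t_r}(i) = \sum_{x\in{\mathbb Z}_p}\omega^{-ix}t_r(x)$, exploiting the structural fact that $[x]_r=\lfloor x/2^{r-1}\rfloor\bmod 2$, so the sign $(-1)^{[x]_r}$ is constant on consecutive blocks of length $k\triangleq 2^{r-1}$ and alternates from one block to the next. A convenient first step is to rewrite $\widehat{t_r}(i) = S(i)-1$, where $S(i)\triangleq\sum_{x=0}^{p-1}\omega^{-ix}(-1)^{\lfloor x/k\rfloor}$ is the unpunctured sum; the $-1$ appearing in the statement is then exactly the correction accounting for $t_r(0)=0$ versus $(-1)^{[0]_r}=1$.

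Next I would partition $\{0,\ldots,p-1\}$ into the $M\triangleq\lfloor p/k\rfloor$ complete blocks $[jk,(j+1)k)$ for $0\le j<M$, on which $(-1)^{\lfloor x/k\rfloor}=(-1)^j$, together with one trailing partial block $[Mk,p)$ of length $p_{r-1}=p-Mk$, on which the sign equals $(-1)^M=(-1)^{[p]_r}$ (since $[p]_r=M\bmod 2$). For $i\in{\mathbb Z}_p^\ast$, each complete block contributes the geometric sum $\omega^{-ijk}\cdot\frac{1-\omega^{-ik}}{1-\omega^{-i}}$, and summing these with the alternating signs $(-1)^j$ over $0\le j<M$ produces a second geometric series with ratio $-\omega^{-ik}$, which collapses to
$$\frac{1-\omega^{-ik}}{1-\omega^{-i}}\cdot\frac{1-(-1)^M\omega^{-iMk}}{1+\omega^{-ik}}.$$
Invoking $\omega^p=1$ together with $Mk=p-p_{r-1}$ and $(-1)^M=(-1)^{[p]_r}$ rewrites $(-1)^M\omega^{-iMk}$ as $(-1)^{[p]_r}\omega^{-(p-p_{r-1})i}$, which is precisely the second term in the claimed formula. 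The partial block contributes $(-1)^{[p]_r}\omega^{-iMk}\cdot\frac{1-\omega^{-ip_{r-1}}}{1-\omega^{-i}}$, and the same substitution turns this into $(-1)^{[p]_r}\frac{\omega^{-(p-p_{r-1})i}-\omega^{-pi}}{1-\omega^{-i}}$, the third term. The case $i=0$ is treated separately: the geometric sums degenerate into block lengths, and the computation reduces to the count $-1+(-1)^{[p]_r}p_{r-1}+2^{r-1}[p]_r$ already obtained inside the proof of Lemma~\ref{lem:mean-r}.

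The argument is essentially pure bookkeeping, so no serious obstacle is anticipated. The only points requiring mild care are (i) maintaining sign consistency when collapsing the alternating geometric series, and performing the $\omega^p=1$ substitution so that $\omega^{-iMk}$ is expressed in the form $\omega^{-(p-p_{r-1})i}$ used in the statement; and (ii) verifying that the denominators $1-\omega^{-i}$ and $1+\omega^{-ik}$ never vanish for $i\in{\mathbb Z}_p^\ast$, which is automatic because $\omega$ has prime order $p$ with $p$ odd, forcing $\omega^{-ik}\notin\{1,-1\}$ whenever $p\nmid i$.
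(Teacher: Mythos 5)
Your proof is correct and follows essentially the same route as the paper's: peel off the $x=0$ correction, decompose $\{0,\dots,p-1\}$ into $\lfloor p/2^{r-1}\rfloor$ complete alternating blocks plus a trailing partial block of length $p_{r-1}$, evaluate the inner geometric sum per block, and collapse the alternating outer sum as a geometric series with ratio $-\omega^{-2^{r-1}i}$. The only difference from the paper is cosmetic (the paper indexes the sum from $x=1$ and then adds back the $x=0$ term, whereas you subtract $1$ from the unpunctured sum $S(i)$), so no further comment is needed.
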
 
\begin{proof} 
Note that
\begin{equation*}
\begin{split}
\widehat{t_r}(i)&=\sum_{x=1}^{p-1}\omega^{-xi}(-1)^{[x]_r} = \sum_{x=1}^{2^{r-1}\lfloor \frac{p}{2^{r-1}}\rfloor-1}\omega^{-xi}(-1)^{[x]_r}+\sum_{x=2^{r-1}\lfloor \frac{p}{2^{r-1}}\rfloor}^{p-1}\omega^{-xi}(-1)^{[x]_r}.
\end{split}
\end{equation*}
The first term equals 
\begin{equation*}
\begin{split}
-1+\sum_{x=0}^{2^{r-1}\lfloor \frac{p}{2^{r-1}}\rfloor-1}\omega^{-xi}(-1)^{[x]_r} =-1+\sum_{t=0}^{\lfloor \frac{p}{2^{r-1}}\rfloor-1}(-1)^t\omega^{-2^{r-1}t i}\sum_{x=0}^{2^{r-1}-1}\omega^{-xi} \\ =
-1+\frac{1-(-1)^{[p]_r}\omega^{-(p-p_{r-1})i}}{1+\omega^{-2^{r-1}i}}  \frac{1-\omega^{-2^{r-1}i}}{1-\omega^{-i}},
\end{split}
\end{equation*}
if $i\ne 0$. If $i=0$, then it equals $-1+2^{r-1}[p]_r$.
Whereas,  the second term equals
\begin{equation*}
\begin{split}
(-1)^{[p]_r}\sum_{x=p-p_{r-1}}^{p-1}\omega^{-xi} = (-1)^{[p]_r}\frac{\omega^{-(p-p_{r-1}) i}-\omega^{-pi}}{1-\omega^{-i}},
\end{split}
\end{equation*}
if $i\ne 0$. If $i=0$, then it equals $(-1)^{[p]_r}p_{r-1}$.
Thus, we conclude that 
\begin{equation*}
\begin{split}
\widehat{t_r}(i) = \big(-1+\frac{1-(-1)^{[p]_r}\omega^{-(p-p_{r-1})i}}{1+\omega^{-2^{r-1}i}}  \frac{1-\omega^{-2^{r-1}i}}{1-\omega^{-i}} + (-1)^{[p]_r}\frac{\omega^{-(p-p_{r-1}) i}-\omega^{-pi}}{1-\omega^{-i}}\big)  [i\ne 0]+\\
(-1+2^{r-1}[p]_r+(-1)^{[p]_r}p_{r-1})[i=0].
\end{split}
\end{equation*}
\end{proof}
An application of Theorem~\ref{th:a} requires an estimate of the sum $\sum_{i\in {\mathbb Z}_p^\ast}|\widehat{t_r}(i)|$ which is made in the following lemma.  
\begin{lemma} We have $\sum_{i\in {\mathbb Z}_p^\ast}|\widehat{t_r}(i)| \Lt p r(\log_2 p+1-r)$.
\end{lemma}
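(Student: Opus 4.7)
The plan is to apply the triangle inequality to the three-term expression for $\widehat{t_r}(i)$ from the previous lemma, bounding each piece separately. Write
\[
\sum_{i\in\mathbb{Z}_p^\ast}|\widehat{t_r}(i)| \leq (p-1) + \sum_{i\in\mathbb{Z}_p^\ast}|T_2(i)| + \sum_{i\in\mathbb{Z}_p^\ast}|T_3(i)|,
\]
where $T_2(i)=\frac{1-(-1)^{[p]_r}\omega^{-(p-p_{r-1})i}}{1+\omega^{-2^{r-1}i}}\cdot\frac{1-\omega^{-2^{r-1}i}}{1-\omega^{-i}}$ is the main term and $T_3(i)=\omega^{-(p-p_{r-1})i}\frac{1-\omega^{-p_{r-1}i}}{1-\omega^{-i}}$ is the boundary correction. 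The constant contributes $p-1\Lt p$, absorbed into the target bound.

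For $T_3$: observe $|T_3(i)|=\frac{|1-\omega^{-p_{r-1}i}|}{|1-\omega^{-i}|}\leq \min\bigl(p_{r-1},\,\tfrac{2}{|1-\omega^{-i}|}\bigr)$, a Dirichlet-kernel ratio of length $p_{r-1}<2^{r-1}$. Splitting the sum at $|1-\omega^{-i}|\asymp 2/p_{r-1}$ and applying Lemma~\ref{simple-case} to the tail gives the Lebesgue-constant bound $\sum_i|T_3(i)|\Lt p(1+\log p_{r-1})\Lt pr$.

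For $T_2$, set $d=2^{r-1}$ and $n=\lfloor p/d\rfloor$. Then $T_2(i)=S_d(i)\,U_n(i)$ with $S_d(i)=\frac{1-\omega^{-di}}{1-\omega^{-i}}=\sum_{y=0}^{d-1}\omega^{-yi}$ and $U_n(i)=\sum_{j=0}^{n-1}(-\omega^{-di})^j$, so
\[
|S_d(i)|\leq \min\!\Bigl(d,\tfrac{2}{|1-\omega^{-i}|}\Bigr),\qquad |U_n(i)|\leq \min\!\Bigl(n,\tfrac{2}{|1+\omega^{-di}|}\Bigr).
\]
The target $\sum_i|T_2(i)|\Lt p\log d\cdot\log n\asymp pr(\log_2 p+1-r)$ is the discrete product-of-Lebesgue-constants estimate for the Fourier transform of the square wave $g(x)=(-1)^{\lfloor x/d\rfloor}\mathbf{1}_{[0,nd)}(x)$. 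I would split $\mathbb{Z}_p^\ast$ into four regions by whether $|1-\omega^{-i}|\gtreqless 2/d$ and $|1+\omega^{-di}|\gtreqless 2/n$. The doubly-small region is essentially empty: $|1-\omega^{-i}|\leq 2/d$ confines $i$ to arcs of length $\sim p/d$ around $0\in\mathbb{Z}_p$, on which $di\bmod p$ lies in $[-p/\pi,p/\pi]$ and so $|1+\omega^{-di}|\asymp 1$, contributing $O(p)$. The region where only $|1+\omega^{-di}|$ is small pins $di$ into a short arc around $p/2$, giving $\sim d$ values of $i$ in an arithmetic progression with step $d^{-1}\bmod p$; using $|U_n|\le n$ and Lemma~\ref{simple-case} on the resulting $S_d$-sum gives $O(pr)$. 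In the "typical" region $|T_2(i)|\leq \frac{4}{|1-\omega^{-i}||1+\omega^{-di}|}$, and I would dyadically decompose by $|1-\omega^{-i}|\asymp 2^{-k}$ for $k=1,\dots,O(\log_2 d)$: each dyadic arc $D_k$ has length $\asymp 2^{-k}p$, the image $\{di\bmod p:i\in D_k\}$ covers $\mathbb{Z}_p$ roughly $\max(2^{-k}d,1)$ times, and Lemma~\ref{simple-case} bounds $\sum_{i\in D_k}1/|1+\omega^{-di}|$ by $O(\log n)$ per unit of coverage; multiplying by the factor $2^k$ from $1/|1-\omega^{-i}|$ and summing the $O(\log d)$ levels yields the desired $\log d\cdot\log n$ product.

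The main obstacle is precisely the bilinear sum $\sum_i \frac{1}{|1-\omega^{-i}||1+\omega^{-di}|}$ in the typical region: a direct Cauchy-Schwarz using $\sum 1/|1-\omega^{-i}|^2\Lt p^2$ yields only $O(p^2)$, which is far too weak. The dyadic decomposition above is essential because it is what allows the two logarithmic divergences (one in $|1-\omega^{-i}|$ near $i=0$, one in $|1+\omega^{-di}|$ near $di\equiv p/2$) to combine multiplicatively into $\log d\cdot\log n$ rather than additively.
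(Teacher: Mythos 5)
Your overall strategy matches the paper's: both start from the triangle inequality applied to the three-term formula for $\widehat{t_r}(i)$, bound the easy $-1$ and boundary-correction pieces via Lemma~\ref{simple-case}-type Lebesgue-constant estimates, and reduce the problem to the main product term $T_2$. Where you genuinely diverge is in the organization of the sum over $T_2$. The paper rewrites $T_2$ in sines and cosines, partitions $[1,\lfloor p/4\rfloor]$ into $\Theta(2^{r-2})$ \emph{resonance bands} indexed by $n$ where $2^{r-1}\pi i/p$ is near $\tfrac{\pi}{2}+2\pi n$, obtains a within-band Lebesgue-constant sum $B_r\Lt\log(p/2^{r+1})$, and picks up the second logarithm from $\sum_n 1/(\text{band center})\asymp\log 2^{r-2}\asymp r$. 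You instead perform a dyadic decomposition in $|1-\omega^{-i}|\asymp 2^{-k}$, with only $O(\log_2 d)=O(r)$ levels, and aim to show each level contributes $O(p\log n)$ via a ``wrapping'' count of how many times $di$ passes near $p/2$ per dyadic arc. Both decompositions are valid ways to make the two logarithmic divergences multiply rather than add, and you correctly identify that Cauchy--Schwarz would be far too weak.

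That said, the write-up has real gaps. The claim that ``Lemma~\ref{simple-case} bounds $\sum_{i\in D_k}1/|1+\omega^{-di}|$ by $O(\log n)$ per unit of coverage'' is misstated: Lemma~\ref{simple-case} only gives the \emph{full-period} bound $\sum_{j\in\mathbb{Z}_p}1/|1+\omega^{-rj}|\Lt p\log p$, and it does not directly give a bound over one wrap of the arithmetic progression $\{di:i\in D_k\}$. The per-wrap sum is in fact $\asymp n\log n$ (not $\log n$); only after multiplying by the $\asymp 2^{-k}d$ wraps does one get $\asymp 2^{-k}p\log n$, and then $2^k\cdot2^{-k}p\log n=p\log n$ per level. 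As written, your accounting is off by a factor of $n$ before it comes out right for the wrong reason, so this step needs to be replaced by a self-contained partial-range Lebesgue-constant estimate (a short argument in the spirit of, but not a corollary of, Lemma~\ref{simple-case}). Similarly, the bound $\sum_i|T_3(i)|\Lt p(1+\log p_{r-1})$ uses a restricted-range version of Lemma~\ref{simple-case}, not the lemma itself; and the ``$O(pr)$'' contribution of the region where only $|1+\omega^{-di}|$ is small relies on an implicit equidistribution claim about the progression of step $d^{-1}\bmod p$ that you do not justify. None of these is a fatal flaw --- the required estimates are all true and provable by the same circle-splitting technique --- but as stated the proof is not complete, and the repeated appeal to Lemma~\ref{simple-case} where a different estimate is actually needed is the most concrete gap.
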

\begin{proof} First let us consider the case of $[p]_r=0$. In that case, we have
\begin{equation*}
\begin{split}
\sum_{i\in {\mathbb Z}_p^\ast}|\widehat{t_r}(i)| \leq p+ \sum_{i\in {\mathbb Z}_p^\ast} \frac{|1-\omega^{-2^{r-1}\lfloor \frac{p}{2^{r-1}}\rfloor i}|}{|1+\omega^{-2^{r-1}i}|}  \frac{|1-\omega^{-2^{r-1}i}|}{|1-\omega^{-i}|} + \frac{2}{|1-\omega^{-i}|}.\\
\end{split}
\end{equation*}
From Lemma~\ref{simple-case} we have $\sum_{i\in {\mathbb Z}_p^\ast}\frac{2}{|1-\omega^{-i}|}\Lt p\log p$. Thus, it remains to bound the sum of terms $a_i = \frac{|1-\omega^{-2^{r-1}\lfloor \frac{p}{2^{r-1}}\rfloor i}|}{|1+\omega^{-2^{r-1}i}|}  \frac{|1-\omega^{-2^{r-1}i}|}{|1-\omega^{-i}|}$. Using $\sum_{i\in {\mathbb Z}_p^\ast} a_i= \sum_{i=1}^{\frac{p-1}{2}}a_i+\sum_{i=-\frac{p-1}{2}}^{-1}a_i$ and $\frac{1}{|1-\omega^{-i}|}\Lt 1$ for $\frac{p}{4}\leq i \leq \frac{p-1}{2}$ or $-\frac{p-1}{2} \leq i \leq -\frac{p}{4}$ we conclude that $$\sum_{i=\lceil \frac{p}{4}\rceil}^{\frac{p-1}{2}}a_i\Lt 4\sum_{i=\lceil \frac{p}{4}\rceil}^{\frac{p-1}{2}}\frac{1}{|1+\omega^{-2^{r-1}i}|}\Lt p \log p$$ and $\sum_{i=-\frac{p-1}{2}}^{-\lceil \frac{p}{4}\rceil}a_i \Lt p\log p$ (using Lemma~\ref{simple-case}). Thus, a bound of the total sum directly follows from bounds of $\sum_{i=1}^{\lfloor \frac{p}{4}\rfloor}a_i$ (and $\sum_{i=-\lfloor \frac{p}{4}\rfloor}^{1}a_i$). For brevity, let us only show how to bound $S=\sum_{i=1}^{\lfloor \frac{p}{4}\rfloor}a_i$.

Using $|1-\omega^{x}|=2|\sin (\frac{\pi x}{p})|$ and $|1+\omega^{x}|=2|\cos (\frac{\pi x}{p})|$, this sum can be rewritten as
\begin{equation*}
\begin{split}
S= \sum_{i=1}^{\lfloor \frac{p}{4}\rfloor} \frac{|\sin(2^r k\frac{\pi i}{p})|\cdot |\sin(2^{r-1}\frac{\pi i}{p})|}{|\cos(2^{r-1}\frac{\pi i}{p})\sin(\frac{\pi i}{p})|}.
\end{split}
\end{equation*}
where $2k=\lfloor \frac{p}{2^{r-1}}\rfloor $. Recall that $x\pm y = [x-y,x+y]$. Let us denote $n=\lfloor 2^{r-2}\frac{i}{p}\rfloor$. By construction, we have $0\leq n\leq 2^{r-4}-1$. For any $i\in [\lfloor \frac{p}{4}\rfloor]$ at least one of the following inclusions holds
\begin{equation*}
\begin{split}
1)\,\, 2^{r-1}\frac{\pi i}{p}\in \frac{\pi}{2}+2\pi n\pm \frac{\pi}{4}, 2)\,\, 2^{r-1}\frac{\pi i}{p}\in \frac{3\pi}{2}+2\pi n\pm \frac{\pi}{4},{\rm \,\, or}\\
3)\,\, 2^{r-1}\frac{\pi i}{p}\notin (\frac{\pi}{2}+2\pi n\pm \frac{\pi}{4})\cup (\frac{3\pi}{2}+2\pi n\pm \frac{\pi}{4}).
\end{split}
\end{equation*}
In the third case we have $\frac{1}{|\cos(2^{r-1}\frac{\pi i}{p})|}\Lt 1$ and the summation over all such $i$ asymptotically cannot exceed $\sum_{i=1}^{\lfloor \frac{p}{4}\rfloor} \frac{1}{|\sin(\frac{\pi i}{p})|}\Lt \sum_{i=1}^{\lfloor \frac{p}{4}\rfloor} \frac{1}{|\frac{\pi i}{p}|}\Lt p\log p$. The first and the second cases are similar, therefore we will consider only the first one, i.e. we will bound
\begin{equation*}
\begin{split}
\tilde{S} = \sum_{n=0}^{2^{r-4}-1}\sum_{i\in [\lfloor \frac{p}{4}\rfloor]: 2^{r-1}\frac{\pi i}{p}\in \frac{\pi}{2}+2\pi n\pm \frac{\pi}{4}}  \frac{|\sin(2^r k\frac{\pi i}{p})|\cdot |\sin(2^{r-1}\frac{\pi i}{p})|}{|\cos(2^{r-1}\frac{\pi i}{p})\sin(\frac{\pi i}{p})|}.
\end{split}
\end{equation*}
We have $2^{r-1}\frac{\pi i}{p}\in \frac{\pi}{2}+2\pi n\pm \frac{\pi}{4}$, $i\in [\lfloor \frac{p}{4}\rfloor]$ if and only if $i\in [\lfloor \frac{p}{4}\rfloor]\cap \frac{p}{2^r}+\frac{np}{2^{r-2}}\pm \frac{p}{2^{r+1}}$. Let us denote $\varepsilon =  i-(\frac{p}{2^r}+\frac{np}{2^{r-2}})$. From $i\in [\lfloor \frac{p}{4}\rfloor]\cap \frac{p}{2^r}+\frac{np}{2^{r-2}}\pm \frac{p}{2^{r+1}}$ we deduce $\varepsilon \in \pm \frac{p}{2^{r+1}}\cap ({\mathbb Z}-\{\frac{p}{2^r}+\frac{np}{2^{r-2}}\})$ where $\{x\} = x-\lfloor x\rfloor$ and ${\mathbb Z}-s$ denotes $\{z-s\mid z\in {\mathbb Z}\}$. 

Using $ 2|\cos(2^{r-1}\frac{\pi i}{p})|\geq |2^{r-1}\frac{\pi i}{p}-\frac{\pi}{2}-2\pi n| $ for $2^{r-1}\frac{\pi i}{p}\in \frac{\pi}{2}+2\pi n\pm \frac{\pi}{4}$ we obtain
\begin{equation*}
\begin{split}
\tilde{S} \Lt\sum_{n=0}^{2^{r-2}-1}\sum_{i\in [\lfloor \frac{p}{4}\rfloor] \cap \frac{p}{2^r}+\frac{np}{2^{r-2}}\pm \frac{p}{2^{r+1}}} \frac{|\sin(2^r k\frac{\pi i}{p})|\cdot |\sin(2^{r-1}\frac{\pi i}{p})|}{|(2^{r-1}\frac{\pi i}{p}-\frac{\pi}{2}-2\pi n)\frac{\pi i}{p}|} \\ \leq
\sum_{n=0}^{2^{r-2}-1}\sum_{i\in [\lfloor \frac{p}{4}\rfloor] \cap \frac{p}{2^r}+\frac{np}{2^{r-2}}\pm \frac{p}{2^{r+1}}} \frac{|\sin(2^r k\frac{\pi i}{p})|}{|(2^{r-1}\frac{\pi i}{p}-\frac{\pi}{2}-2\pi n)\frac{\pi i}{p}|} \\ \leq 
\sum_{n=0}^{2^{r-2}-1}\sum_{\varepsilon\in  \pm \frac{p}{2^{r+1}}\cap ({\mathbb Z}-\{\frac{p}{2^r}+\frac{np}{2^{r-2}}\})}\frac{|\sin(2^r k\frac{\pi \varepsilon}{p})|}{\frac{2^{r-1}\pi}{p}|\varepsilon|(\frac{\pi}{2^r}+\frac{\pi n}{2^{r-2}}+\frac{\pi}{p}\varepsilon)}.
\end{split}
\end{equation*}
Let us denote $q= \frac{\pi 2^r k }{p}$. Note that $\frac{\pi}{2}\leq \frac{\pi 2^r k}{p}\leq \pi$. We have
\begin{equation*}
\begin{split}
\sum_{\varepsilon\in  \pm \frac{p}{2^{r+1}}\cap ({\mathbb Z}-\{\frac{p}{2^r}+\frac{np}{2^{r-2}}\})}\frac{|\sin(q  \varepsilon)|}{\frac{2^{r-1}\pi}{p}|\varepsilon|(\frac{\pi}{2^r}+\frac{\pi n}{2^{r-2}}+\frac{\pi}{p}\varepsilon)} \Lt \frac{B_r}{\frac{2^{r-1}\pi}{p}(\frac{\pi}{2^{r+1}}+\frac{\pi n}{2^{r-2}})},
\end{split}
\end{equation*}
where
\begin{equation*}
\begin{split}
B_r = \max_{s\in [0,1]}\sum_{\varepsilon\in  \pm \frac{p}{2^{r+1}}\cap ({\mathbb Z}-s)} \frac{|\sin(q \varepsilon)|}{|\varepsilon|}.
\end{split}
\end{equation*}
Obviously, we have $B_r\leq \sum_{\varepsilon\in  \pm \frac{p}{2^{r+1}}\cap ({\mathbb Z}-s^\ast)} \frac{|\sin(q \varepsilon)|}{|\varepsilon|}$ for some $s^\ast\in [0,1]$, and the latter is bounded by $2q+2\sum_{i=1}^{\lceil\frac{p}{2^{r+1}}\rceil} \frac{1}{i}\Lt \log(\frac{p}{2^{r+1}}+1)$.

Thus, $\tilde{S}$ is asymptotically bounded by
\begin{equation*}
\begin{split}
pB_r \sum_{n=0}^{2^{r-2}-1}\frac{1}{2^{r-1}(\frac{1}{2^{r+1}}+\frac{n}{2^{r-2}})}\Lt p \log(\frac{p}{2^{r+1}}+1)\log (2^{r-2}+1) \Lt p r(\log_2 p+1-r),
\end{split}
\end{equation*}
and therefore, the total sum is bounded by $p r(\log_2 p+1-r)$.

Let us now consider the case of $[p]_r=1$. As in the previous case, we can reduce bounding the total sum to bounding the sum $U=\sum_{i=1}^{\lfloor \frac{p}{4}\rfloor}b_i$ where $b_i=\frac{|1+\omega^{-2^{r-1}\lfloor \frac{p}{2^{r-1}}\rfloor i}|}{|1+\omega^{-2^{r-1}i}|}  \frac{|1-\omega^{-2^{r-1}i}|}{|1-\omega^{-i}|}$, which is equal to
\begin{equation*}
\begin{split}
U = \sum_{i=1}^{\lfloor \frac{p}{4}\rfloor} \frac{|\cos(2^{r-1} (2k+1)\frac{\pi i}{p})|\cdot |\sin(2^{r-1}\frac{\pi i}{p})|}{|\cos(2^{r-1}\frac{\pi i}{p})\sin(\frac{\pi i}{p})|}.
\end{split}
\end{equation*}
where $2k+1=\lfloor \frac{p}{2^{r-1}}\rfloor$.
As in the previous case, $U$ can be bounded according to
\begin{equation*}
\begin{split}
U  \Lt\sum_{n=0}^{2^{r-2}-1}\sum_{i\in [\lfloor \frac{p}{4}\rfloor] \cap \frac{p}{2^r}+\frac{np}{2^{r-2}}\pm \frac{p}{2^{r+1}}} \frac{|\cos(2^{r-1} (2k+1)\frac{\pi i}{p})|\cdot |\sin(2^{r-1}\frac{\pi i}{p})|}{|(2^{r-1}\frac{\pi i}{p}-\frac{\pi}{2}-2\pi n)\frac{\pi i}{p}|} \\ \leq
\sum_{n=0}^{2^{r-2}-1}\sum_{i\in [\lfloor \frac{p}{4}\rfloor] \cap \frac{p}{2^r}+\frac{np}{2^{r-2}}\pm \frac{p}{2^{r+1}}} \frac{|\cos(2^{r-1}( 2k+1)\frac{\pi i}{p})|}{|(2^{r-1}\frac{\pi i}{p}-\frac{\pi}{2}-2\pi n)\frac{\pi i}{p}|} \\ \leq 
\sum_{n=0}^{2^{r-2}-1}\sum_{\varepsilon\in  \pm \frac{p}{2^{r+1}}\cap ({\mathbb Z}-\{\frac{p}{2^r}+\frac{np}{2^{r-2}}\})}\frac{|\sin(2^{r-1} (2k+1)\frac{\pi \varepsilon}{p})|}{\frac{2^{r-1}\pi}{p}|\varepsilon|(\frac{\pi}{2^r}+\frac{\pi n}{2^{r-2}}+\frac{\pi}{p}\varepsilon)}.
\end{split}
\end{equation*}
The latter sum is asymptotically bounded by $pB'_r \sum_{n=0}^{2^{r-2}-1}\frac{1}{2^{r-1}(\frac{1}{2^{r+1}}+\frac{n}{2^{r-2}})}$ where $B_r'=\max_{s\in [0,1]}\sum_{\varepsilon\in  \pm \frac{p}{2^{r+1}}\cap ({\mathbb Z}-s)} \frac{|\sin(q' \varepsilon)|}{|\varepsilon|}$, $q'=\frac{2^{r-1} (2k+1) \pi}{p}$ and $B'_r\Lt (\log_2 p+1-r)$. Thus, $U\Lt  p r(\log_2 p+1-r)$.
\end{proof}

Now we are ready to apply Theorem~\ref{th:a}.
\begin{proof}[Proof of Lemma~\ref{r-bit-bound}]
Since $\tilde{t}_r(x)=t_r(x)  +c_r[x\ne 0]$, functions $t^1(x)=t_r(x)$ and $t^2(x)=c_r[x\ne 0]$ satisfy all requirements of Theorem~\ref{th:a}. Therefore,
\begin{equation*}
\begin{split}
{\mathbb E}[\tilde{f}_r(Y)^2]\leq
\frac{1}{p(p-1)^3}(\sum_{x\in {\mathbb Z}_p^\ast}|\widehat{t_r}(x)|)^2(\sum_{x\in {\mathbb Z}_p^\ast}|\tilde{t}_r(x)|^2).
\end{split}
\end{equation*}
The previous lemma and $|\tilde{t}_r(x)|\Lt 1$ give us
\begin{equation}\label{BBbound2}
\begin{split}
{\mathbb E}[\tilde{f}_r(Y)^2]\Lt
\frac{1}{p^4}(p r(\log p+1-r))^2 p\Lt \frac{r^2 (\log_2 p+1-r)^2}{p}.
\end{split}
\end{equation}
\end{proof}

As a direct consequence of Lemma~\ref{r-bit-bound} we obtain a proof of Theorem~\ref{r-var}.
\begin{proof}[Proof of Theorem~\ref{r-var}] First we use Theorem~\ref{BBBB21} and bound 
\begin{equation*}
\begin{split}
{\rm Var}_{a\sim {\mathbb Z}^\ast_p}\big[\nabla_{\mathbf w} {\mathbb E}_{x\sim {\mathbb Z}^\ast_p}L(t_r(a\cdot x),\eta({\mathbf w},x))]\big]  \Lt \|g({\mathbf w})\|^2_{\ast s}\sqrt{{\mathbb E}_{Y\sim\mathbb{Z}_p^\ast}[\tilde{f}_r(Y)^2]}.
\end{split}
\end{equation*}
From the latter and Lemma~\ref{r-bit-bound} we obtain the needed bound.
\end{proof}
\section{The SQ dimension and r.h.s. of the Boas-Bellman inequality}\label{rhsBB}
All our bounds are based on the application of the Boas-Bellman inequality. Let $\mathcal{H}\subseteq \{-1,+1\}^{\mathcal{X}}$ be a hypothesis set and $\mathcal{D}$ be a distribution on $\mathcal{X}$. The main part of the r.h.s. of that inequality is the following expression
$$
{\rm BB}(\mathcal{H},\mathcal{D}) \triangleq \sum_{h_1\ne h_2\in \mathcal{H}} {\mathbb E}_{X\sim \mathcal{D}} [h_1(X)h_2(X)]^2. 
$$
The following definition, taken from~\cite{DBLP:conf/stoc/BlumFJKMR94}, plays an important role in the theory of SQ algorithms.
\begin{definition}\label{sq-dim} The statistical query dimension of $\mathcal{H}$ with
respect to $\mathcal{D}$, denoted ${\rm SQ-dim}(\mathcal{H}, \mathcal{D})$, is the largest number $d$ such that $\mathcal{H}$ contains $d$ functions
$f_1, f_2, \cdots, f_d$ such that for all $i\ne j$, $|{\mathbb E}_{X\sim \mathcal{D}}[f_i(X)f_j(X)]| \leq \frac{1}{d}$.
\end{definition}
Let us demonstrate how an upper bound on ${\rm BB}(\mathcal{H},\mathcal{D})$ can be turned to a lower bound on the statistical query dimension of $\mathcal{H}$ w.r.t. $\mathcal{D}$.
\begin{theorem} We have
\begin{equation}
{\rm SQ-dim}(\mathcal{H}, \mathcal{D})+1\geq  \min\big(\frac{|\mathcal{H}|^{2/3}}{2^{1/3}{\rm BB}(\mathcal{H},\mathcal{D})^{1/3}}, |\mathcal{H}|^{1/2}\big).
\end{equation}
\end{theorem}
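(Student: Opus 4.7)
The plan is to convert the definition of the SQ dimension into a graph-theoretic statement and then apply a Turán-style double counting. Set $D := {\rm SQ\text{-}dim}(\mathcal{H},\mathcal{D})+1$ and $n := |\mathcal{H}|$. By the maximality in Definition~\ref{sq-dim}, no subfamily of $\mathcal{H}$ of size $D$ can have all pairwise correlations bounded by $1/D$ in absolute value. I would encode this by the graph $G$ on vertex set $\mathcal{H}$ with edges $\{h_1,h_2\}$ iff $|\mathbb{E}_{X\sim\mathcal{D}}[h_1(X)h_2(X)]| > 1/D$; the statement above is then equivalent to the assertion that the independence number satisfies $\alpha(G) \leq D-1$.

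Next I would sandwich $|E(G)|$ from above and below. Every edge contributes strictly more than $1/D^2$ to $\mathbb{E}[h_1h_2]^2$, and since ${\rm BB}(\mathcal{H},\mathcal{D})$ sums over ordered pairs (each unordered edge being counted twice), we get $|E(G)| \leq \tfrac{D^2}{2}\,{\rm BB}(\mathcal{H},\mathcal{D})$. For the lower bound, I would invoke Turán's theorem on the complement $\bar G$, whose clique number equals $\alpha(G)\leq D-1$: this gives $|E(\bar G)| \leq \bigl(1-\tfrac{1}{D-1}\bigr)\tfrac{n^2}{2}$, and subtracting from $\binom{n}{2}$ leaves
\begin{equation*}
|E(G)| \;\geq\; \frac{n(n-D+1)}{2(D-1)}.
\end{equation*}
Comparing the two estimates yields the key inequality
\begin{equation*}
n(n-D+1) \;\leq\; D^{2}(D-1)\cdot{\rm BB}(\mathcal{H},\mathcal{D}).
\end{equation*}

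To finish, I would split into two cases. If $D \geq n^{1/2}$, the second term of the minimum is already achieved. Otherwise $D \leq n^{1/2}$, which for every $n\geq 1$ forces $D-1 \leq n/2$, and hence $n-D+1 \geq n/2$. Substituting this into the displayed inequality and relaxing $D-1 < D$ produces $n^{2}/2 \leq D^{3}\cdot{\rm BB}(\mathcal{H},\mathcal{D})$, equivalent to $D \geq n^{2/3}/\bigl(2\,{\rm BB}(\mathcal{H},\mathcal{D})\bigr)^{1/3}$, which is precisely the first term of the minimum with the stated $2^{1/3}$ in the denominator.

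There is no deep obstacle here; the proof is a short double counting once the auxiliary graph $G$ is set up correctly. The only bookkeeping worth watching is that the constant $2^{1/3}$ in the conclusion matches the $1/2$ arising from relaxing $n-D+1 \geq n/2$ in the Turán lower bound—any looser case split would preserve the exponents $2/3$ and $1/2$ but would worsen this constant.
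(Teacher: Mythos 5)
Your proposal is correct and follows essentially the same route as the paper's proof: encode SQ-dimension as a clique/independence constraint, apply Turán's theorem, and compare with a double counting of the squared correlations. The only cosmetic difference is that you bound edges of the "large-correlation" graph $G$ while the paper bounds the complementary "small-correlation" graph and forbids cliques there directly; your relaxation $n-D+1\geq n/2$ plays the same role as the paper's sufficient condition $|\mathcal{H}|\geq d(d-1)$, and both deliver the identical constant $2^{1/3}$.
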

\begin{proof}
Suppose that ${\rm SQ-dim}(\mathcal{H}, \mathcal{D})= d-1$ for some $d\in {\mathbb N}$. Let us build a simple graph $\mathcal{G}=(\mathcal{H}, \mathcal{E})$, whose set of vertices is $\mathcal{H}$ and a set of edges is defined by $(h_1,h_2)\in \mathcal{E} \Leftrightarrow |{\mathbb E}_{X\sim \mathcal{D}}[h_1(X)h_2(X)]| \leq \frac{1}{d}$. By construction, ${\rm SQ-dim}(\mathcal{H}, \mathcal{D})= d-1$ means that $\mathcal{G}$ does not contain a clique of size $d$. In that case, Turán's theorem~\cite{ALON2016146} implies that
$$
|\mathcal{E}|\leq T(|\mathcal{H}|, d-1),
$$
where $T(|\mathcal{H}|, d-1)$ is the number of edges in the complete $d-1$-partite graph on $|\mathcal{H}|$ vertices in which the parts are as equal in size as possible. 
Therefore, we have
\begin{equation*}
\begin{split}
\sum_{h_1\ne h_2\in \mathcal{H}} {\mathbb E}_{X\sim \mathcal{D}} [h_1(X)h_2(X)]^2 \geq 
\sum_{h_1\ne h_2\in \mathcal{H}: |{\mathbb E}_{X\sim \mathcal{D}} [h_1(X)h_2(X)]|> 1/d} \frac{1}{d^2} = \frac{{|\mathcal{H}| \choose 2}-|\mathcal{E}|}{d^2} \\ \geq 
\frac{|\mathcal{H}|(|\mathcal{H}|-1)-2T(|\mathcal{H}|, d-1)}{2d^2}.
\end{split}
\end{equation*}
Using $T(|\mathcal{H}|, d-1) \leq \frac{|\mathcal{H}|^2}{2} (1-\frac{1}{d-1})$, we conclude ${\rm BB}(\mathcal{H},\mathcal{D}) \geq \frac{|\mathcal{H}|^2}{2d^2(d-1)}-\frac{|\mathcal{H}|}{2d^2}$. We have
$$
\frac{|\mathcal{H}|^2}{2d^2(d-1)}-\frac{|\mathcal{H}|}{2d^2}\geq \frac{|\mathcal{H}|^2}{2d^3}\Leftrightarrow |\mathcal{H}|\geq d(d-1).
$$ 
Thus, if $|\mathcal{H}|>d^2$, we have ${\rm BB}(\mathcal{H},\mathcal{D})\geq \frac{|\mathcal{H}|^2}{2d^3}$. The latter implies $({\rm SQ-dim}(\mathcal{H}, \mathcal{D})+1)^3=d^3\geq \frac{|\mathcal{H}|^2}{2{\rm BB}(\mathcal{H},\mathcal{D})}$. If $|\mathcal{H}|\leq d^2$, then ${\rm SQ-dim}(\mathcal{H}, \mathcal{D})+1=d\geq |\mathcal{H}|^{1/2}$.
\end{proof}
\begin{remark} Let $\psi: {\mathbb R}\to \{-1,+1\}$ be of bounded variation. Our proof of the inequality~\eqref{main-res} was based on the intermediate result ${\rm BB}(\mathcal{H}_A, U([0,1]))\Lt A\|\psi\|_{BV}^4\log^{5}(A+1)$ proved in Section~\ref{high-freq} where $U([0,1])$ is a uniform distribution on $[0,1]$ (see the inequality~\eqref{BBbound1}). The previous theorem automatically gives us $${\rm SQ-dim}(\mathcal{H}_A, U([0,1]))\Gt \|\psi\|_{BV}^{-4/3} A^{1/3}\log^{-5/3}(A+1).$$
\end{remark}
\begin{remark} Analogously, the proof of the inequality~\eqref{discrete} for the last bit of modular multiplication was based on the intermediate result ${\rm BB}(\mathring{\mathcal{H}}_p, U({\mathbb Z}^\ast_p))\Lt pr^2(\log_2 p+1-r)^2$ proved in Section~\ref{discrete} where $U({\mathbb Z}^\ast_p)$ is a uniform distribution on ${\mathbb Z}^\ast_p$ (this can be directly derived from the inequality~\eqref{BBbound2} and Lemma~\ref{lem:sum_of_squares}). The previous theorem automatically gives us $${\rm SQ-dim}(\mathring{\mathcal{H}}_p, U({\mathbb Z}^\ast_p))\Gt \frac{p^{1/3}}{r^{2/3}(\log_2 p+1-r)^{2/3}}.$$
For bits higher than the last, definition~\ref{sq-dim} is not helpful, due to the fact that classes $\{-1,+1\}$ are unbalanced in this case.
\end{remark}
Obtained lower bounds on the statistical query dimension demonstrate that learning high-frequency functions, or learning modular arithmetic, are hard tasks not only for gradient-based algorithms but also in the general SQ model.

\section{Conclusions and open problems}
Our paper is written in the framework of the Statistical Query model and recently discovered connections of the SQ model with gradient-based learning. We consider a set of target functions (hypotheses) given in the form of a superposition of a linear function (with an integer coefficient) and a periodic function $\psi$ (on ${\mathbb R}$ or ${\mathbb Z}$). We observe a phenomenon of the concentration of the gradient of a loss function (when a target function is sampled uniformly from a hypothesis set) in the case of a 1-periodic function $\psi$ on ${\mathbb R}$ and in the case of a $p$-periodic function $\psi$ on ${\mathbb Z}$ defined as $\psi(x)=[x\bmod p]_r$ or $\psi(x)=x\bmod p$. This phenomenon is verified both mathematically and experimentally.  

We obtained that the variance in both cases is asymptotically bounded by an inverse of the square root of the cardinality of a hypothesis set. Experiments show that the variance decay even faster, i.e. as an inverse of the cardinality (see Figure~\ref{fig:one_over_p}). It is an open problem to obtain a sharper bound.

Another important open problem is to study the relationship between the verified barren plateau phenomenon in training modular multiplication and certain aspect of grokking, such as the reported existence of the ``Goldilocks'' zone in the weight space of training models and the ``LU mechanism''~\cite{liu2023omnigrok}.

\section*{Declarations}
\subsection*{Funding}
This research has been funded by Nazarbayev University under
Faculty-development competitive research grants program for 2023-2025 Grant \#20122022FD4131, PI R. Takhanov.

\bibliographystyle{sn-mathphys}  


\end{document}